\documentclass[11pt,a4paper]{article}
\usepackage[utf8]{inputenc}
\usepackage[T1]{fontenc}
\usepackage{lmodern}
\usepackage[english]{babel}
\usepackage[margin=2.5cm]{geometry} 
\usepackage{graphicx} 
\usepackage{booktabs} 
\usepackage{natbib} 
\usepackage{amsmath} 
\usepackage{amssymb} 
\usepackage{amsthm}
\usepackage{setspace} 
\usepackage{caption} 
\usepackage[colorlinks=true, citecolor=blue, linkcolor=blue, urlcolor=blue]{hyperref} 
\usepackage{bbm}
\usepackage{algorithm}
\usepackage{algpseudocode}
\usepackage{multirow}
\usepackage{mathtools}

\newtheorem{theorem}{Theorem}
\newtheorem{example}{Example}

\newtheorem{lemma}{Lemma}
\newtheorem{proposition}{Proposition}
\newtheorem{remark}{Remark}
\newtheorem{corollary}{Corollary}
\newtheorem{definition}{Definition}
\newtheorem{assumption}{Assumption}

\setcitestyle{round}

\let\oldfootnote\footnote
\renewcommand{\footnote}{\fontsize{9}{11}\selectfont\oldfootnote}

\title{Beyond Data Splitting: Full-Data Conformal
Prediction by Differential Privacy}
\author{
    Young Hyun Cho\thanks{Corresponding author: cho472@purdue.edu} \\
    Purdue University \\
    \and
    Jordan Awan \\
    University of Pittsburgh \\
}
\date{} 

\begin{document}

\maketitle

\begin{abstract}
Privacy protection and uncertainty quantification are increasingly important in data-driven decision making. Conformal prediction provides finite-sample marginal coverage, but existing private approaches often rely on data splitting, reducing the effective sample size. We propose a full-data privacy-preserving conformal prediction framework that avoids splitting. Our framework leverages stability induced by differential privacy to control the gap between in-sample and out-of-sample conformal scores, and pairs this with a conservative private quantile routine designed to prevent under-coverage. We show that a generic differential privacy guarantee yields a universal coverage floor, yet cannot generally recover the nominal $1-\alpha$ level. We then provide a refined, mechanism-specific stability analysis and yields asymptotic recovery of the nominal level. Experiments demonstrate sharper prediction sets than the split-based private baseline.
\end{abstract}


\textbf{Keywords:} algorithmic stability, finite-sample guarantees, prediction sets, quantile estimation, uncertainty quantification

\section{Introduction}\label{sec:intro} As machine learning (ML) methods are increasingly deployed in high-stakes domains such as healthcare and finance, ensuring their reliability has become increasingly important \citep{bastos2024conformal,kladny2025critical, shahbazi2026adaptive}. This calls for two complementary safeguards: uncertainty quantification to assess reliability, and privacy protection for sensitive data. Conformal prediction (CP) \citep{vovk2005algorithmic} provides the former, while differential privacy (DP) \citep{dwork2006calibrating} serves as the \emph{de facto} standard for the latter. While currently underexplored, developing methods that satisfy both requirements should be a key objective for trustworthy ML systems.

Developing valid CP methods requires tackling a 
fundamental challenge rooted in the principle of \textit{exchangeability}. Consider $n$ data points for model training. Standard conformal validity relies on the assumption that the data points are exchangeable---the joint distribution remains invariant under permutation. In this framework, we compute a score (e.g., prediction residual) for each data point using the trained model. Given that a new test point is exchangeable with the existing data, its score is equally likely to occupy any rank relative to the scores of the existing data. This rank uniformity---analogous to the uniformity of $p$-values under the null hypothesis---is the key to construct the prediction sets with exact finite-sample coverage.

As illustrated in the top row of Figure~\ref{fig:stability}, an ideal ``exchangeable'' world would involve training a model $\theta_{n+1}$ on the combined set of $n$ data points and the test point. In this hypothetical scenario, the score of the test point is drawn from the exact same distribution as the training scores, ensuring perfect validity. However, in the real world depicted in the bottom row of Figure~\ref{fig:stability}, we must predict a new test point using a model $\theta_n$ trained only on the $n$ data. This creates a distributional shift: the in-sample scores tend to be systematically smaller than the out-of-sample score of the test point due to overfitting. This violation of exchangeability causes naive full-data approaches to underestimate the true uncertainty.

\begin{figure}[t]
\centering
\includegraphics[width=0.9\textwidth]{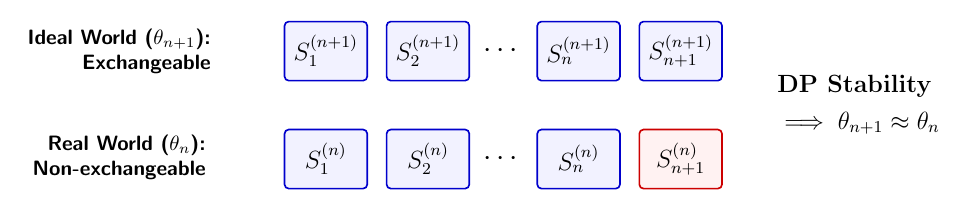}
\caption{Conceptual illustration of the distributional shift. The top row represents the ideal ``exchangeable'' world where $\theta_{n+1}$ is trained on all data points including the test point. The bottom row represents the actual ``non-exchangeable'' world where $\theta_n$ is used; the test score $S_{n+1}^{(n)}$ (red) is an out-of-sample evaluation. DP acts as a stabiliser, bounding the distance between $\theta_{n+1}$ and $\theta_n$, thereby ensuring that the red box remains distributionally close to the blue box.}

\vspace{0.5em}\noindent
{\small Alt text: A conceptual block diagram comparing score distributions in ideal exchangeable versus real non-exchangeable conformal prediction scenarios.}

\label{fig:stability}
\end{figure}

The classical solution to obtaining exchangeability in CP is with \emph{data splitting}, where a portion of the data is held out solely for calibration. While this ensures validity, it inevitably reduces effective sample size available for training. So far, this has been the prevailing strategy in private CP \citep{angelopoulos2022private, romanus2025differentially}. However, the loss of effective sample size is particularly detrimental in privacy-preserving regimes where the signal is already degraded by noise. 
In non-private settings, techniques like Leave-One-Out (LOO) can be employed to simulate the ideal exchangeable distribution. By retraining the model per each data point, LOO recovers the out-of-sample error distribution. However, DP algorithms operate by injecting noise during training. In this context, such repeated retraining creates a privacy catastrophe; training $n$ distinct private models incurs a cumulative privacy cost that renders the privacy protection by DP meaningless. 

In this work, we develop a CP framework under DP \emph{without data splitting or retraining}.  While typically viewed as a \emph{cost}, DP inherently enforces \textit{algorithmic stability}, as it limits the influence of any single data point on the trained model. This stability implies that the distance between the ideal model $\theta_{n+1}$ and the actual model $\theta_n$ is limited, and thus the gap between in-sample and out-of-sample scores can be characterised by the DP. 
We propose DP-Stabilised Conformal Prediction (DP-SCP), a framework that utilises the full data and applies a stability correction derived from the DP guarantee to achieve valid coverage and high power. 

\subsection{Our Contributions}
Our contributions are summarised as follows.

\noindent\textbf{DP as a stability tool for CP.} We re-evaluate DP as a key for algorithmic stability rather than just a privacy cost. By limiting the influence of any single data point, DP bounds the distance between the ideal and actual models (Figure~\ref{fig:stability}). We prove that this stability translates to a universal coverage lower bound for any DP guarantee. Furthermore, a refined analysis of DP-SGD shows that the nominal coverage $1-\alpha$ is asymptotically recovered, leveraging privacy properties to certify statistical validity.

\noindent\textbf{Computational efficiency without retraining.} Unlike LOO CP methods that require model repeated retraining, our framework, DP-SCP, eliminates the need for such costly retraining. This is particularly appealing for modern, large-scale ML applications where repetitive training is computationally prohibitive.

\noindent\textbf{Robust private calibration.} We design a private quantile routine with a one-sided rank guarantee that structurally prevents under-coverage. By controlling false positives during the noisy quantile search, the introduced privacy noise is absorbed as conservativeness (i.e., larger set sizes) rather than compromising coverage guarantee.

\noindent\textbf{Empirical superiority in high-privacy regimes.} We demonstrate the practical benefits of DP-SCP across diverse classification and regression tasks. By utilising the full dataset, our method produces substantially sharper prediction sets than split-based private baselines. These gains are most pronounced in high-privacy (low $\varepsilon$) regimes, where the cost of discarding training data can be significant.

\subsection{Related Studies} 
CP offers a distribution-free framework for valid prediction sets, typically replying only on exchangeability \citep{vovk2005algorithmic, angelopoulos2021gentle}. For full-data methods such as the Jackknife+ \citep{barber2021predictive}, when run at nominal level $1-\alpha$, the worst-case finite-sample guarantee is only marginal coverage at least $1-2\alpha$. Recovering the nominal $1-\alpha$ requires a suitable notion of algorithmic stability \citep{barber2021predictive,angelopoulos2024theoretical}. While stability provides a path to validity, certifying this property for general models is analytically intractable \citep{kim2020predictive}. Our work addresses this challenge in the privacy-preserving setting by identifying DP as a constructive tool for stability.

DP has a natural link to algorithmic stability, since both control the sensitivity of the learned output to small perturbations of the input data \citep{bassily2020stability}. Recent work has formalized this connection for risk estimation \citep{lei2025modern}. We extend this perspective to CP. Existing private conformal methods follow the data splitting paradigm to separate training and calibration \citep{angelopoulos2022private,romanus2025differentially}. A separate line of work addresses other privacy notions, such as label DP \citep{penso2025privacy}. To the best of our knowledge, our work is the first to use DP-induced stability to justify a full-data conformal approach.

\subsection{Paper Organization}
The paper is organized as follows. Section~\ref{sec:background} reviews background, with relegating extended background in Supplementary Sections~\ref{sec:extended_cp_intro} and \ref{sec:supp_dp_intro}. Section~\ref{sec:proposed_framework} presents our DP-SCP framework while Section~\ref{sec: privacy analysis} provides corresponding privacy accounting. Section~\ref{sec:theory} develops a series of coverage guarantees. Section~\ref{sec:numerical_study} reports experiments, with additional results in Supplementary Section~\ref{supp:extended numerical studies}. Section~\ref{sec:discussion and conclusion} presents conclusion and discussion. All proofs are deferred to the Supplementary Section~\ref{sec:supp-proofs}.

\section{Background and Motivation}\label{sec:background}

This section delivers the necessary background for our work. Additional background material is deferred to the Sections~\ref{sec:extended_cp_intro} and \ref{sec:supp_dp_intro}.

\subsection{Differential Privacy}

Differential Privacy (DP) \citep{dwork2006calibrating} is the framework that quantifies the privacy risk about releasing outputs from sensitive datasets. Intuitively, $M$ is DP if $M(D)$ is nearly indistinguishable from $M(D')$, where $D$ and $D'$ are adjacent datasets. In this work, the adjacency is determined by the addition or removal of a single entry. 

DP is naturally formalized via hypothesis testing with a tradeoff function $T(P, Q): [0, 1] \to [0, 1]$, the minimum Type II error for testing $P$ vs. $Q$ at Type I error $\alpha$: $T(P,Q)(\alpha) = \inf_{\phi} \{ 1 - \mathbb{E}_{Q}[\phi] : \mathbb{E}_{P}[\phi] \le \alpha \}$ over measurable tests $\phi$.

\begin{definition}[$f$-DP \citep{dong2022gaussian}]
A mechanism $M$ satisfies $f$-DP if for all adjacent datasets $D$ and $D'$,  $T(M(D),M(D')) \ge f,$ where $f: [0, 1] \to [0, 1]$ is convex, continuous, and non-increasing, satisfying $f(\alpha) \le 1-\alpha$ for all $\alpha \in [0,1]$.
\end{definition}


$f$-DP unifies existing DP notions. $(\epsilon, \delta)$-DP \citep{dwork2006calibrating} is equivalent to $f_{\epsilon, \delta}$-DP, where for $\epsilon \in [0,\infty)$ and $\delta \in [0,1]$, $f_{\epsilon,\delta}(\alpha) = \max\{0, 1 - \delta - e^{\epsilon}\alpha, e^{-\epsilon}(1 - \delta - \alpha)\}$. Similarly, a mechanism is $\mu$-Gaussian DP ($\mu$-GDP) if it is $G_\mu$-DP, where $G_{\mu} = T(N(0, 1), N(\mu, 1))$, a framework that has growing popularity \citep{gomez2025gaussian}.  In both cases, smaller privacy parameters imply the stronger privacy guarantee. 

For a statistic $S$, its $\ell_p$-sensitivity $\Delta_p(S)$ is the maximum change in the output by a single entry change: $\Delta_p(S) := \sup_{D,D'} \|S(D) - S(D')\|_p$. A common method to achieve DP is by adding noise scaled to this sensitivity. For example, $M(D) = S(D) + \xi$, where $\xi \sim N(0, \sigma^2I_{d})$, satisfies $\mu$-GDP if $\sigma^2 = (\Delta_2(S) / \mu)^2$.

DP has a few key properties: First, if $M$ is $f$-DP, $g \circ M$ remains $f$-DP for any data-independent $g$ (post-processing). Second, for $M_1,\ldots,M_k$ on dataset $D$, where each $M_i$ may depend on previous outputs and is $f_i$-DP conditional on them, the joint release $(M_1(D),\ldots,M_k(D))$ is $(f_1\otimes\cdots\otimes f_k)$-DP, where $\otimes$ is the tensor product of trade-off functions. Finally, for a disjoint partition $D=\cup_{i=1}^k D_i$ and $f_i$-DP mechanisms $M_i$ on $D_i$, the joint release $(M_1(D_1),\ldots,M_k(D_k))$ is $\check f$-DP, where $\check f$ is the largest convex function bounded above by $\min_{i\in[k]} f_i$ (parallel composition).

While $f$-DP has many desirable properties, other DP notions are also used in practice. In particular, our numerical studies employ \texttt{Opacus} \citep{yousefpour2021opacus}, a  library for DP stochastic gradient descent (DP-SGD). To achieve algebraically convenient privacy accounting, \texttt{Opacus} internally uses Rényi DP (RDP) \citep{mironov2017renyi}, ultimately reporting $(\varepsilon,\delta)$-DP. Thus, we use $f$-DP and GDP for theory, leaving RDP accounting to Supplementary Section~\ref{supp:privacy_accounting}.

\subsection{Conformal Prediction}

Conformal prediction (CP) \citep{vovk2005algorithmic} is a distribution-free framework yielding a predictive set $\mathcal{C}(X_{n+1})$ that guarantees marginal coverage, $\mathbb{P}(Y_{n+1} \in \mathcal{C}(X_{n+1})) \ge 1 - \alpha$. Naturally, smaller prediction sets are more informative.

CP operates via a non-conformity score $s: \mathcal{X} \times \mathcal{Y} \times \Theta \to \mathbb{R}$ quantifying the discrepancy between a datum $(X,Y)$ and a model $\theta$ (e.g., $s(X, Y; \theta) = 1 - \hat{p}_Y(X; \theta)$ in classification). The set is constructed as $\mathcal{C}(X_{n+1}) = \{Y \in \mathcal{Y} : s(X_{n+1}, Y; \theta) \le \hat{q}\}$, where the threshold $\hat{q}$ ensures exact coverage provided the scores are exchangeable.

\begin{definition}[Exchangeability]
A sequence of random variables $Z_1, \dots, Z_n$ is exchangeable if their joint distribution is invariant to any permutation of indices. That is, for any permutation $\pi$ of $[1,n]$,
$(Z_1, \dots, Z_n) \stackrel{d}{=} (Z_{\pi(1)}, \dots, Z_{\pi(n)}).$
\end{definition}

If $\{S_i\}_{i=1}^{n+1}$, where $S_{n+1}$ corresponds to the test point is exchangeable, the rank of $S_{n+1}$ among the $S_{i}$'s is uniformly distributed. Therefore, the following is the discrete analogue to the $p$-value that is uniformly distributed under the null hypothesis: 

\begin{proposition}[\citep{vovk2005algorithmic}]
Let $\{S_i\}_{i=1}^{k}$ be exchangeable scores, and let $\hat{q}$ be the $\lceil(1-\alpha)k\rceil$-th order statistics. Then, for any $S_i$, we have $\mathbb{P}(S_i \le \hat{q}) \ge 1 - \alpha.$
\end{proposition}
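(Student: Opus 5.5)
The plan is to reduce the statement to the uniformity of the rank of $S_i$ among $S_1,\dots,S_k$, and then count ranks. Write $m=\lceil(1-\alpha)k\rceil$ and let $S_{(1)}\le\cdots\le S_{(k)}$ be the order statistics, so that $\hat q=S_{(m)}$. We may assume $\alpha\in(0,1)$; otherwise the claim is trivial or vacuous. Then $1\le m\le k$ and $\hat q$ is well defined.

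\textbf{Reduction to a rank event.} If $S_i>\hat q=S_{(m)}$, then at least $k-m+1$ of the scores are $\ge S_i$, namely $S_i$ itself and all $S_{(j)}$ with $j>m$ that are $\ge S_i$. Equivalently, $\{S_i>\hat q\}$ holds exactly when fewer than $m$ of the scores are $\ge S_i$... It is cleaner to argue directly. Define the strict rank $R_i=\#\{j: S_j< S_i\}+1$. Then $S_i\le S_{(m)}$ holds if and only if $\#\{j:S_j<S_i\}\le m-1$, that is, $R_i\le m$. The direction that matters is the implication $R_i\le m\Rightarrow S_i\le S_{(m)}$. To see it, note that if at most $m-1$ scores are strictly below $S_i$, then the $m$-th smallest score cannot lie strictly below $S_i$.

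\textbf{Symmetry step.} By exchangeability, the vector $(S_1,\dots,S_k)$ has the same law as any permutation of itself. Hence $\mathbb{P}(R_i\le m)$ does not depend on $i$, so
\[
\mathbb{P}(S_i\le\hat q)=\frac1k\,\mathbb{E}\Bigl[\#\{i: S_i\le S_{(m)}\}\Bigr].
\]
Deterministically, at least $m$ indices satisfy $S_i\le S_{(m)}$, namely those attaining the $m$ smallest values. Ties only enlarge this set. Therefore
\[
\mathbb{P}(S_i\le\hat q)\ge \frac mk=\frac{\lceil(1-\alpha)k\rceil}{k}\ge 1-\alpha .
\]

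\textbf{Main obstacle.} The only delicate point is ties. If ties are present, the rank of $S_i$ is not exactly uniform, and an argument based on exact uniformity would break. The averaging argument above avoids this difficulty because it needs only the deterministic count $\#\{i:S_i\le S_{(m)}\}\ge m$ together with permutation invariance. No continuity assumption or random tie-breaking is required, and the resulting inequality holds for every index $i$.
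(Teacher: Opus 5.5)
Your argument is correct, but it takes a different route from the one the paper indicates. The paper does not prove this cited fact. It justifies it by saying that exchangeability makes the rank of the test score uniform. When it reuses the fact in the proof of Theorem~\ref{thm: coverage}, it invokes Assumption~\ref{ass:noties} precisely to obtain that uniformity.

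You instead use permutation invariance only to show that $\mathbb{P}(S_i\le S_{(m)})$ is the same for every $i$, since $S_{(m)}$ is a symmetric function of the scores. You then average the deterministic bound $\#\{j: S_j\le S_{(m)}\}\ge m$. This gives the inequality with no no-ties or tie-breaking assumption, because ties only enlarge the count. It makes rigorous the usual informal remark that ties can only help.

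Rank uniformity says more when there are no ties: it gives the full law of the rank, e.g.\ the point mass $\mathbb{P}(S_{n+1}^{(n+1)}=q_*)=1/(n+1)$ used in Step~3 of that proof. Your averaging trick recovers this as well, since with no ties $\sum_j \mathbbm{1}\{S_j=S_{(m)}\}=1$ almost surely.

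Two presentation fixes. First, the abandoned opening sentences of your reduction paragraph are false as written: if $S_i>S_{(m)}$, then at most $k-m$ scores are $\ge S_i$. The detour through $R_i$ is not needed for the final argument, so delete it. Second, avoid reusing the fixed index $i$ as the bound variable in $\#\{i:S_i\le S_{(m)}\}$.
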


Throughout the paper, we assume that the data points $(X_1,Y_1),\dots,(X_{n+1},Y_{n+1})$ are i.i.d.\ and that the training mechanism is permutation invariant. Under these two conditions, the ideal scores $S_i^{(n+1)}=s\bigl((X_i,Y_i);\theta_{n+1}\bigr), \text{ for } i=1,\dots,n+1,$ obtained from an oracle model $\theta_{n+1}$ trained on $D_{n+1}=D_n\cup\{(X_{n+1},Y_{n+1})\}$, are exchangeable. In practice, however, the true test label $Y_{n+1}$ is unavailable, so one instead trains $\theta_n$ on $D_n$, which breaks exchangeability because the first $n$ scores are in-sample while the test score is out-of-sample. This mismatch typically inflates the test score and leads to under-coverage. Further discussion of permutation invariance, including a counterexample when it fails, is deferred to Supplementary Section~\ref{sec:assumption_validation}. To restore exchangeability, two primary paradigms exist:

{
\setlength{\leftmargini}{1.5em} \begin{enumerate}
\item \textbf{Data splitting (Split-CP).} Splitting $D_n$ into disjoint training and calibration sets ensures all evaluated scores are out-of-sample \citep{vovk2005algorithmic}. However, withholding calibration data significantly degrades sample efficiency.
\item \textbf{Retraining (Full-CP).} Retraining on $D_n \cup {(X_{n+1}, y)}$ for each candidate $y$ \citep{vovk2005algorithmic,angelopoulos2021gentle} or using leave-one-out residuals \citep{barber2021predictive} maximizes data efficiency. Yet, repeated retraining is computationally prohibitive for large models and, in DP, incurs cumulative privacy loss under composition.
\end{enumerate}
}

\subsection{Why Full-Data Use Matters under Privacy}\label{sec: full data motive}

While CP coverage is agnostic to model accuracy, set efficiency depends on it \citep{angelopoulos2021gentle}. Accurate predictors yield smaller sets—analogous to how statistical power dictates informativeness under fixed validity.

The following     ``back-of-the-envelope'' calculation illustrates why this becomes crucial under privacy. Following the canonical error rate for private estimation \citep{bassily2019private, gopi2022private}, the excess risk typically scales with the sum of a statistical term, proportional to $1/\sqrt{n}$, and a privacy term which is proportional to $1/(n\epsilon)$. Consider two distinct regimes under a privacy parameter $\epsilon$:
{
\setlength{\leftmargini}{1.5em} \begin{enumerate}
    \item Split-Data Regime: Halving the sample size ($n/2$) while retaining the full budget ($\epsilon$) yields an error proportional to $\frac{\sqrt{2}c_{1}}{\sqrt{n}} + \frac{2c_{2}}{n\epsilon}$;
    \item Full-Data Regime: Using the full sample ($n$) but splitting the budget (e.g., $\epsilon/\sqrt{2}$ via GDP) yields an error proportional to $\frac{c_1}{\sqrt{n}} + \frac{\sqrt{2}c_{2}}{n\epsilon}$,
\end{enumerate}
}
\noindent
for some universal constants $c_{1}$ and $c_{2}$. 

We see that both terms enjoy smaller constants in the full-data regime. This suggests that even when two methods attain the same coverage, the full-data regime would yield smaller prediction sets.

\section{Proposed Framework}
\label{sec:proposed_framework}

In this section, we present the implementation details of DP-SCP.

\subsection{Overall Procedure}

The core innovation of DP-SCP is the use of the entire dataset $D_n$ for both model training and score calibration. We focus on the stability induced by DP training which ensures the distribution of in-sample scores remains close to that of the out-of-sample scores. This allows us to circumvent the inefficiency of data splitting without incurring the massive computational cost of retraining.

As illustrated in Algorithm~\ref{alg: dp-scp}, DP-SCP proceeds in two stages. The first stage focuses on protecting the model $\theta_n$. While our framework is compatible with a broad class of DP training procedures, we focus particularly on DP-SGD \citep{abadi2016deep}. DP-SGD trains the model by applying stochastic gradient updates while injecting Gaussian noise into the gradient update at each iteration, thereby limiting the contribution of any single data point to the final output. Beyond being the \emph{de facto} scalable workhorse for large-scale training, DP-SGD also aligns with our goal of refining the stability analysis needed for full-data reuse. The refined stability analysis and the resulting instantiation for DP-SGD are developed in Section~\ref{subsec:dpsgd_instantiation}.

\begin{algorithm}[t]
\caption{DP-Stabilised Conformal Prediction (DP-SCP)}
\label{alg: dp-scp}
\begin{algorithmic}[1]
\State \textbf{Input:} Dataset $D_n = \{(X_i, Y_i)\}_{i=1}^n$, test point $X_{n+1}$, target miscoverage $\alpha$
\State $\theta_{n} \gets M_{\text{train}}(D_{n})$ \Comment{Run a DP training algorithm (e.g., DP-SGD)}
\State $\mathcal{S} \gets \{s(X_{i}, Y_{i}; \theta_{n})\}_{i=1}^n$ 
\State $\hat{q} \gets M_{Q}(\mathcal{S})$ \Comment{Apply Algorithm~\ref{alg:dp_binary_search}}
\State $\mathcal{C}(X_{n+1}) \gets \{y \in \mathcal{Y} : s(X_{n+1}, y; \theta_{n}) \leq \hat{q}\}$
\State \textbf{Output:} $\mathcal{C}(X_{n+1})$
\end{algorithmic}
\end{algorithm}

Following private training, quantile estimation requires a second DP mechanism. Because computing the scores $\mathcal{S}=\{S_i\}_{i=1}^n$ re-accesses the sensitive data $(X_i,Y_i)$—for instance, via regression residuals $S_i = |Y_i-\hat f_{\theta_n}(X_i)|$—$\mathcal{S}$ is not a mere post-processing of the private model $\theta_n$. Releasing a threshold $\hat{q}$ directly from $\mathcal{S}$ would thus violate privacy. We therefore apply a separate DP mechanism $M_Q$ to $\mathcal{S}$, ensuring the sequential pipeline remains private. While our framework accommodates generic quantile routines, our theoretical analysis specifically focuses on Algorithm~\ref{alg:dp_binary_search}.

We highlight that Algorithm~\ref{alg: dp-scp} circumvents the computational cost of traditional full-data methods by requiring only a single training run. We show our framework achieves the statistical data efficiency of Full-CP with a computational footprint similar to Split-CP, ensuring it is feasible for large-scale applications.

\subsection{Conservative Differentially Private Quantile Estimation}
\label{subsec:dp_search}
This subsection details DP-SCP's second stage, DP quantile estimation, designed to strictly bound the error conservatively. Since underestimating the threshold induces under-coverage, our approach strictly prevents such underestimation.

In the ideal model with exchangeability (e.g., via $\theta_{n+1}$), the target threshold for $\{S_1, \dots, S_n\}$ is the $r$-th order statistic, where $r = \lceil(1-\alpha)(n+1)\rceil$. Equivalently, we seek the minimal $t$ such that the monotone empirical count $C_n(t) := \sum_{i=1}^n \mathbbm{1}(S_i \le t) \ge r$. To solve this privately, we adapt the binary search by replacing exact counts $C_n(t)$ with noise-injected queries \citep{huang2021instance,romanus2025differentially}.

In privacy-preserving setting, accessing the exact $C_n(t)$ is prohibited. Instead, we interact via a noisy count $\tilde{C}_n(t) = C_n(t) + Z$, typically using the Gaussian noise $Z \sim \mathcal{N}(0, \sigma^2)$. On the other hand, our full-data strategy of substituting $\theta_{n+1}$ with $\theta_n$ introduces a discrepancy between the ideal and actual scores. Consequently, ensuring valid coverage requires simultaneously addressing two distinct sources of error:

{
\setlength{\leftmargini}{1.5em} \begin{enumerate}
    \item \textbf{Privacy Noise:} A large positive noise may falsely indicate that the counting condition is met ($\tilde{C}_n(t) \ge r$), causing the search to terminate at a threshold lower than the true quantile.
    \item \textbf{Model Shift:} Using $\theta_n$ in place of  $\theta_{n+1}$ may perturb individual scores. These fluctuations can disrupt the order statistics, potentially causing the quantile derived from $\theta_n$ to underestimate the target level.
\end{enumerate}
}

To address the privacy noise and model shift, we propose the \textit{Buffered DP Right-Endpoint Binary Search}. Central to this method is a composite threshold $r'$ designed to provide a conservative lower bound on the true quantile:
\begin{equation*}
    \label{eq:threshold_decomp}
    r' := r + \underbrace{m_n}_{\text{Stability Buffer}} + \underbrace{\tau}_{\text{Noise Correction}}.
\end{equation*}
The stability buffer $m_n$ bounds the score perturbations caused by substituting $\theta_n$ in place of $\theta_{n+1}$, specifically upper-bounding the number of \emph{down-crossing} scores —data points that shift from outliers to inliers due to the model change. While an oversized $m_n$ ensures validity at the expense of informativeness, Section~\ref{subsec:dpsgd_instantiation} demonstrates that DP-SGD stability yields $m_n = o(n)$. Because the target rank $r$ is $\Theta(n)$, the cost of this conservativeness vanishes asymptotically, preserving utility for large samples.

The term $\tau$ controls false positives by Gaussian noise in the adaptive count queries. For $N$ binary-searches and a target failure probability $\beta \in (0,1)$, setting $\tau=\sigma\Phi^{-1}(1-\beta/N)-1$ guarantees that $\tilde{C}_n(t)\ge r'$ implies $C_n(t)\ge r+m_n$ uniformly across all $N$ steps with probability at least $1-\beta$. By taking $\beta=\beta_n \to 0$, this correction becomes asymptotically negligible at the rank scale, since $\tau=o(n)$ under standard privacy accounting.

\begin{algorithm}[t]
\caption{Buffered DP Right-Endpoint Binary Search}
\label{alg:dp_binary_search}
\begin{algorithmic}[1]
\State \textbf{Input:} Scores $\mathcal{S}$, range $[a,b]$, target miscoverage $\alpha$, buffer $m_n$, steps $N$, noise $\sigma$, $\beta \in (0,1)$
\State $r \gets \lceil(1-\alpha)(n+1)\rceil$ 
\State $\tau \gets \sigma\Phi^{-1}(1-\beta/N)-1$ 
\State $r' \gets r + m_n + \tau$ 
\State $\texttt{left} \gets a$;\quad $\texttt{right} \gets b$
\For{$k=1$ \textbf{to} $N$}
    \State $\texttt{mid} \gets (\texttt{left}+\texttt{right})/2$
    \State Query $\tilde{C}_k \gets C_n(\texttt{mid}) + Z_k$ where $Z_k \sim \mathcal{N}(0, \sigma^2)$
    \If{$\tilde{C}_k \ge r'$}
        \State $\texttt{right} \gets \texttt{mid}$ 
    \Else
        \State $\texttt{left} \gets \texttt{mid}$ 
    \EndIf
\EndFor
\State \textbf{Output:} $\hat{q} \gets \text{right}$ 
\end{algorithmic}
\end{algorithm}

Algorithm~\ref{alg:dp_binary_search} details the procedure, which iteratively refines a search interval $[a, b]$ via noisy binary search. It contracts the upper bound (\texttt{right}) to the midpoint \textit{only} when the noisy count $\tilde{C}_k$ strictly exceeds the inflated threshold $r'$; in all other cases, the lower bound is raised. This asymmetric design ensures that, given the noise correction holds, the right endpoint is a high-probability upper approximation of the target quantile throughout the narrowing of the search space.

Finally, the value $\sigma$ of is determined by the desired DP definition. We provide the accounting details in Section~\ref{sec: privacy analysis} and \ref{supp:privacy_accounting}.

\begin{lemma}[One-sided conservativeness of Algorithm~\ref{alg:dp_binary_search}]\label{lem:one_sided_qhat}
Let $\hat q$ be the output of Algorithm~\ref{alg:dp_binary_search}. Then, with probability at least $1-\beta$, it holds that $\hat q \ge S_{(r+m_n)}$.
\end{lemma}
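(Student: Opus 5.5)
We will prove the statement by placing all the randomness on a single good event and then arguing deterministically along the binary search path. Let $E$ be the event that $Z_k \le \sigma\Phi^{-1}(1-\beta/N)$ for every $k=1,\dots,N$.

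\textbf{Probability of the good event.} Each $Z_k$ is drawn as a fresh $\mathcal{N}(0,\sigma^2)$ variable, independent of the data and of the earlier noise. The queried point $\texttt{mid}$ at step $k$ depends on $Z_1,\dots,Z_{k-1}$, but this does not matter: the event concerns only the noise values, not the counts. For each $k$, we have $\mathbb{P}(Z_k > \sigma\Phi^{-1}(1-\beta/N)) = \beta/N$. A union bound over the $N$ steps then gives $\mathbb{P}(E)\ge 1-\beta$. This is the only probabilistic step. Independence of the $Z_k$ is not even needed, only their marginal laws, so the adaptivity of the queries causes no difficulty.

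\textbf{Deterministic argument on $E$.} Whenever the algorithm moves $\texttt{right}\gets\texttt{mid}$ at step $k$, we have $\tilde C_k\ge r'$. Hence
\[
C_n(\texttt{mid}) \ge r' - Z_k \ge r+m_n+\sigma\Phi^{-1}(1-\beta/N)-1-Z_k \ge r+m_n-1 .
\]
Because $C_n$ is integer-valued, the slack of one in $\tau$ has to be handled carefully. The cleanest reading is that $\tilde C_k \ge r'$ together with $Z_k \le \sigma\Phi^{-1}(1-\beta/N)$ gives $C_n(\texttt{mid}) > r+m_n-1$, which yields $C_n(\texttt{mid})\ge r+m_n$ as the paper asserts. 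If the inequality is not strict, the bound is $C_n(\texttt{mid})\ge r+m_n-1$, and the conclusion shifts by one rank. We will write the argument so that $C_n(\texttt{mid}) \ge r+m_n$ holds at every accepted step, matching the paper's stated claim about $\tau$. This integer-rounding bookkeeping is the only delicate point.

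The count $C_n(t)=\#\{i: S_i\le t\}$ is at least $j$ if and only if $t\ge S_{(j)}$. Therefore every value assigned to $\texttt{right}$ through an update satisfies $\texttt{right}\ge S_{(r+m_n)}$.

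\textbf{Conclusion.} The output $\hat q$ is the final value of $\texttt{right}$. There are two cases. Either $\texttt{right}$ was updated at least once, in which case its last assignment satisfies the bound above. Or it was never updated, so $\hat q=b$, and $b\ge S_{(r+m_n)}$ holds provided the range $[a,b]$ contains the scores. This is the standing assumption that scores lie in $[a,b]$, for example scores bounded in $[0,1]$ in classification, or clipped. Combining the two cases with $\mathbb{P}(E)\ge 1-\beta$ gives $\hat q\ge S_{(r+m_n)}$ with probability at least $1-\beta$. If $r+m_n>n$, we adopt the convention $S_{(j)}=+\infty$ (or $b$), and the claim is vacuous or consistent with $\hat q=b$.
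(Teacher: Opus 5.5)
Your argument follows the paper's proof: the same per-query noise threshold $c:=\sigma\Phi^{-1}(1-\beta/N)$, a union bound over the $N$ queries, and the invariant that \texttt{right} always has empirical count at least $r+m_n$. You define the good event through the noise values alone, whereas the paper uses implication events $E_k=\{\tilde C_k\ge r'\Rightarrow C_n(t_k)\ge r+m_n\}$ and conditions on the search history. Yours is a mild simplification that sidesteps the adaptivity of $t_k$, as you note.

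The one step you leave unresolved is the integer rounding. Your ``cleanest reading'' does not follow from the event as you defined it: $C_n(\texttt{mid})\ge r'-Z_k$ together with $Z_k\le c$ gives only $C_n(\texttt{mid})\ge r'-c=r+m_n-1$.

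The fix costs nothing: take $E=\{Z_k<c\ \text{for all }k\}$. Because $Z_k$ is Gaussian, $\mathbb P(Z_k\ge c)=\beta/N$, so still $\mathbb P(E)\ge 1-\beta$. On $E$, every accepted step gives $C_n(\texttt{mid})\ge r'-Z_k>r'-c=r+m_n-1$, hence $C_n(\texttt{mid})\ge r+m_n$ by integrality. So there is no shift by one rank. The $-1$ in $\tau$ is exactly this integrality slack, which is how the paper uses it: a false acceptance at step $k$ forces $Z_k\ge r'-(r+m_n-1)=c$. This strict event is then contained in the paper's good event $G$.

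A smaller point concerns the edge case $r+m_n>n$. There the convention $S_{(j)}=+\infty$ would make the claim false, not vacuous. The paper simply assumes $r+m_n\le n$ (and $b\ge\max_i S_i$). The convention $S_{(j)}=b$ does work, because on $E$ no update of \texttt{right} can occur when $r+m_n>n$.
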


Lemma~\ref{lem:one_sided_qhat} formalizes the conservativeness of Algorithm \ref{alg:dp_binary_search}. The conclusion $\hat q \ge S_{(r+m_n)}$ means that, with probability at least $1-\beta$, the returned threshold is no smaller than the empirical $(r+m_n)^{th}$ quantile, so under-estimation is ruled out. When $\hat q$ is used in CP, a larger threshold produces more conservative sets, which shifts the tradeoff toward larger sets, rather than failing the nominal coverage level.

Lemma \ref{lem:one_sided_qhat} is motivated by the noisy-binary-search line of work on DP quantile estimation. \citet{huang2021instance} introduced a similar noisy binary search procedure for private quantile estimation. \citet{chen2026near} subsequently identified an error in their rank-error guarantee and provided a corrected analysis under additional distributional assumptions, such as being sub-exponential. Our contribution differs in that we avoid such distributional structure and instead tailor the analysis to CP, directly controlling the one-sided error direction that matters for conservative coverage control.

\section{Privacy Analysis}\label{sec: privacy analysis}
We adopt GDP as our main theoretical DP guarantee because it provides a principled and versatile framework for privacy analysis. In particular, GDP admits tight composition and can be converted to other standard privacy notions, including $(\varepsilon,\delta)$-DP and Rényi DP; see Section~\ref{sec:supp_dp_intro} for details.

We begin by analyzing Algorithm~\ref{alg:dp_binary_search}. Each of its $N$ iterations issues a noisy count query, so the overall privacy accounting follows from sequential composition. A single record can change each underlying count by at most one, and hence each query has $\ell_2$-sensitivity $\Delta_2=1$. This yields the following guarantee.

\begin{lemma}[Privacy of Buffered Binary Search]\label{lemma:search_privacy} For a target budget $\mu_{\text{calib}} > 0$, setting the noise scale $\sigma = \sqrt{N}/\mu_{\text{calib}}$ ensures Algorithm~\ref{alg:dp_binary_search} satisfies $\mu_{\text{calib}}$-GDP.
\end{lemma}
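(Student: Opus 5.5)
The argument treats Algorithm~\ref{alg:dp_binary_search} as an $N$-fold adaptive composition of Gaussian mechanisms and then invokes the composition property of $f$-DP from Section~\ref{sec:background}. The released transcript is the sequence of noisy counts $(\tilde C_1,\dots,\tilde C_N)$. The midpoints $\texttt{mid}$ and the output $\hat q=\texttt{right}$ are deterministic functions of the public inputs $[a,b]$ and $r'$ together with the previous noisy counts. So $\hat q$ is a post-processing of the transcript, and it suffices to show that the transcript is $\mu_{\text{calib}}$-GDP. The quantities $r$, $\tau$, $m_n$, $N$, $\sigma$ and $\beta$ are data-independent (they depend only on $n$, $\alpha$ and the tuning parameters), so the comparison of $r'$ with the noisy count does not touch the data beyond $\tilde C_k$.

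\textbf{Step 1 (single query).} Condition on the previous outputs $\tilde C_1,\dots,\tilde C_{k-1}$. Then $\texttt{mid}$ is a fixed number, and the $k$-th mechanism is $D\mapsto C_n(\texttt{mid};D)+Z_k$ with $Z_k\sim\mathcal N(0,\sigma^2)$ independent of everything else. Under add/remove adjacency, adding or removing one score changes $C_n(t)=\sum_i \mathbbm{1}(S_i\le t)$ by at most $1$, so $\Delta_2=1$. By the Gaussian mechanism fact stated in Section~\ref{sec:background}, each query is $(1/\sigma)$-GDP conditional on the past. Concretely, $T(\mathcal N(c,\sigma^2),\mathcal N(c',\sigma^2))=G_{|c-c'|/\sigma}\ge G_{1/\sigma}$.

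\textbf{Step 2 (composition).} By adaptive composition, the transcript is $G_{1/\sigma}^{\otimes N}$-DP. Using the GDP identity $G_{\mu_1}\otimes G_{\mu_2}=G_{\sqrt{\mu_1^2+\mu_2^2}}$ from \citet{dong2022gaussian}, this equals $G_{\sqrt N/\sigma}$. Substituting $\sigma=\sqrt N/\mu_{\text{calib}}$ gives exactly $G_{\mu_{\text{calib}}}$. Post-processing then transfers the guarantee to $\hat q$.

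\textbf{Main obstacle.} The substantive point is the sensitivity step. Here the scores $S_i=s(X_i,Y_i;\theta_n)$ are computed with a model $\theta_n$ that itself depends on the data. For Lemma~\ref{lemma:search_privacy} in isolation, I would treat $\theta_n$ as fixed, i.e.\ condition on the released private model, as in the sequential pipeline of Algorithm~\ref{alg: dp-scp}. Then each record contributes exactly one score, and the count sensitivity is $1$. The training cost is accounted for separately by composing with the training mechanism.

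I would also note one dependence that must not leak. The noise correction $\tau$ uses $\sigma$, which is data-independent, so the threshold $r'$ is public.
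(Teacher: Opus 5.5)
Your proposal is correct and follows the same route as the paper's proof. Both treat Algorithm~\ref{alg:dp_binary_search} as an $N$-fold adaptive composition of Gaussian count queries with $\Delta_2=1$, each $(1/\sigma)$-GDP, compose to $\sqrt{N}/\sigma$, and solve for $\sigma=\sqrt{N}/\mu_{\text{calib}}$. Your extra remarks on post-processing to $\hat q$, conditioning on the released $\theta_n$, and the publicity of $r'$ only make explicit what the paper leaves unstated; note that, like the paper, you tacitly treat $n$ (hence $r'$) as public even though adjacency is add/remove.
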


The overall privacy guarantee follows immediately from the composition theorem:

\begin{theorem}[Overall Privacy Guarantee]\label{thm:overall_privacy}Suppose $M_{\text{train}}$ be $\mu_{\text{train}}$-GDP and $M_{Q}$ be $\mu_{\text{calib}}$-GDP. Then DP-SCP satisfies $\mu_{\text{total}}$-GDP, where $\mu_{\text{total}} = \sqrt{\mu_{\text{train}}^2 + \mu_{\text{calib}}^2}.$
\end{theorem}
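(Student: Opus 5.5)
We obtain Theorem~\ref{thm:overall_privacy} from the adaptive sequential composition property of $f$-DP recalled in the background, combined with the closed form for tensor products of Gaussian trade-off functions. We view DP-SCP as the sequential release of two mechanisms on the same dataset $D_n$. The first is $\theta_n = M_{\text{train}}(D_n)$. The second is $\hat q = M_Q(\mathcal{S})$, where $\mathcal{S}=\{s(X_i,Y_i;\theta_n)\}_{i=1}^n$ depends on both $D_n$ and the already released $\theta_n$. We therefore write the second stage as a mechanism $M_2(D_n;\theta_n) := M_Q(\{s(X_i,Y_i;\theta_n)\}_{i=1}^n)$ indexed by the first output. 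The final prediction set $\mathcal{C}(X_{n+1})$ is a data-independent function of $(\theta_n,\hat q)$ and the (non-sensitive) test covariate, so by post-processing it suffices to show that the pair $(\theta_n,\hat q)$ is $\mu_{\text{total}}$-GDP.

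The key step is to check that, for every fixed $\theta$, the map $D_n \mapsto M_2(D_n;\theta)$ is $\mu_{\text{calib}}$-GDP with respect to add/remove adjacency. For a fixed $\theta$, adding or removing one record $(X_j,Y_j)$ adds or removes exactly one score $s(X_j,Y_j;\theta)$ and leaves all other scores unchanged. Consequently every count $C_n(t)=\sum_i \mathbbm{1}(S_i\le t)$ queried by Algorithm~\ref{alg:dp_binary_search} changes by at most one, so the sensitivity-one argument behind Lemma~\ref{lemma:search_privacy} applies verbatim. This holds provided $n$ enters the algorithm only through public quantities. If $r=\lceil(1-\alpha)(n+1)\rceil$ is computed from the private $n$, I would treat $n$ as public, as is implicit in Lemma~\ref{lemma:search_privacy}. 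Hence $M_2(\cdot;\theta)$ is $G_{\mu_{\text{calib}}}$-DP conditionally on $\theta_n=\theta$, which is exactly the hypothesis of the adaptive composition theorem.

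Applying adaptive composition, the joint release is $G_{\mu_{\text{train}}}\otimes G_{\mu_{\text{calib}}}$-DP. By the standard identity $G_{\mu_1}\otimes G_{\mu_2}=G_{\sqrt{\mu_1^2+\mu_2^2}}$ of \citet{dong2022gaussian}, this equals $G_{\mu_{\text{total}}}$. The identity follows because testing $N(0,I_2)$ against $N((\mu_1,\mu_2),I_2)$ reduces, by rotation invariance, to a one-dimensional test with mean shift $\|(\mu_1,\mu_2)\|_2$. Post-processing then transfers the guarantee to $\mathcal{C}(X_{n+1})$.

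The only delicate point is the conditional-privacy step. Since $\mathcal{S}$ is not a post-processing of $\theta_n$ alone, we must verify that the sensitivity-one property holds uniformly over all values of $\theta$, including arbitrary score functions and ties. This is immediate once $\theta$ is fixed, because the indicator count is monotone in the multiset of scores and changes by at most one under add/remove adjacency.
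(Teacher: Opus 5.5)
Your proposal is correct and follows the same route as the paper, which justifies the theorem in one line by adaptive sequential composition of the two GDP stages together with the identity $G_{\mu_1}\otimes G_{\mu_2}=G_{\sqrt{\mu_1^2+\mu_2^2}}$. Your explicit check that the calibration stage is $\mu_{\text{calib}}$-GDP conditionally on every fixed $\theta$, and your caveat that $n$ (and hence $r'$) must be treated as public, as Lemma~\ref{lemma:search_privacy} implicitly does, make precise what the paper leaves unstated.
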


Note that if the mechanisms satisfy a different DP notion, the same type of composition result follows by composing the training and calibration guarantees in that framework.

\section{Coverage Analysis}\label{sec:theory}

This section contains the core theoretical contributions of this paper. We first identify a universal coverage floor that follows from DP alone, then show that this guarantee is sharp and cannot in general recover the nominal level $1-\alpha$. Then, we prove that such recovery can be possible with an  exploitation of mechanism-specific stability. These results isolate the exact role of DP in full-data conformal prediction, separating what is available from black-box privacy guarantees from what must come from a more refined analysis of the training mechanism.

\subsection{A Universal Coverage Guarantee from DP and Its Limitation}

Since DP controls output changes under adding or deleting a single datapoint, it is tempting to use DP as a generic stability tool for full-data conformal prediction. The next result characterizes exactly what such a black-box DP argument can guarantee, and just as importantly, its limitation without additional structure.

\begin{theorem}\label{thm:dp_gap_two_sides}
Fix a tradeoff function $f$, $\alpha\in(0,1)$, and $\gamma>0$.
Let $D_{n+1}=(X_i,Y_i)_{i=1}^{n+1}$ be a dataset of  i.i.d. data points from distribution $P$,  and denote $D_n=(X_i,Y_i)_{i=1}^n$. 

{
\setlength{\leftmargini}{1.5em} \begin{enumerate}
\item\textbf{(Universal Lower bound).}
For any $f$-DP mechanism $M$, any distribution $P$, and  any prediction-set map $C(\cdot,\cdot)$, if $\mathbb{P}\!\left(Y_{n+1}\in C(M(D_{n+1}),X_{n+1})\right)\ge 1-\alpha,$
then
\[
\mathbb{P}\!\left(Y_{n+1}\in C(M(D_n),X_{n+1})\right)\ge f(\alpha).
\]

\item\textbf{(An Upper Bound).}
 For every $n\in\mathbb{N}$ there exist a distribution $P$ on $(X,Y)$, an $f$-DP mechanism $M$, and a prediction-set map $C(\cdot,\cdot)$ such that
\begin{gather*}
\mathbb{P}\!\left(Y_{n+1}\in C(M(D_{n+1}),X_{n+1})\right)\ge 1-\alpha, \\
\mathbb{P}\!\left(Y_{n+1}\in C(M(D_n),X_{n+1})\right)\le f(\alpha)+\gamma.
\end{gather*}
\end{enumerate}
}
\end{theorem}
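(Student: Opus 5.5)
Both parts reduce to the hypothesis-testing definition of $f$-DP. For a fixed realization of all data points, we view the coverage event as a test between the outputs $M(D_{n+1})$ and $M(D_n)$, which come from adjacent datasets.

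\textbf{Part 1 (lower bound).} Condition on $D_{n+1}=d$ and consider the test $\phi_d(\theta)=\mathbbm{1}\{y_{n+1}\notin C(\theta,x_{n+1})\}$. It rejects when the test label is not covered. Write $a(d)=\mathbb{E}[\phi_d(M(d))]$ for the conditional miscoverage under the oracle model. Since $d$ and $d_{-(n+1)}$ are adjacent (removal of one entry), $f$-DP gives
\[
1-\mathbb{E}[\phi_d(M(d_{-(n+1)}))]\ge T\bigl(M(d),M(d_{-(n+1)})\bigr)(a(d))\ge f(a(d)).
\]
Hence the conditional coverage under $M(D_n)$ is at least $f(a(D_{n+1}))$. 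Integrating over $D_{n+1}$ and using Jensen (convexity of $f$) gives
\[
\mathbb{P}(\text{cover})\ge \mathbb{E}\, f(a(D_{n+1}))\ge f\bigl(\mathbb{E}\,a(D_{n+1})\bigr).
\]
The hypothesis says $\mathbb{E}\,a\le\alpha$, and $f$ is non-increasing, so the right-hand side is at least $f(\alpha)$. The only care needed is measurability of the conditional test and allowing internal randomness of $M$; this is routine.

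\textbf{Part 2 (matching construction).} The idea is to make the mechanism behave like the extremal pair for $f$. By the characterization of trade-off functions (Dong et al.), there exist distributions $P_0,P_1$ on some space with $T(P_0,P_1)=f$; for symmetric $f$ one can use a canonical pair on $[0,1]$. There is also a test $\phi^\ast$ with type I error $\alpha$ and type II error $f(\alpha)$, possibly after randomization.

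We want $M(D_{n+1})\sim P_0$ and $M(D_n)\sim P_1$ on the event that the test point is ``special'', while preserving $f$-DP for all adjacent pairs. A simple route: take $P$ such that $(X,Y)$ has a label that is rarely present. For example, $Y=1$ with small probability $p$ and $Y=0$ otherwise, with $X$ constant. Define $M(D)$ to depend only on the count $k$ of records with $Y=1$: output a draw from $P_{\min(k,1)}$ reversed, so $k\ge1$ gives $P_0$ and $k=0$ gives $P_1$. Adjacent datasets change $k$ by at most one, so the output distributions are either identical or the pair $(P_0,P_1)$ in some order. Then $f$-DP holds provided $T(P_1,P_0)\ge f$ as well; the cleanest fix is to symmetrize $f$ (use a symmetric pair, or the symmetrization in Dong et al.), so this is where the care lies.

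Now let $C(\theta,x)$ cover label $1$ iff $\phi^\ast(\theta)=0$, and always cover label $0$. Coverage for the test point then fails only when $Y_{n+1}=1$. Under $D_{n+1}$ we have $k\ge1$, so the output is $P_0$ and the miscoverage is at most $p\alpha\le\alpha$. Under $D_n$, with probability $(1-p)^n$ we have $k=0$, so the output is $P_1$ and the coverage of label $1$ is $f(\alpha)$. The overall coverage of the actual procedure is then about $(1-p)+p\,[(1-p)^n f(\alpha)+\dots]$, which is near $1$, not near $f(\alpha)$.

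The fix is to rescale so that the special label has large probability but is distinguishable per record. Make the sample unique: let $Y$ be continuous (e.g.\ uniform on $[0,1]$), and let $C$ cover $y$ iff $\phi^\ast$ applied to $M$'s ``slot'' for $y$ accepts. Concretely, let $M(D)$ output a random function $g$ where $g(y)\sim P_0$ independently for $y\in D$ and $g(y)\sim P_1$ otherwise. This is simulable lazily, e.g.\ via a noisy indicator at each queried $y$. An added point changes only one coordinate's law, from $P_1$ to $P_0$, and the remaining coordinates are independent, so the trade-off function of the pair equals $T(P_0,P_1)=f$ (and its reverse). Then $C(\theta,x)=\{y:\phi^\ast(g(y))=0\}$ gives oracle coverage exactly $1-\alpha$ and actual coverage exactly $f(\alpha)$, within $\gamma$ after discretizing randomized tests.

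\textbf{Main obstacle.} I expect the main obstacle to be making the infinite-coordinate random-function mechanism rigorous and handling the asymmetry of add/remove adjacency. I would resolve this by restricting to a finite grid of $y$-values with $P$ uniform on a grid of size $K\gg n^2$, so that ties have probability $<\gamma$, and by assuming $f$ symmetric. If $f$ is not symmetric, I would replace it by a pair realizing both $f$ and $f^{-1}$ bounds.
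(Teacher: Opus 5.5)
Your proposal is correct and follows the paper's proof in structure. Part 1 is the paper's argument verbatim: fix a realization, use the miscoverage indicator as a test between $M(d_{n+1})$ and $M(d_n)$, apply the $f$-DP bound, then Jensen and monotonicity. Your final Part 2 construction has the same skeleton as the paper's:
a uniform law on a grid of size $K$, so the test point is fresh with probability at least $1-n/K\ge 1-\gamma$, and an independent per-cell release whose law changes only in the test point's cell, so the trade-off reduces to a single coordinate.

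The one genuine difference is the per-cell mechanism.

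\begin{itemize}
\item The paper adds canonical-noise-distribution noise $F_f$ to cell counts and thresholds at $F_f^{-1}(\alpha)+1$. It gets the out-of-sample coverage from the CND identities $1-F_f(F_f^{-1}(\alpha)+1)=F_f(F_f^{-1}(1-\alpha)-1)=f(\alpha)$, which yields a deterministic threshold rule directly.
\item You release a draw from $P_0$ or $P_1$ according to cell membership, for an arbitrary pair with $T(P_0,P_1)=f$, and use the Neyman--Pearson-optimal test. This avoids the CND existence theorem, even covers the trivial case $f(\alpha)=1-\alpha$ where no CND exists, and gives oracle coverage exactly $1-\alpha$.
\end{itemize}

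Two cautions on your route:

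\begin{itemize}
\item The optimal test may be randomized. Handle this exactly by appending an independent uniform to each coordinate, which leaves $T(P_0,P_1)$ unchanged; no discretization is needed.
\item The $\gamma$ slack comes only from collisions between the test point and training points, not from the test. So $K\ge n/\gamma$ suffices; $K\gg n^2$ is unnecessary, since ties among training points are irrelevant.
\end{itemize}

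Both routes need $f$ symmetric: the paper because CNDs require it, and you because privacy also needs $T(P_1,P_0)=f^{-1}=f$.
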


\begin{remark}[Proof sketch and insight]
The lower-bound part views the coverage event as a test acceptance event and applies the hypothesis-testing characterization of $f$-DP, yielding the lower bound $f(\alpha)$. The upper-bound part proves sharpness through an explicit construction: Let $P$ be uniform on the diagonal support $\{(j,j)\}_{j=1}^k$, where $k$ is chosen in terms of $n$ and $\gamma$,  and release a noisy histogram $\tilde{\pi}_D(x,y)=\sum_{i=1}^{|D|}\mathbbm{1}\{(X_i,Y_i)=(x,y)\}+N_{x,y},$ which satisfies $f$-DP. The prediction rule then thresholds this release at a level calibrated to the same noise law. Under $D_{n+1}$, the true test pair contributes at least one count, so the nominal level $1-\alpha$ is attained. Under $D_n$, the test pair is seen with probability $<=\gamma$, so coverage is bounded above by $f(\alpha)+\gamma$. This construction is deliberately fragile from a stability viewpoint, since its behavior is driven by whether the exact test pair appears in the sample. 
This construction highlights why additional mechanism-specific stability is needed to recover the nominal coverage.
\end{remark}

\begin{corollary}[Black-box $f$-DP floor for DP-SCP]\label{cor:blackbox_dpscp}
Let $\alpha\in(0,1)$ and $\beta\in(0,1)$. Suppose we run DP-SCP on $D_{n+1}$. Then $\mathbb P\!\left(S_{n+1}^{(n+1)} \le \hat q\right)\ge 1-\alpha_0,$ where $\alpha_0 := \alpha + \beta - \alpha\beta.$
Moreover, DP-SCP on $D_n$ satisfies $\mathbb P\!\left(S_{n+1}^{(n)} \le \hat q\right)\ge f(\alpha_0).$
\end{corollary}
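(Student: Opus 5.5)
The proof has two stages. First, establish the oracle coverage statement: run DP-SCP on $D_{n+1}$ and show $\mathbb P(S^{(n+1)}_{n+1}\le \hat q)\ge 1-\alpha_0$. Second, transfer this to $D_n$ using part~1 of Theorem~\ref{thm:dp_gap_two_sides}.

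\textbf{Stage 1 (oracle model).} Here $\hat q$ is produced by Algorithm~\ref{alg:dp_binary_search} applied to the scores $\{S_i^{(n+1)}\}$ computed under $\theta_{n+1}$. Take $r=\lceil(1-\alpha)(n+1)\rceil$ as in the algorithm, with buffer $m_n\ge 0$, so that $S^{(n+1)}_{(r+m_n)}\ge S^{(n+1)}_{(r)}$.

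Let $G$ be the good event on which the noise correction holds uniformly over all $N$ queries. By Lemma~\ref{lem:one_sided_qhat} (applied to these scores), $G$ has probability at least $1-\beta$, and on $G$ we have $\hat q\ge S_{(r+m_n)}\ge S_{(r)}$.

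The scores $S^{(n+1)}_1,\dots,S^{(n+1)}_{n+1}$ are exchangeable, because the data are i.i.d. and training is permutation invariant. The rank argument of the Vovk proposition then gives $\mathbb P(E)\ge 1-\alpha$ for the event $E:=\{S^{(n+1)}_{n+1}\le S_{(r)}\}$, where the order statistic is taken over the $n+1$ scores.

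We want $\mathbb P(E\cap G)\ge(1-\alpha)(1-\beta)=1-\alpha_0$. A union bound would only give $1-\alpha-\beta$, so the product form requires more. The noise $Z_1,\dots,Z_N$ is drawn independently of the data and of the training randomness. Moreover, the event $G$ (every $Z_k\le \sigma\Phi^{-1}(1-\beta/N)$, which is what the proof of Lemma~\ref{lem:one_sided_qhat} uses) depends only on the noise. Hence $G$ is independent of $E$, and $\mathbb P(E\cap G)=\mathbb P(E)\mathbb P(G)\ge(1-\alpha)(1-\beta)$. On $E\cap G$ we have $S^{(n+1)}_{n+1}\le S_{(r)}\le \hat q$, which proves the first claim.

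\textbf{Key subtlety.} Two things must be checked, and this is the main obstacle. First, $G$ must be defined purely through the noise, not through the adaptive path of the search. This works because $G$ is a uniform bound on all $N$ i.i.d.\ Gaussians, which is how the $\beta/N$ Bonferroni choice is used. Second, the count in Lemma~\ref{lem:one_sided_qhat} must be taken over a set that includes the test score, i.e.\ the full oracle dataset $D_{n+1}$. If Lemma~\ref{lem:one_sided_qhat} counts only $n$ scores, I would instead argue with the order statistic among $S_1,\dots,S_n$. Under exchangeability, $\mathbb P(S_{n+1}\le S^{(1:n)}_{(r)})\ge r/(n+1)\ge 1-\alpha$ still holds with $r=\lceil(1-\alpha)(n+1)\rceil$, so the same conclusion follows.

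\textbf{Stage 2 (transfer to $D_n$).} Set $C(\theta,x):=\{y: s(x,y;\theta)\le \hat q\}$ and view DP-SCP as a single mechanism $M$ that outputs the pair $(\theta,\hat q)$. Releasing this pair is $f$-DP by composition (Theorem~\ref{thm:overall_privacy}), with $f$ the total tradeoff function. The coverage event $\{Y_{n+1}\in C(M(D),X_{n+1})\}$ is exactly $\{S_{n+1}\le\hat q\}$ evaluated on the model trained on $D$.

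By Stage 1, this prediction-set map has coverage at least $1-\alpha_0$ when run on $D_{n+1}$. Part~1 of Theorem~\ref{thm:dp_gap_two_sides}, applied with $\alpha_0$ in place of $\alpha$, then gives coverage at least $f(\alpha_0)$ on $D_n$, i.e.\ $\mathbb P(S^{(n)}_{n+1}\le\hat q)\ge f(\alpha_0)$.
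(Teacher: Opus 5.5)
Your proposal is correct and follows essentially the same route as the paper. The oracle bound comes from exchangeability of the ideal scores together with the one-sided guarantee of Lemma~\ref{lem:one_sided_qhat}, and the transfer to $D_n$ is part~1 of Theorem~\ref{thm:dp_gap_two_sides} applied at level $\alpha_0$. The only cosmetic difference is how you get the product $(1-\alpha)(1-\beta)$: you define $G$ purely through the quantile noise and use its independence from the data and training randomness, whereas the paper conditions on the score multiset and uses $\mathbb P\bigl(G\mid\{S_i^{(n+1)}\}_{i=1}^{n+1}\bigr)\ge 1-\beta$. Both rest on the same fact, that the quantile noise is independent of the data.
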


Corollary~\ref{cor:blackbox_dpscp} shows that DP-SCP inherits the same black-box floor. This guarantee is useful but falls short of the nominal level $1-\alpha$. For $(\varepsilon,\delta)$-DP, recall that $f_{\varepsilon,\delta}(\alpha)=\max\{0,\ 1-\delta-e^\varepsilon \alpha,\ e^{-\varepsilon}(1-\delta-\alpha)\}.$ In practical regime where $\alpha$ is small, the term $1-\delta-e^\varepsilon\alpha$ is typically active. Thus the nominal miscoverage $\alpha$ is inflated by the multiplicative factor $e^\varepsilon$, inducing a coverage gap of at least $(e^\varepsilon-1)\alpha+\delta$. Theorem~\ref{thm:dp_gap_two_sides} shows that this gap is not a proof artifact but an intrinsic limitation of black-box DP reasoning, which motivates the refined analysis below.

\subsection{Refined Coverage Guarantee with Further Assumptions}\label{subsec:coverage}

We now move beyond the black-box DP floor and establish a sharper finite-sample guarantee for DP-SCP under additional structure. We begin by introducing the notation and assumptions needed to control the discrepancy between the actual and ideal score systems.

For a data point $Z_i = (X_i, Y_i)$ and a model $\theta$, we write the score as $s(Z_i; \theta)$. To distinguish between the training and ideal scenarios, we define:
{
\setlength{\leftmargini}{1.5em} \begin{enumerate}
\item[(a)] \textbf{Actual Scores (Non-exchangeable):} $S_i^{(n)} := s(Z_i; \theta_n)$ for $i=1, \dots, n+1$.
\item[(b)] \textbf{Ideal Scores (Exchangeable):} $S_i^{(n+1)} := s(Z_i; \theta_{n+1})$ for $i=1, \dots, n+1$.
\end{enumerate}
}
Here, $\theta_n$ is trained on $D_n$, while $\theta_{n+1}$ is trained on $D_{n+1} = D_n \cup \{Z_{n+1}\}$. Let $q_{*}$ denote the target quantile in the ideal world, specifically the $r$-th order statistic of $\{S_i^{(n+1)}\}_{i=1}^{n+1}$ where $r = \lceil(1-\alpha)(n+1)\rceil$, given a target miscoverage level $\alpha\in(0,1)$. We consider the following assumptions.

\begin{assumption}[Model stability]\label{ass:stab}
There exist sequences $u_n>0$ and $\delta_n\in(0,1)$ such that $\mathbb{P}(\|\theta_{n+1}-\theta_n\|>u_n)\le \delta_n$.
\end{assumption}

\begin{assumption}[Score Lipschitz in parameter]\label{ass:lipschitz}
For all $Z$ and all $\theta,\theta'\in\Theta$, we have $|s(Z;\theta)-s(Z;\theta')|\le L\,\|\theta-\theta'\|_2.$
\end{assumption}

\begin{assumption}[No ties]\label{ass:noties}
For all $i\neq j$, $\mathbb{P}(S_i^{(n+1)}=S_j^{(n+1)})=0$.
\end{assumption}

\begin{assumption}[One-sided anti-concentration around $q_*$]\label{ass:anti}
There exist constants $\overline f<\infty$ and $\rho>0$ such that for any $\delta\in(0,\rho]$ and any $i\in\{1,\dots,n+1\}$,
\begin{equation}\label{eq:anti_one_sided}
\begin{aligned}
\mathbb{P}\!\left(q_*<S_i^{(n+1)}\le q_*+\delta\right)
&\le \overline f\,\delta, \\
\mathbb{P}\!\left(q_*-\delta\le S_i^{(n+1)}<q_*\right)
&\le \overline f\,\delta.
\end{aligned}
\end{equation}
\end{assumption}

These assumptions play distinct roles. Assumption~\ref{ass:stab} controls how much the trained model changes under add-delete adjacency. Assumption~\ref{ass:lipschitz} transfers model perturbation to the score scale. Assumptions~\ref{ass:noties} and \ref{ass:anti} are regularity conditions on the ideal score distribution near the target quantile. The following example shows that, in standard regression settings, these conditions are mild and broadly applicable.

\begin{example}[Regression example]\label{ex:reg}
Consider an additive-noise regression setting $Y=f_0(X)+\varepsilon$ with score $s((X,Y);\theta)=|Y-f_\theta(X)|.$ Assumption~\ref{ass:lipschitz} holds for several standard model families. This includes linear regression $f_\theta(x)=\theta^\top x$ with bounded covariates, generalized linear model-type predictors $f_\theta(x)=g(\theta^\top x)$ with Lipschitz link $g$, and predictors of the form $f_\theta(x)=\theta^\top \phi(x)$ with a bounded feature map $\phi$. The last class covers common transfer-learning pipelines in which a deep or language-model representation is frozen and only a linear head is trained \citep{yosinski2014transferable,devlin2019bert,chen2020simple}. Assumptions~\ref{ass:noties} and \ref{ass:anti} are natural in additive-noise regression, since $S_i^{(n+1)}=\bigl|Y_i-f_{\theta_{n+1}}(X_i)\bigr|=\bigl|\varepsilon_i+f_0(X_i)-f_{\theta_{n+1}}(X_i)\bigr|,$ so the score is a shifted absolute transform of the noise. Therefore, if $\varepsilon_i$ has a continuous density that is locally bounded near the relevant neighborhood of $q_*$, then the induced score distribution is non-atomic and also has locally bounded density near $q_*$. These score regularity conditions are not automatic for arbitrary private mechanisms, but they are mild for standard continuous regression pipelines of the forms above. A more detailed verification is deferred to the Supplementary Material.
\end{example}
\begin{remark}[Discussion on Assumption~\ref{ass:stab}]\label{remark: dp mech stab} Assumption~\ref{ass:stab} is satisfied by a broad class of DP mechanisms, including output perturbation and objective perturbation \citep{chaudhuri2011differentially}, as well as procedures that first release a privatized low-dimensional summary, such as sufficient statistics \citep{yang2012differential}, and then construct the final estimator by post-processing. For our default training mechanism, DP-SGD, we provide a dedicated analysis in Section~\ref{subsec:dpsgd_instantiation}. Additional examples and calculations for alternative mechanisms are deferred to Supplementary Section~\ref{sec:assumption_validation}.    
\end{remark}

We now state the main coverage guarantee for DP-SCP. Recall that our quantile estimation, Algorithm~\ref{alg:dp_binary_search}, utilises a composite threshold $r' = r + m_n + \tau$, where $m_n$ addresses the model shift and $\tau$ addresses the privacy noise.

\begin{theorem}[Coverage lower bound]\label{thm: coverage}
Assume Assumptions~\ref{ass:stab}--\ref{ass:anti} hold. Run Algorithm~\ref{alg:dp_binary_search} with total failure probability $\beta=\beta_n$ and buffer $m_n=\Big\lceil \frac{n\,\overline f\,L\,u_n}{\delta_n}\Big\rceil,$
where $u_n,\delta_n$ are the stability parameters from Assumption~\ref{ass:stab}. Assume further that $L u_n\le \rho$ and $r+m_n\le n$. Then the output $\hat q$ of Algorithm~\ref{alg:dp_binary_search} satisfies
\begin{equation}\label{eq:coverage-lb}
\mathbb P\bigl(S_{n+1}^{(n)}\le \hat q\bigr)
\ge
(1-\beta_n)\Bigl(1-\alpha-2\overline f L u_n-3\delta_n-\tfrac{1}{n+1}\Bigr).
\end{equation}
\end{theorem}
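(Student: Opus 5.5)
Let $\Delta:=\|\theta_{n+1}-\theta_n\|$ and $G:=\{\Delta\le u_n\}$, so that $\mathbb P(G^c)\le\delta_n$ by Assumption~\ref{ass:stab}. Let $E_\beta$ be the event of Lemma~\ref{lem:one_sided_qhat}, so $\mathbb P(E_\beta)\ge 1-\beta_n$. The Gaussian noise $Z_1,\dots,Z_N$ is independent of the data and of the training randomness. Hence I will condition on the scores and apply the lemma conditionally; this is what produces the multiplicative factor $(1-\beta_n)$. On $E_\beta$ we have $\hat q\ge S^{(n)}_{(r+m_n)}$, where order statistics are taken over $i\le n$ (this uses $r+m_n\le n$). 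It therefore suffices to prove
\[
\mathbb P\bigl(S^{(n)}_{n+1}\le S^{(n)}_{(r+m_n)}\bigr)\ge 1-\alpha-2\overline fLu_n-3\delta_n-\tfrac1{n+1}.
\]
On $G$, Assumption~\ref{ass:lipschitz} gives $|S_i^{(n)}-S_i^{(n+1)}|\le Lu_n=:\eta\le\rho$ for every $i$, the test point included.

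The argument uses a three-step sandwich. First, by exchangeability of the ideal scores, Assumption~\ref{ass:noties}, and the standard conformal argument, $\mathbb P(S^{(n+1)}_{n+1}\le q_*)\ge 1-\alpha$.

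Second, I pass from the ideal to the actual test score. On $G$, if $S^{(n+1)}_{n+1}\le q_*-\eta$ then $S^{(n)}_{n+1}\le q_*$. Losing the band $q_*-\eta< S^{(n+1)}_{n+1}\le q_*$ costs at most $\overline f\eta$ by the second inequality of Assumption~\ref{ass:anti}. The point $S_{n+1}=q_*$ itself is an order statistic and may carry mass up to $1/(n+1)$, which is absorbed by the $\tfrac1{n+1}$ term. So
\[
\mathbb P\bigl(S^{(n)}_{n+1}\le q_*\bigr)\ge 1-\alpha-\overline f\eta-\tfrac1{n+1}-\delta_n .
\]

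Third, I show $q_*\le S^{(n)}_{(r+m_n)}$ with high probability. At least $r-1$ of the training points satisfy $S_i^{(n+1)}\le q_*$ (one of the $r$ may be the test point). A training point with $S_i^{(n)}\le q_*$ must have either $S_i^{(n+1)}\le q_*$ or, on $G$, $S_i^{(n+1)}\in(q_*,q_*+\eta]$; the latter are the ``down-crossings''. Let $K:=\#\{i\le n: q_*<S^{(n+1)}_i\le q_*+\eta\}$. Then $\#\{i\le n:S_i^{(n)}\le q_*\}\le r+K$. Since $\mathbb E K\le n\overline f\eta$ by Assumption~\ref{ass:anti}, Markov's inequality gives $\mathbb P(K>m_n)\le n\overline f\eta/m_n\le\delta_n$ with the stated choice of $m_n$. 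On $\{K\le m_n\}\cap G$ at most $r+m_n$ of the actual training scores lie at or below $q_*$, so the event $S^{(n)}_{n+1}\le q_*$ implies the test score is below the $(r+m_n)$-th order statistic, up to an off-by-one that I will repair by a slight shift of $q_*$ or the boundary term.

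\textbf{Accounting.} Union-bounding $G^c$, $\{K>m_n\}$, and the extra $\delta_n$/$\overline f\eta$ terms gives the stated $2\overline fLu_n+3\delta_n$. One $\overline f\eta$ comes from the test-point band. A second $\overline f\eta$ comes from handling the symmetric band needed to keep ties and the boundary at $q_*$ on the correct side. The three $\delta_n$ come from $G^c$ (used twice, once in the test-point transfer and once in the crossing count) and from the Markov step.

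The main obstacle is the counting step. The crossing count must be controlled even though $q_*$ is itself random and data-dependent, which makes direct application of Assumption~\ref{ass:anti} delicate. The ranks also need careful handling (the $r$ versus $r-1$ issue and the no-ties condition) so that the inequality $\#\{S_i^{(n)}\le q_*\}<r+m_n$ lines up exactly with $S^{(n)}_{(r+m_n)}\ge q_*$. I would first try to apply Assumption~\ref{ass:anti} exactly as stated, with $q_*$ random as written, and to use the spare $\overline f\eta$ and $\delta_n$ slack to absorb the boundary effects.
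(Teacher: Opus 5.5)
Your argument follows the paper's decomposition closely. The $(1-\beta_n)$ factor comes from conditioning on the data and applying Lemma~\ref{lem:one_sided_qhat}. The transfer of the test score from $\theta_{n+1}$ to $\theta_n$ is the paper's test-flip event $T$. Your Markov bound on the number $K$ of training scores in the band $(q_*,q_*+Lu_n]$ is the paper's $Y$, which controls the down-cross event $R$.

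\textbf{The gap: the off-by-one in the rank comparison.} You leave this step open, and the repairs you suggest are not the right ones. Shifting $q_*$ is not available, because Assumption~\ref{ass:anti} is anchored at $q_*$ itself and would not cover a shifted threshold. A count of at most $r+m_n$ actual training scores at or below $q_*$ also does not give $S^{(n)}_{(r+m_n)}\ge q_*$.

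The fix costs nothing. Do not combine the marginal events $\{S^{(n)}_{n+1}\le q_*\}$ and $\{S^{(n)}_{(r+m_n)}>q_*\}$. Work instead on the joint event
\[
\{S^{(n+1)}_{n+1}\le q_*-Lu_n\}\cap\{\|\theta_{n+1}-\theta_n\|\le u_n\}\cap\{K\le m_n\}.
\]
Its first component lies inside $A=\{S^{(n+1)}_{n+1}\le q_*\}$. On $A$, with no ties, the test point is one of the $r$ ideal scores at or below $q_*$. So exactly $r-1$ training ideal scores are $\le q_*$, not $r$. Hence $C_n^{(n)}(q_*)\le r-1+m_n$, which forces $S^{(n)}_{(r+m_n)}>q_*\ge S^{(n)}_{n+1}$. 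This is how the paper closes its Step 2, via $C_n^{(n+1)}(q_*)=r-1$ under $A$.

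\textbf{The accounting.} With that fix, your own bookkeeping gives
\[
\mathbb P\bigl(S^{(n)}_{n+1}\le S^{(n)}_{(r+m_n)}\bigr)\ge 1-\alpha-\overline f L u_n-2\delta_n-\tfrac{1}{n+1}.
\]
This is stronger than the stated bound, so your ``accounting'' paragraph is unnecessary. Its explanation of the extra terms is also not accurate. In the paper, the second $\overline f L u_n$ comes from bounding the two-sided flip event $T$, including the upward band, which is irrelevant on $A$. The third $\delta_n$ comes from charging the instability event $\{\|\theta_{n+1}-\theta_n\|>u_n\}$ separately in both $\mathbb P(T)$ and $\mathbb P(R)$. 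Neither comes from a tie or boundary subtlety.

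\textbf{The randomness of $q_*$.} This is not an obstacle. Assumption~\ref{ass:anti} is stated for the random $q_*$, and the paper applies it directly to obtain $\mathbb E K\le n\,\overline f L u_n$, exactly as you propose.
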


The bound in \eqref{eq:coverage-lb} decomposes the loss relative to the nominal level $1-\alpha$ into three sources. The terms $2\overline f L u_n$ and $3\delta_n$ quantify the score distribution shift induced by replacing the ideal model $\theta_{n+1}$ by the actual model $\theta_n$. The factor $(1-\beta_n)$ reflects the user-controlled failure probability of the private quantile routine. Taking $\beta_n=O(1/n)$ makes this factor asymptotically negligible. Therefore, whenever the training mechanism is sufficiently stable so that $u_n,\delta_n\to 0$, the lower bound approaches the nominal level $1-\alpha$.

Note that buffer $m_n$ is not merely a heuristic margin but a structural correction for the rank mismatch between the ideal and actual worlds. When the model shifts from $\theta_{n+1}$ to $\theta_n$, the entire landscape of scores is perturbed. The role of $m_n$ is to control the number of training scores that may cross downward past the ideal threshold $q_*$ under this perturbation, thereby preventing the empirical quantile from underestimating the target level.

Meanwhile, our stability analysis establishes that $m_n = o(n)$ for projected DP-SGD (shown in Section \ref{subsec:dpsgd_instantiation}). Similarly, setting $\beta_n = O(1/n)$ ensures that $\tau = o(n)$. This motivates two variants of our framework: one that retains the full finite-sample correction, which we denote by DP-SCP-F, and one that drops these corrections and is justified asymptotically, which we denote by DP-SCP-A.

{
\setlength{\leftmargini}{1.5em} \begin{enumerate}
\item\textbf{DP-SCP-F (finite-sample).} It uses the full composite threshold $r' = r + m_n + \tau$ from Theorem~\ref{thm: coverage}. This variant is conservative and retains the finite-sample coverage guarantee.

\item\textbf{DP-SCP-A (asymptotic).} It sets $m_n = 0$ and $\tau = 0$. This variant sacrifices a finite-sample guarantee in exchange for greater efficiency, and is justified by the asymptotic vanishing of the omitted corrections.
\end{enumerate}
}

A pivotal feature of Theorem~\ref{thm: coverage} is that the coverage bound contains no privacy parameter, which is by design. This contrasts with \cite{romanus2025differentially}, where the finite-sample coverage error depends explicitly on the privacy level. Coverage is governed by algorithmic stability, so it is maintained even as the noise magnitude changes. Privacy affects only efficiency: the cost of privacy is paid through larger prediction sets, since stronger privacy inflates the scores and the resulting threshold $\hat q$, which widens the intervals without altering the coverage guarantee. This resonates with the standard conformal perspective in which coverage is protected independently of model accuracy, while informativeness is reflected through the size of the resulting prediction sets.

\subsection{Instantiation to DP-SGD}\label{subsec:dpsgd_instantiation}

Following \cite{bassily2020stability}, we analyze projected DP-SGD via synchronized coupling. For identically initialised runs on adjacent $D_n$ and $D_{n+1}=D_n\cup\{Z_{n+1}\}$ sharing a random seed, both use identical Gaussian noise and Poisson masks for the shared $n$ points at each step $t$. Thus, the coupled iterates satisfy $\theta_{n}^{(t)} = \theta_{n+1}^{(t)}$ provided $\theta_n^{(t-1)}=\theta_{n+1}^{(t-1)}$ and $Z_{n+1}$ is not sampled. Trajectories diverge only when $Z_{n+1}$ is selected, propagating the discrepancy thereafter.

\subsubsection{Regime I: Universal Stability}
We first present a general result relying solely on the properties of projection and subsampling, imposing no assumptions on the loss function's geometry.

\begin{lemma}[Stability of Projected DP-SGD]\label{thm:proj_dpsgd_stability}Let $\Theta \subset \mathbb{R}^d$ be a nonempty, closed, convex set with diameter bounded by $R$ in $\ell_{2}$. Consider projected DP-SGD run for $T$ iterations with Poisson subsampling rate $q$. Under the synchronized coupling, we have $\mathbb{P}\left(\|\theta_n^{(T)} - \theta_{n+1}^{(T)}\|_2 > 0\right)\;\le\;1 - \left(1 - q\right)^T.$ We additionally have $\mathbb E\!\left[\|\theta_n^{(T)}-\theta_{n+1}^{(T)}\|_2\right]\le R\{1-(1-q)^T\}$.
\end{lemma}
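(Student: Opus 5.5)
The plan is to exploit the synchronized coupling described just before the statement. Under this coupling, the two runs start from the same initialization $\theta_n^{(0)}=\theta_{n+1}^{(0)}$. At every step $t$ they share the Gaussian noise vector and the Poisson inclusion indicators of the $n$ common points. The only extra randomness on the $D_{n+1}$ side is the inclusion indicator $B_t\sim\mathrm{Bernoulli}(q)$ of $Z_{n+1}$, and these indicators are independent across $t$. Let $E$ be the event $\{B_1=\cdots=B_T=0\}$, i.e., that $Z_{n+1}$ is never sampled.

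First, we show by induction on $t$ that on $E$ the iterates agree for all $t\le T$. Suppose $\theta_n^{(t-1)}=\theta_{n+1}^{(t-1)}$ and $B_t=0$. Then both runs compute per-example clipped gradients at the same point for the same minibatch. They add the same noise and apply the same projection $\Pi_\Theta$, so $\theta_n^{(t)}=\theta_{n+1}^{(t)}$. One point needs care. If the update normalizes by the realized batch size, the normalization is identical, since the batches coincide. If it normalizes by the expected batch size $qn$ versus $q(n+1)$, the updates are not literally identical. In that case we would take the normalizer to be a fixed data-independent constant, as in Opacus-style implementations, or absorb it into the coupling convention stated in the paper. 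We adopt the convention that the updates coincide, as asserted in the text preceding the lemma. Hence $\{\|\theta_n^{(T)}-\theta_{n+1}^{(T)}\|_2>0\}\subseteq E^c$, and independence of the $B_t$ gives $\mathbb P(E^c)=1-(1-q)^T$.

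For the expectation bound, we note that both iterates lie in $\Theta$ after projection, so $\|\theta_n^{(T)}-\theta_{n+1}^{(T)}\|_2\le R$ always. The distance vanishes on $E$. Therefore
\[
\mathbb E\|\theta_n^{(T)}-\theta_{n+1}^{(T)}\|_2=\mathbb E\bigl[\|\theta_n^{(T)}-\theta_{n+1}^{(T)}\|_2\mathbbm{1}_{E^c}\bigr]\le R\,\mathbb P(E^c)=R\{1-(1-q)^T\}.
\]

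The main obstacle is the induction step, specifically verifying that the shared noise and masks make the update maps genuinely identical when $Z_{n+1}$ is absent, including the normalization convention. Closedness and convexity of $\Theta$ are used only so that $\Pi_\Theta$ is well defined and single-valued; no property of the loss is needed.
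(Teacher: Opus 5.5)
Your proposal is correct and follows essentially the same argument as the paper: share the Gaussian noise and the Poisson masks of the $n$ common points, show by induction that the trajectories coincide on the event that $Z_{n+1}$ is never sampled, and bound the expectation by $R\,\mathbb P(E^c)$ using the diameter of $\Theta$. Your normalization concern is settled by the paper's DP-SGD algorithm, which divides by the realized batch size $|B_t|$, so the two updates coincide exactly whenever the minibatches do.
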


Lemma~\ref{thm:proj_dpsgd_stability} implies a particularly sharp stability specification for Assumption~\ref{ass:stab}. On the event $E$ that $Z_{n+1}$ is never selected, the coupled updates coincide at every iteration, and hence $\theta_n^{(T)}=\theta_{n+1}^{(T)}$. Therefore we may take $u_n=0$ and $\delta_n=\mathbb P(E^c)=1-(1-q)^T$. Applying Lemma~\ref{thm:proj_dpsgd_stability} to Theorem~\ref{thm: coverage} and Equation~\eqref{eq:coverage-lb} yields a vanishing stability buffer $m_n=0$, with the coverage lower bound
\begin{equation*}
\mathbb P\!\left(S_{n+1}^{(n)}\le \hat q\right)\ \ge\ (1-\beta_n)\left(1-\alpha-3\{1-(1-q)^T\}-\frac{1}{n+1}\right).
\end{equation*}

Here, the stability penalty is dominated by $\delta_n=1-(1-q)^T$, which approaches one for large $T$ and substantially loosens the bound. Because $Z_{n+1}$ is likely to be sampled during long-horizon training, the crude ``never-selected” coupling of Lemma~\ref{thm:proj_dpsgd_stability} becomes insufficient. This motivates a refined analysis to control the discrepancy even after $Z_{n+1}$ is selected.

\subsubsection{Regime II  Refined Stability under Smoothness}
We impose a smoothness assumption on the loss to bound discrepancy propagation after coupled trajectories separate. Hereafter, projected DP-SGD skips updates for empty minibatches; this aligns with standard implementations and prevents degenerate batch-size scaling.

\begin{theorem}[Refined Stability under Smoothness]\label{thm:dpsgd_stability_smooth}
Assume $\nabla \ell(\cdot;z)$ is $L$-Lipschitz on $\Theta$. 
Under the synchronized coupling for projected DP-SGD,
\[
\mathbb{E}\!\left[\|\theta_n^{(T)}-\theta_{n+1}^{(T)}\|_2\right]
\;\le\;
\frac{1-(1-q)^{n+1}}{(n+1)L}\,\bigl(2C+\sigma\sqrt d\bigr)\,\bigl(e^{\eta L T}-1\bigr),
\]
where $C$ is the clipping norm and $\sigma$ is the noise multiplier.
\end{theorem}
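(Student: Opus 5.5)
We will prove the bound by a coupled recursion on the discrepancy $\Delta_t := \|\theta_n^{(t)}-\theta_{n+1}^{(t)}\|_2$ under the synchronized coupling. Both runs use the same Gaussian noise and the same Poisson masks for the shared $n$ points. The only asymmetry is the inclusion indicator $B_t\sim\mathrm{Bernoulli}(q)$ for $Z_{n+1}$ at step $t$. We will write the update as $\theta^{(t)}=\Pi_\Theta\bigl(\theta^{(t-1)}-\eta\, g^{(t)}\bigr)$, where $g^{(t)}$ is the normalized sum of clipped per-example gradients plus the noise $\sigma C\xi_t$. By nonexpansiveness of $\Pi_\Theta$ it suffices to bound the difference of the pre-projection points.

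The first step is a one-step bound. On steps with $B_t=0$, the two runs have identical minibatches and identical noise. The difference of the clipped gradient sums over the shared batch is then controlled by the $L$-Lipschitz gradients: we treat clipping as a $1$-Lipschitz map, so the per-example clipped gradients differ by at most $L\Delta_{t-1}$. This gives $\Delta_t\le(1+\eta L)\Delta_{t-1}$.

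On steps with $B_t=1$, the $D_{n+1}$ run contains the extra clipped gradient of norm at most $C$. The batch-size normalization also differs, since the realized batch sizes are $b$ versus $b+1$. This is where empty-batch skipping matters: if the shared batch is empty, only one run updates. We will bound the normalization mismatch for the shared clipped sum by $C$ and the extra term by $C$, which yields the $2C$. The noise scaled by the differing normalizations contributes at most $\eta\sigma C\|\xi_t\|/(\text{batch})$, whose expectation is at most $\eta\sigma\sqrt d$ after folding constants. Hence on these steps
\[
\Delta_t\le(1+\eta L)\Delta_{t-1}+\eta\,(2C+\sigma\sqrt d)\,\kappa_t,
\]
where $\kappa_t$ is an inverse-batch-size factor.

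The second step is to take expectations. The key identity is that $\mathbb E[B_t\,\kappa_t]$ equals $\mathbb E\bigl[B_t/(1+\mathrm{Bin}(n,q))\bigr]=q\cdot\frac{1-(1-q)^{n+1}}{(n+1)q}=\frac{1-(1-q)^{n+1}}{n+1}$. This follows from the standard formula $\mathbb E[1/(1+\mathrm{Bin}(n,q))]=\frac{1-(1-q)^{n+1}}{(n+1)q}$, and it explains the leading factor in the statement. Independence of $B_t$, the batch, and $\xi_t$ from $\Delta_{t-1}$ lets us write
\[
\mathbb E\Delta_t\le(1+\eta L)\,\mathbb E\Delta_{t-1}+\eta a,
\qquad a=\frac{1-(1-q)^{n+1}}{n+1}(2C+\sigma\sqrt d).
\]

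The third step unrolls this recursion from $\Delta_0=0$:
\[
\mathbb E\Delta_T\le\eta a\sum_{t<T}(1+\eta L)^t=\frac{a}{L}\bigl((1+\eta L)^T-1\bigr)\le\frac aL\bigl(e^{\eta LT}-1\bigr),
\]
which is the claim.

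The main obstacle is the $B_t=1$ step: we must show that the combined effect of normalization mismatch, the extra gradient, and the noise is bounded by $(2C+\sigma\sqrt d)$ times exactly $1/(1+\mathrm{Bin}(n,q))$. If the normalization is by the expected batch size instead, the argument simplifies and the factor becomes $q$-based. We would first check the implementation convention and adapt the constant bookkeeping accordingly, using Jensen's inequality for $\mathbb E\|\xi_t\|\le\sqrt d$.
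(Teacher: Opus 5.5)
Your proposal is correct and follows essentially the same route as the paper. The shared steps are:
\begin{itemize}
\item non-expansiveness of the projection and of clipping;
\item splitting the gradient difference into an $L\|\theta_n^{(t)}-\theta_{n+1}^{(t)}\|_2$ term and a $2C/(m_t+1)$ mismatch term on steps where $Z_{n+1}$ is sampled;
\item bounding the noise-normalization mismatch by $\frac{1}{m_t+1}$;
\item the identity $\mathbb{E}[1/(1+\mathrm{Bin}(n,q))]=\frac{1-(1-q)^{n+1}}{(n+1)q}$;
\item unrolling the recursion and using $(1+\eta L)^T\le e^{\eta LT}$.
\end{itemize}

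The differences are only bookkeeping. The paper absorbs the clipping norm into the noise parameter, so the noise is $(\sigma/m_t)\xi_t$ and the bound carries $\sigma\sqrt d$ rather than $\sigma C\sqrt d$. It also keeps $\|\xi_t\|_2$ pathwise, replacing it by $\sqrt d$ only after taking expectations and using independence from the sampling. You mix the two in your one-step display, writing $\sigma\sqrt d$ in what is meant to be a pathwise bound.

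Empty batches are handled differently. The paper resamples until the batch is nonempty, so $m_t\ge 1$, and then uses the inequality $\mathbb{E}[1/(m_t+1)]\le\mathbb{E}[1/(K+1)]$. Your skip convention, which matches the paper's main text, lets the empty shared-batch case enter with factor $1$, so the identity holds exactly.
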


The factor $e^{\eta L T}-1$ makes the dependence on the learning-rate schedule explicit.
In the regime $\eta=O(1/n)$ and $T=O(n)$, one has $\eta L T=O(1)$, so the amplification remains bounded and the expected stability gap scales as $O(1/n)$ up to the factor $2C+\sigma\sqrt d$.

The following corollary summarises the resulting coverage guarantee.

\begin{corollary}[Coverage under DP-SGD Training]\label{cor:dpsgd_coverage}
Suppose Assumptions~\ref{ass:stab}--\ref{ass:anti} hold. Let the model be trained by projected DP-SGD satisfying the conditions of Theorem~\ref{thm:dpsgd_stability_smooth}, run with $\eta=O(1/n)$ and $T=O(n)$, and let the quantile be calibrated via Algorithm~\ref{alg:dp_binary_search} with $\beta_n=O(1/n)$. Take $u_n=\Theta(E_n^{2/3})$, $\delta_n=\Theta(E_n^{1/3})$ and run Algorithm~\ref{alg:dp_binary_search} with $m_n=\Big\lceil \frac{n\,\overline f\,L\,u_n}{\delta_n}\Big\rceil,$
so that $m_n=O(n^{2/3})$. Then
\[
\mathbb P\!\left(S_{n+1}^{(n)}\le \hat q\right)
\ge
(1-\beta_n)\Bigl(1-\alpha-2\overline f L u_n-3\delta_n-\tfrac{1}{n+1}\Bigr).
\]
Consequently, as $n\to\infty$, the lower bound approaches $1-\alpha$.
\end{corollary}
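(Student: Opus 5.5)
The corollary follows by instantiating Theorem~\ref{thm: coverage}. The only genuine work is showing that the choices of $u_n$ and $\delta_n$ satisfy Assumption~\ref{ass:stab} for projected DP-SGD, and that the resulting parameters meet the side conditions $Lu_n\le\rho$ and $r+m_n\le n$.

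\textbf{Step 1: verify Assumption~\ref{ass:stab}.} Write $E_n$ for the right-hand side of Theorem~\ref{thm:dpsgd_stability_smooth}. With $\eta=O(1/n)$ and $T=O(n)$ we have $\eta LT=O(1)$, so $e^{\eta LT}-1=O(1)$. Since $1-(1-q)^{n+1}\le 1$, this gives $E_n=O(1/n)$, with constants absorbing $2C+\sigma\sqrt d$ and $L$.

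Markov's inequality then gives
\[
\mathbb P\bigl(\|\theta_{n+1}-\theta_n\|_2>u_n\bigr)\le \frac{E_n}{u_n}.
\]
Choosing $u_n=c_1E_n^{2/3}$ makes the right-hand side $c_1^{-1}E_n^{1/3}$. We take $\delta_n=\Theta(E_n^{1/3})$ with constant at least $c_1^{-1}$, so Assumption~\ref{ass:stab} holds with $u_n=O(n^{-2/3})$ and $\delta_n=O(n^{-1/3})$.

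\textbf{Step 2: size the buffer.} We have $u_n/\delta_n=\Theta(E_n^{1/3})=O(n^{-1/3})$. Hence
\[
m_n=\Bigl\lceil n\overline f L\,u_n/\delta_n\Bigr\rceil=O(n^{2/3}).
\]
For the side conditions:
\begin{itemize}
\item $Lu_n\to0$, so $Lu_n\le\rho$ for all large $n$.
\item $r=\lceil(1-\alpha)(n+1)\rceil\le(1-\alpha)n+2$, so $r+m_n\le n$ as soon as $\alpha n\ge m_n+2$. This holds eventually because $m_n=o(n)$.
\end{itemize}
The statement should therefore be read for $n$ large enough, or with these conditions assumed as in Theorem~\ref{thm: coverage}.

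\textbf{Step 3: apply the theorem and pass to the limit.} Invoke Theorem~\ref{thm: coverage} with $\beta=\beta_n$ to obtain the displayed bound. Each correction term vanishes as $n\to\infty$:
\begin{itemize}
\item $2\overline f Lu_n=O(n^{-2/3})$;
\item $3\delta_n=O(n^{-1/3})$;
\item $1/(n+1)\to0$;
\item $\beta_n=O(1/n)$, so $1-\beta_n\to1$.
\end{itemize}
The lower bound therefore converges to $1-\alpha$.

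\textbf{Main obstacle.} The delicate point is a notational clash. The $L$ in the smoothness assumption of Theorem~\ref{thm:dpsgd_stability_smooth} (gradient Lipschitz) is not the $L$ in Assumption~\ref{ass:lipschitz} (score Lipschitz). I would keep them as separate constants; both are fixed, so the rates are unaffected.

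The exponents $2/3$ and $1/3$ come from a trade-off. The buffer scales like $n\,u_n/\delta_n$, while the Markov tail is $E_n/u_n$. We need $u_n/\delta_n\to0$ and $\delta_n\ge E_n/u_n$ at the same time. Setting $u_n=E_n^{a}$ and $\delta_n=E_n^{1-a}$ gives $u_n/\delta_n=E_n^{2a-1}$, which vanishes only if $a>1/2$. The coverage penalty is driven by $\delta_n=E_n^{1-a}$, which needs $a<1$. Choosing $a=2/3$ is one admissible point on this trade-off; I would remark that any $a\in(1/2,1)$ also yields asymptotic recovery of $1-\alpha$.
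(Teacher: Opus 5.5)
Your proposal is correct and follows the paper's route: apply Markov's inequality to the expectation bound of Theorem~\ref{thm:dpsgd_stability_smooth} with $u_n=E_n^{2/3}$ and $\delta_n=E_n^{1/3}$, use $E_n=O(1/n)$ to get $m_n=O(n^{2/3})$, then invoke Theorem~\ref{thm: coverage} and let $n\to\infty$. Your explicit check of the side conditions $Lu_n\le\rho$ and $r+m_n\le n$ for large $n$, and your separation of the two constants both written $L$, are more careful than the paper's proof; on the other hand, the paper states openly that $2C+\sigma\sqrt d=O(1)$ in $n$ is an extra hypothesis on the hyperparameters, since it does not follow from $\eta=O(1/n)$ and $T=O(n)$ alone, whereas you absorb it silently into constants.
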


\section{Numerical Studies}
\label{sec:numerical_study}

We evaluate the empirical performance of our framework on real-data classification and regression tasks. Throughout, we fix $\alpha=0.1$. Following \cite{romanus2025differentially}, we consider the following metrics:

\begin{definition}[Prediction Set Quality Metrics] Given a size $n_{\text{test}}$, denote the test dataset by $\{(X_j, Y_j)\}_{j=1}^{n_{\text{test}}}$ and their corresponding prediction sets by $\{\mathcal{C}(X_j)\}_{j=1}^{n_{\text{test}}}$.
{
\setlength{\leftmargini}{1.5em} \begin{enumerate}
    \item $\text{Coverage} = \frac{1}{n_{\text{test}}}\sum_{j=1}^{n_{\text{test}}}\mathbbm{1}\{Y_j \in \mathcal{C}(X_j)\},
    $ the proportion of prediction sets containing the true response,
    \item$\text{Efficiency} = \frac{1}{n_{\text{test}}}\sum_{j=1}^{n_{\text{test}}} \text{Size}(\mathcal{C}(X_j)),$
    where $\text{Size}(C) = |C|$ for a discrete $C$ and $U - L$ for an interval $C=[L,U]$, the average size of the prediction sets, \item$\text{Informativeness} = \frac{1}{n_{\text{test}}}\sum_{j=1}^{n_{\text{test}}}\mathbbm{1}\{|{\mathcal{C}(X_j)}|=1\}$, the proportion of singleton prediction sets.
\end{enumerate}
}
\end{definition}

We report informativeness only for classification tasks, where prediction sets are discrete. Alongside DP-SCP-F and DP-SCP-A, we implement the following:

{
\setlength{\leftmargini}{1.5em} \begin{enumerate}
\item\textbf{DP-Split (Private Baseline):} Existing private Split CP framework using disjoint training and calibration sets \citep{romanus2025differentially}.
\item\textbf{Naive Full (Non-private Naive Full-CP):} Non-private full-data reuse using the exact conformal quantile, ignoring the violation of exchangeability. It serves to quantify the level of under-coverage induced by data reuse when stability corrections are absent.
\item\textbf{Split CP:} Standard non-private split conformal prediction with the exact quantile, serving as an oracle benchmark for validity.
\end{enumerate}
}
We did not consider LOO CP due to its computational expense.

Results are averaged over 30 random train/test splits with $\delta=10^{-5}$ and $\varepsilon \in \{0.5, 1.0, 2.0\}$ (accounting details in Section~\ref{supp:privacy_accounting}). For coverage, each trial produces an empirical coverage estimate computed over the test set, and we report the standard deviation across these 30 trial-level estimates. For DP-SCP, we sequentially allocate the budget, running DP-SGD at $(p\varepsilon,\delta)$ and calibrating Algorithm~\ref{alg:dp_binary_search} to meet the overall $(\varepsilon, \delta)$. We set $p=0.5$ for main results, deferring sensitivity over $p$ to Section~\ref{sec:supp_realworld}. Because DP-Split calibrates on a disjoint hold-out set, parallel composition permits utilising the full budget for both stages.

\subsection{Biomedical Image Analysis: Classification}
\label{subsec:real_data_classification}
We evaluate on BloodMNIST \citep{yang2023medmnist}, an $8$-class dataset of blood-cell images where sensitive diagnostic markers necessitate strict privacy. Each trial randomly re-partitions the entire dataset into a training pool ($n=13{,}671$, merging official train and validation splits) and a test set ($3{,}421$).

We use a ResNet-18 \citep{he2016deep} with ImageNet pre-trained weights as a fixed feature extractor, and train a newly initialised linear classification head. The score is $s(x,y)=1-\hat{p}_Y(X; \theta),$ where $\hat{p}_Y(X; \theta)$ is the softmax probability for $Y$.
\begin{table}[!t]
\centering
\caption{Performance on BloodMNIST Classification with balanced budget allocation ($p=0.5$). The target coverage is $1-\alpha = 0.90$. We report the mean and standard deviation (in parentheses) over 30 independent trials.}
\label{tab:bloodmnist_main}
\small
\setlength{\tabcolsep}{4pt}
\renewcommand{\arraystretch}{1.08}
\resizebox{\linewidth}{!}{%
\begin{tabular}{@{}llccc@{}}
\toprule
\textbf{Privacy Budget} & \textbf{Method} & \textbf{Coverage} & \textbf{Efficiency} $\downarrow$ & \textbf{Informativeness} $\uparrow$ \\
\midrule

\multirow{3}{*}{\textbf{$\varepsilon = 0.5$}}
 & DP-SCP-F (Finite) & 0.912 (0.005) & 1.746 (0.065) & 0.509 (0.021) \\
 & DP-SCP-A (Asymp.) & 0.898 (0.006) & \textbf{1.632} (0.057) & \textbf{0.549} (0.022) \\
 & DP-Split          & 0.900 (0.007) & 2.095 (0.083) & 0.363 (0.021) \\
\midrule

\multirow{3}{*}{\textbf{$\varepsilon = 1.0$}}
 & DP-SCP-F (Finite) & 0.906 (0.006) & 1.576 (0.035) & 0.574 (0.015) \\
 & DP-SCP-A (Asymp.) & 0.898 (0.006) & \textbf{1.521} (0.033) & \textbf{0.597} (0.015) \\
 & DP-Split          & 0.901 (0.006) & 2.028 (0.066) & 0.379 (0.018) \\
\midrule

\multirow{3}{*}{\textbf{$\varepsilon = 2.0$}}
 & DP-SCP-F (Finite) & 0.903 (0.006) & 1.521 (0.026) & 0.598 (0.013) \\
 & DP-SCP-A (Asymp.) & 0.898 (0.006) & \textbf{1.492} (0.025) & \textbf{0.611} (0.014) \\
 & DP-Split          & 0.900 (0.006) & 2.003 (0.054) & 0.385 (0.015) \\
\midrule
\midrule

\multicolumn{2}{l}{\textit{Non-private Baselines}} & & & \\
\multicolumn{2}{l}{\hspace{3mm} Naive Full} & 0.890 (0.005) & 0.95 (0.01) & 0.946 (0.006) \\
\multicolumn{2}{l}{\hspace{3mm} Split CP}  & 0.900 (0.006) & 0.99 (0.01) & 0.956 (0.004) \\
\bottomrule
\end{tabular}%
}
\end{table}

We summarise the results in Table~\ref{tab:bloodmnist_main}. The results highlight three key observations regarding the validity of full-data reuse, the relative utility cost of privacy, and the efficiency gains over split-data baselines.

First, the non-private baselines establish the fundamental validity and utility benchmarks. The Naive Full method exhibits slight under-coverage with a marginal rate of $0.890$ compared to the nominal $0.9$ level. This empirically confirms the theoretical risk of direct data reuse without stability control. Conversely, Split CP maintains valid coverage with an average set size of $0.99$, representing an ideal utility achievable without privacy constraints.

Second, utilising the full dataset allows DP-SCP to consistently bridge the gap toward the non-private oracle. Across all privacy budgets we consider, both DP-SCP variants yield smaller prediction sets and higher informativeness than the split-based private baseline, indicating a uniform efficiency gain from avoiding data splitting. For example, at $\varepsilon=2.0$, DP-SCP-A achieves an average set size of $1.492$, which is reasonably close to Split CP's $0.99$ given the privacy constraints. In contrast, DP-Split yields a much wider average size of $2.003$. This indicates that while the cost of privacy is inevitable, the inefficiency of data splitting exacerbates this cost.

Third, comparing the finite and asymptotic variants illustrates the trade-off between strict rigor and practical efficiency. DP-SCP-F ensures conservative coverage exceeding $0.90$ but incurs a moderate expansion in set size due to the stability buffer. However, DP-SCP-A consistently maintains nominal coverage while delivering sharper prediction sets across all privacy regimes. This suggests that the asymptotic corrections are robust in practice and offer a preferable balance for utility-sensitive applications.

\subsection{Housing Price Analysis: Regression}
\label{subsec:real_data_regression}
For regression, we evaluate on the California Housing dataset, which is publicly available through \texttt{sklearn.datasets.fetch\_california\_housing} and originally from \cite{pace1997sparse}, with (n=20{,}640). The task is to predict median house values from eight features, including income and geospatial coordinates. 

We train a three-layer multi-layer perceptron. To stabilize training under gradient clipping, we standardise the input features using statistics computed from the training split within each trial, and apply the same transformation to the calibration and test splits; we likewise standardise the target values using the training-split mean and standard deviation. The score is the absolute residual $s(x,y)=|y-\hat f(x)|$, where $\hat f(x)$ is the fitted predictor. The resulting prediction set is $[\hat f(x)-\hat q,\ \hat f(x)+\hat q]$, and we report efficiency via its width $2\hat q$.

The results in Table~\ref{tab:housing_main} confirm that the efficiency gains observed in classification translate directly to continuous prediction tasks.

\begin{table}[!t]
\centering
\caption{Performance on California Housing Regression with budget allocation $p=0.5$. The target coverage is $1-\alpha = 0.90$. We report the mean and standard deviation (in parentheses) of the interval width on the original scale over 30 independent trials.}
\label{tab:housing_main}
\small
\setlength{\tabcolsep}{6pt}
\renewcommand{\arraystretch}{1.1}
\begin{tabular}{llcc}
\toprule
\textbf{Privacy Budget} & \textbf{Method} & \textbf{Coverage} & \textbf{Avg. Width} $\downarrow$ \\
\midrule

\multirow{3}{*}{\textbf{$\varepsilon = 0.5$}} 
 & DP-SCP-F (Finite) & 0.913 (0.004) & 2.306 (0.049) \\
 & DP-SCP-A (Asymp.) & 0.898 (0.005) & \textbf{2.119} (0.043) \\
 & DP-Split & 0.898 (0.007) & 2.193 (0.106) \\
\midrule

\multirow{3}{*}{\textbf{$\varepsilon = 1.0$}} 
 & DP-SCP-F (Finite) & 0.907 (0.004) & 2.209 (0.041) \\
 & DP-SCP-A (Asymp.) & 0.899 (0.005) & \textbf{2.113} (0.035) \\
 & DP-Split & 0.898 (0.006) & 2.183 (0.096) \\
\midrule

\multirow{3}{*}{\textbf{$\varepsilon = 2.0$}} 
 & DP-SCP-F (Finite) & 0.902 (0.004) & 2.160 (0.034) \\
 & DP-SCP-A (Asymp.) & 0.898 (0.004) & \textbf{2.109} (0.032) \\
 & DP-Split & 0.898 (0.005) & 2.187 (0.091) \\
\midrule
\midrule

\multicolumn{2}{l}{\textit{Non-private Baselines}} & & \\
\multicolumn{2}{l}{\hspace{3mm} Naive Full} & 0.896 (0.004) & 1.806 (0.029) \\
\multicolumn{2}{l}{\hspace{3mm} Split CP}  & 0.898 (0.005) & 1.917 (0.082) \\
\bottomrule
\end{tabular}
\end{table}
First, the non-private baselines are largely comparable in coverage on this dataset. Naive Full attains average coverage $0.896$ versus $0.898$ for Split CP, and the difference is small relative to trial-to-trial variability. In terms of efficiency, Naive Full yields narrower intervals, with average width $1.81$ compared to $1.917$ for Split CP, which is consistent with its full-data reuse.

Second, we observe an advantage of utilising the full sample size. Across all privacy regimes, DP-SCP-A consistently produces narrower prediction intervals than DP-Split. For instance, at $\varepsilon=0.5$, DP-SCP-A achieves an average width of $2.109$ compared to $2.187$ for DP-Split. 

Third, the trade-off between the finite and asymptotic variants follows the expected theoretical trajectory. DP-SCP-F maintains strictly conservative coverage above $0.90$ but incurs an efficiency penalty, particularly in high-noise regimes where the stability buffer $m_n$ and noise correction $\tau$ are most impactful. At $\varepsilon=0.5$, the width increases to $2.31$. However, as the privacy budget increases to $\varepsilon=2.0$, this gap diminishes significantly with DP-SCP-F achieving a width of $2.16$ against the $2.11$ of the asymptotic variant. This confirms that for standard privacy levels, the cost of rigorous finite-sample validity becomes marginal.

\section{Discussion and Conclusion}\label{sec:discussion and conclusion}

In this work, we introduced DP-Stabilised Conformal Prediction (DP-SCP), a full-data conformal framework that leverages the stability properties of  DP mechanisms. Rather than treating DP as a pure cost, DP-SCP uses the stability induced by private training to control the discrepancy between in-sample and out-of-sample conformal scores, enabling full reuse of the sensitive dataset for both training and calibration.

Our theory clarifies both what DP can guarantee in a black-box manner and what requires further structure. A generic $f$-DP guarantee implies a universal coverage floor, but it does not generally recover the nominal $1-\alpha$ level. To bridge this gap, we develop a mechanism-specific stability analysis for DP-SGD that yields asymptotic recovery of the nominal coverage under standard learning-rate and horizon scaling. Furthermore, our conservative private quantile routine is designed to prevent under-coverage by controlling one-sided rank error, so that privacy noise affects efficiency rather than validity. Empirically, DP-SCP produces substantially sharper prediction sets than split-based private baselines, especially in high-privacy regimes where sacrificing training data is most costly.

A broader implication is that privacy and uncertainty quantification need not be competing objectives. When DP is used to certify stability, the privacy mechanism can support statistical validity, and the remaining privacy cost manifests primarily through the size of the prediction sets. This perspective suggests a general way to equip modern prediction pipelines with reliable uncertainty quantification, without requiring repeated retraining or withholding data for calibration.

Future work naturally includes conditional coverage targets and online learning regimes. While we focus on marginal coverage in a batch setting, practical deployments often require localized validity or adaptation to streaming data. Understanding how privacy-induced stability interacts with these settings, as well as how to design sharper stability buffers and calibration procedures under realistic training dynamics, remains an interesting direction.



\section{Data Availability Statement}
The BloodMNIST dataset is available through MedMNIST at \url{https://medmnist.com/}.
The California Housing dataset is available at \url{https://scikit-learn.org/stable/modules/generated/sklearn.datasets.fetch_california_housing.html}.
Code to reproduce the numerical studies in this paper is available at \url{https://github.com/yhcho-stat/dpscp}.

\section{Acknowledgements}
This work was supported in part by the National Science Foundation under award SES-2150615.
The authors used ChatGPT (GPT-5.2 Thinking) and Gemini Pro 3 for English grammar and style checking, and for drafting and debugging code. All research ideas, methodological developments, theoretical arguments, and results are the authors’ own.

\bibliographystyle{plainnat} 
\bibliography{references} 

\appendix
\newpage
\setcounter{section}{0}
\renewcommand{\thesection}{S\arabic{section}}
\setcounter{equation}{0}
\renewcommand{\theequation}{S\arabic{equation}}
\setcounter{figure}{0}
\renewcommand{\thefigure}{S\arabic{figure}}
\setcounter{table}{0}
\renewcommand{\thetable}{S\arabic{table}}
\setcounter{theorem}{0}
\renewcommand{\thetheorem}{S\arabic{theorem}}
\setcounter{lemma}{0}
\renewcommand{\thelemma}{S\arabic{lemma}}
\setcounter{proposition}{0}
\renewcommand{\theproposition}{S\arabic{proposition}}
\begin{center}
{\Large\bf SUPPLEMENTARY MATERIAL of \\``Beyond Data Splitting: Full-Data Conformal
Prediction by Differential Privacy''}
\end{center}

In this supplementary material, we provide background, technical details, and additional experiments. Section~\ref{sec:extended_cp_intro} gives a complementary introduction to conformal prediction, adding details beyond the main text on Full/Split CP and cross-validation variants and clarifying the retraining bottleneck addressed by DP-SCP. Section~\ref{sec:supp_dp_intro} summarises the $f$-DP view of differential privacy and DP-SGD. Section~\ref{sec:assumption_validation} verifies the assumptions used in our main results. Section~\ref{sec:supp-proofs} collects all omitted proofs. Section~\ref{sec:quantile_discussion} discusses DP quantile estimation and motivates our buffered right-endpoint search. Section~\ref{supp:extended numerical studies} reports additional experiments.

\section{Extended Introduction to Conformal Prediction}
\label{sec:extended_cp_intro}

In this section, we provide a detailed overview of standard CP methodologies, ranging from the statistically efficient but computationally expensive Full CP to the computationally efficient Split CP, and finally to cross-validation-based methods such as Jackknife+ and CV+.\\

\noindent\textbf{Full Conformal Prediction} Full Conformal Prediction (Full-CP) represents the ideal in terms of statistical efficiency. Let $D_n = \{(X_i, Y_i)\}_{i=1}^n$ be the training data and $X_{n+1}$ be a test point. For a chosen non-conformity score function $s(x, y; \theta)$, Full-CP operates by augmenting the dataset with a candidate label $y$ for the test point, forming $D_{n+1}^y = D_n \cup \{(X_{n+1}, y)\}$.

The procedure requires retraining the model on $D_{n+1}^y$ for every potential candidate $y \in \mathcal{Y}$. Let $\hat{\mu}_y$ denote the model trained on $D_{n+1}^y$. The conformity score for the test point is $S_{n+1}^y = s(X_{n+1}, y; \hat{\mu}_y)$, and for training points, $S_i^y = s(X_i, Y_i; \hat{\mu}_y)$. The prediction set is constructed as:
\begin{equation*}
    \mathcal{C}_{\text{Full}}(X_{n+1}) = \left\{ y \in \mathcal{Y} : S_{n+1}^y \le \text{Quantile}\left( \{S_i^y\}_{i=1}^n \cup \{S_{n+1}^y\}; 1-\alpha \right) \right\}.
\end{equation*}
While Full-CP utilises the entire dataset for both training and calibration, ensuring maximal statistical efficiency, its computational complexity is $\mathcal{O}(|\mathcal{Y}| \cdot C_{\text{train}})$, where $C_{\text{train}}$ is the cost of training the model. For regression or continuous spaces, this is computationally intractable. Even for classification, retraining deep neural networks for every class is prohibitive.\\

\noindent\textbf{Split Conformal Prediction} To mitigate the computational burden of Full-CP, Split Conformal Prediction (Split-CP) partitions the data $D_n$ into two disjoint subsets: a proper training set $D_{\text{train}}$ and a calibration set $D_{\text{calib}}$. The model $\hat{\mu}$ is trained only once on $D_{\text{train}}$. The non-conformity scores are computed on $D_{\text{calib}}$ using the fixed model $\hat{\mu}$. The prediction set is:
\begin{equation*}
    \mathcal{C}_{\text{Split}}(X_{n+1}) = \{ y \in \mathcal{Y} : s(X_{n+1}, y; \hat{\mu}) \le \hat{q} \},
\end{equation*}
where $\hat{q}$ is the $(1-\alpha)(1 + 1/|D_{\text{calib}}|)$-th quantile of the scores in $D_{\text{calib}}$. While Split-CP reduces the computational cost to $O(1 \cdot C_{\text{train}})$, it suffers from statistical inefficiency because only a subset of data is used for training, and the finite sample correction for the quantile grows as $|D_{\text{calib}}|$ decreases.\\

\noindent\textbf{Cross-Validation Methods} To bridge the gap between the statistical efficiency of Full-CP and the computational feasibility of Split-CP, cross-validation-based methods such as Jackknife+ and CV+ have been proposed \cite{barber2021predictive}.\\

\noindent\textbf{Jackknife+.} It utilises leave-one-out (LOO) models $\hat{\mu}_{-i}$ trained on $\mathcal{D}_n \setminus \{(X_i, Y_i)\}$. The prediction set is constructed by aggregating the LOO residuals:
\begin{equation*}
    \mathcal{C}_{\text{Jack+}}(X_{n+1}) = \left[ \hat{q}^-_{n, \alpha}(\{\hat{\mu}_{-i}(X_{n+1}) - R_i^{\text{LOO}}\}), \quad \hat{q}^+_{n, \alpha}(\{\hat{\mu}_{-i}(X_{n+1}) + R_i^{\text{LOO}}\}) \right],
\end{equation*}
where $R_i^{\text{LOO}} = |Y_i - \hat{\mu}_{-i}(X_i)|$ are the leave-one-out residuals. While Jackknife+ uses the full data for training (aggregating $n$ models), it requires training $n$ separate models, leading to a complexity of $\mathcal{O}(n \cdot C_{\text{train}})$.

\noindent\textbf{CV+.} To reduce the cost of Jackknife+, CV+ employs $K$-fold cross-validation. The data is split into $K$ disjoint folds. For each fold $k$, a model $\hat{\mu}_{-k}$ is trained on the data excluding that fold. The prediction set is formed similarly to Jackknife+ but using the $K$ models. Although cheaper than Jackknife+, CV+ still requires training $K$ distinct models (e.g., $K=5$ or $10$), incurring a cost of $\mathcal{O}(K \cdot C_{\text{train}})$.\\

\noindent\textbf{Summary.} While methods like Full-CP, Jackknife+, and CV+ improve data efficiency compared to Split-CP, they inherently rely on retraining the model multiple times. In the context of modern deep learning, where training a single model is resource-intensive, even a $K$-fold overhead is often unacceptable.

This highlights the unique advantage of our proposed DP-SCP framework. As DP-SCP enables the use of the full dataset for both training and calibration \emph{without} the need for retraining or data splitting, it achieves the statistical benefits of full-data methods with a computational cost comparable to Split-CP.

\section{Extended Introduction to Differential Privacy} \label{sec:supp_dp_intro}

Differential privacy (DP) has been characterised in many different ways. At its core, DP is about quantifying how similar the two output distributions should be when the underlying datasets differ in a single entry. Different choices of similarity measures naturally lead to different DP definitions, and many of these variants were developed to enable tighter privacy accounting under composition. In this section, we introduce several notions, starting from the most general and natural framework and then moving to definitions that remain widely used in practice for their own reasons.

DP can be naturally cast as a hypothesis testing problem \citep{wasserman2010statistical, dong2022gaussian}. Consider a randomized mechanism $M: \mathcal{D} \to \mathcal{Y}$ and two adjacent datasets $D$ and $D'$. An adversary observing the output $y = M(\cdot)$ seeks to distinguish between the hypotheses:
\begin{equation*}
    H_0: \text{The underlying dataset is } D \quad \text{vs.} \quad H_1: \text{The underlying dataset is } D'.
\end{equation*}
Let $P = M(D)$ and $Q = M(D')$ denote the probability distributions of the outputs under $H_0$ and $H_1$, respectively. The difficulty of this testing problem is fully characterised by the trade-off between Type I error ($\alpha$) and Type II error ($\beta$).

\begin{definition}[Trade-off Function \citep{dong2022gaussian}]
    For any two probability distributions $P$ and $Q$, the trade-off function $T(P, Q): [0, 1] \to [0, 1]$ is defined as the minimum achievable Type II error for a given Type I error $\alpha$:
    \begin{equation}
        T(P, Q)(\alpha) = \inf_{\phi} \left\{ 1 - \mathbb{E}_Q[\phi] : \mathbb{E}_P[\phi] \le \alpha \right\},
    \end{equation}
    where the infimum is taken over all measurable rejection rules (tests) $\phi: \mathcal{Y} \to [0, 1]$.
\end{definition}

A function $f: [0, 1] \to [0, 1]$ is a valid trade-off function if and only if it is convex, continuous, non-increasing, and satisfies $f(x) \le 1-x$ for all $x \in [0, 1]$. The condition $T(P, Q) \ge f$ (pointwise inequality) implies that distinguishing $P$ from $Q$ is at least as hard as the problem characterised by $f$.

The $f$-DP framework \citep{dong2022gaussian} generalises DP by parametrising the privacy guarantee directly via a trade-off function $f$.

\begin{definition}[$f$-Differential Privacy \citep{dong2022gaussian}]
    Let $f$ be a symmetric trade-off function. A mechanism $M$ is said to be $f$-differentially private ($f$-DP) if for all adjacent datasets $D, D'$, we have
    \begin{equation*}
        T(M(D), M(D'))(\alpha) \ge f(\alpha), \quad \forall \alpha \in [0, 1].
    \end{equation*}
\end{definition}

As we have presented in Section~\ref{sec:background}, $f$-DP has two standard subclasses.

\begin{definition}[$(\varepsilon,\delta)$-Differential Privacy \citep{dwork2006calibrating}]
For $\varepsilon\ge 0$ and $\delta\in[0,1]$, a mechanism $M$ is $(\varepsilon,\delta)$-DP if for all adjacent datasets $D,D'$ and all measurable sets $A$, $\mathbb P(M(D)\in A) \le e^{\varepsilon}\,\mathbb P(M(D')\in A)+\delta.$
\end{definition}

The $(\varepsilon,\delta)$-DP definition is one of the earliest formulations of DP and, arguably, remains the most widely used in practice. Under the $f$-DP framework, $(\varepsilon,\delta)$-DP is equivalent to $f_{\varepsilon,\delta}$-DP where $f_{\varepsilon,\delta}\max\Bigl\{0,\ 1-\delta-e^{\varepsilon}\alpha,\ e^{-\varepsilon}(1-\delta-\alpha)\Bigr\},$, for $\alpha\in[0,1].$ 
\begin{definition}[Gaussian Differential Privacy \citep{dong2022gaussian}] For $\mu>0$, let $G_{\mu}(\alpha)
=
T\!\left(\mathcal N(0,1),\,\mathcal N(\mu,1)\right)(\alpha),$ for $\alpha\in[0,1].$
A mechanism $M$ is $\mu$-GDP if it is $G_{\mu}$-DP.
\end{definition}

The tradeoff function for GDP has the explicit form $G_{\mu}(\alpha)=\Phi\!\Bigl(\Phi^{-1}(1-\alpha)-\mu\Bigr),$
where $\Phi$ is the standard normal distribution function.

\begin{proposition}[Conversion between $\mu$-GDP and $(\varepsilon,\delta)$-DP \citep{balle2018improving}] If $M$ is $\mu$-GDP, then for every $\varepsilon\ge 0$, $M$ is $(\varepsilon,\delta(\varepsilon;\mu))$-DP with
\begin{equation*}
\delta(\varepsilon;\mu)
=
\Phi\!\Bigl(-\frac{\varepsilon}{\mu}+\frac{\mu}{2}\Bigr)
-
e^{\varepsilon}\,
\Phi\!\Bigl(-\frac{\varepsilon}{\mu}-\frac{\mu}{2}\Bigr).
\end{equation*}
\end{proposition}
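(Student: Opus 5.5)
The plan is to reduce the statement to a pointwise comparison of tradeoff functions. By the $f$-DP characterisation recalled in Section~\ref{sec:background}, $(\varepsilon,\delta)$-DP is equivalent to $f_{\varepsilon,\delta}$-DP. Since $\mu$-GDP gives $T(M(D),M(D'))\ge G_\mu$, it suffices to show that $G_\mu \ge f_{\varepsilon,\delta(\varepsilon;\mu)}$ pointwise on $[0,1]$.

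The function $f_{\varepsilon,\delta}$ is the maximum of three pieces, and I will handle each one separately.
\begin{itemize}
\item The piece $0$ is trivial, since $G_\mu\ge 0$.
\item For the piece $1-\delta-e^{\varepsilon}\alpha$, the condition $G_\mu(\alpha)\ge 1-\delta-e^\varepsilon\alpha$ for all $\alpha$ is equivalent to
\[
\delta \;\ge\; \sup_{\alpha\in[0,1]}\bigl\{1-e^\varepsilon\alpha-G_\mu(\alpha)\bigr\}.
\]
I will therefore show that this supremum equals exactly $\delta(\varepsilon;\mu)$.
\item For the piece $e^{-\varepsilon}(1-\delta-\alpha)$, I will use symmetry. $G_\mu$ is a symmetric tradeoff function: it is its own inverse, because $T(N(0,1),N(\mu,1))$ and $T(N(\mu,1),N(0,1))$ coincide by reflecting $x\mapsto \mu-x$. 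The third piece is the inverse (reflection across the diagonal) of the second. A tradeoff function dominating a line also dominates that line's reflection when the tradeoff function is self-inverse. So the third piece follows from the second.
\end{itemize}

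To compute the supremum, I will reparametrise $\alpha=\Phi(-t)$ with $t\in\mathbb{R}\cup\{\pm\infty\}$. Then $\Phi^{-1}(1-\alpha)=t$, so $G_\mu(\alpha)=\Phi(t-\mu)$. The objective becomes
\[
h(t)=1-e^\varepsilon\Phi(-t)-\Phi(t-\mu)=\Phi(\mu-t)-e^\varepsilon\Phi(-t).
\]
Differentiating gives
\[
h'(t)=e^\varepsilon\varphi(t)-\varphi(t-\mu).
\]
Its sign is the sign of $\varepsilon-\bigl(t\mu-\mu^2/2\bigr)$, which is strictly decreasing in $t$ (for $\mu>0$). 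Hence $h$ increases and then decreases, with a unique maximiser at $t^*=\varepsilon/\mu+\mu/2$. Substituting yields
\[
h(t^*)=\Phi\bigl(-\tfrac{\varepsilon}{\mu}+\tfrac{\mu}{2}\bigr)-e^\varepsilon\Phi\bigl(-\tfrac{\varepsilon}{\mu}-\tfrac{\mu}{2}\bigr)=\delta(\varepsilon;\mu).
\]
The boundary values $t\to\pm\infty$ give $h\to 0$ and $h\to 1-e^\varepsilon\le 0$ respectively, and both are dominated by the interior maximum. This is what confirms that $t^*$ is the global maximiser and that the supremum is attained there.

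The main obstacle I expect is the symmetry step, namely making rigorous that domination of the steep line implies domination of its reflected counterpart. I would handle this by writing $f_{\varepsilon,\delta}$ as the symmetrisation of the single line $\alpha\mapsto\max\{0,1-\delta-e^\varepsilon\alpha\}$ and invoking $G_\mu^{-1}=G_\mu$. If that becomes delicate, I would fall back on the direct definition of $(\varepsilon,\delta)$-DP through the privacy-loss random variable. For $P=N(0,1)$ and $Q=N(\mu,1)$ (after the sensitivity normalisation), the loss $L=\log(dP/dQ)$ is $N(\mu^2/2,\mu^2)$ under $P$ and $N(-\mu^2/2,\mu^2)$ under $Q$. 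The tight $\delta$ is then
\[
\delta=\sup_A\{P(A)-e^\varepsilon Q(A)\}=P(L>\varepsilon)-e^\varepsilon Q(L>\varepsilon),
\]
which evaluates to the same formula and is symmetric in $(P,Q)$ because the two Gaussians play interchangeable roles.
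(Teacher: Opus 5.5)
Your proof is correct and complete. The paper gives no proof of this proposition; it is imported from \citet{balle2018improving} and also appears in \citet{dong2022gaussian}. So there is no in-paper argument to compare against.

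What you have written is the standard primal--dual computation. For a symmetric tradeoff function $f$, the tight $\delta$ is $1+f^*(-e^{\varepsilon})=\sup_{\alpha\in[0,1]}\{1-e^{\varepsilon}\alpha-f(\alpha)\}$. Your substitution $\alpha=\Phi(-t)$, the sign analysis of $h'$, the stationary point $t^*=\varepsilon/\mu+\mu/2$, and the endpoint check evaluate this correctly. For the endpoints, unimodality with $h\to 0$ as $t\to\infty$ forces $h(t^*)>0\ge 1-e^{\varepsilon}$.

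The symmetry step you flagged as the main obstacle is a one-liner.
\begin{itemize}
\item With $f^{-1}(\beta)=\inf\{t\in[0,1]:f(t)\le\beta\}$, pointwise $f\ge g$ implies $f^{-1}\ge g^{-1}$, since the sublevel sets of $f$ sit inside those of $g$.
\item The line $\ell(\alpha)=\max\{0,1-\delta-e^{\varepsilon}\alpha\}$ has $\ell^{-1}(\beta)=\max\{0,e^{-\varepsilon}(1-\delta-\beta)\}$.
\item $G_\mu^{-1}=G_\mu$ follows directly from the closed form, since $G_\mu(G_\mu(\alpha))=\Phi\bigl(-\Phi^{-1}(1-\alpha)\bigr)=\alpha$.
\end{itemize}
Equivalently, substituting $\beta=G_\mu(\alpha)$ turns the third-piece inequality into the second.

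Your privacy-loss fallback, which is essentially the route of \citet{balle2018improving}, also yields the right formula. On its own, however, it only computes the tight $\delta$ for the pair $N(0,1),N(\mu,1)$. To pass to an arbitrary $\mu$-GDP mechanism, you would still need to show that any pair $(P',Q')$ with $T(P',Q')\ge G_\mu$ has $\sup_A\{P'(A)-e^{\varepsilon}Q'(A)\}$ no larger than the Gaussian pair's value. That is exactly the tradeoff-function domination in your main route, so the main argument is the one to keep.
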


While our theoretical development is stated in the $f$-DP and GDP, our numerical studies follow the privacy accounting via R\'enyi DP (RDP), which is convenient for composition and subsampling.

\begin{definition}[Rényi Differential Privacy (RDP) \citep{mironov2017renyi}]
Let $\alpha>1$. A mechanism $M$ satisfies $(\alpha,\varepsilon)$-RDP if for all adjacent datasets $D,D'$,
\begin{equation*}
D_\alpha\!\big(M(D)\,\|\,M(D')\big)\le \varepsilon,
\end{equation*}
where $D_{\alpha}$ is the R\'enyi divergence of order $\alpha$.
\end{definition}

Note that if a mechanism is $\mu$-GDP, then for every order $\alpha>1$,
\[
D_\alpha\!\bigl(M(D)\,\|\,M(D')\bigr)
\le
D_\alpha\!\bigl(N(0,1)\,\|\,N(\mu,1)\bigr)
=
\frac{\alpha\mu^2}{2}.
\]
Hence, a $\mu$-GDP mechanism is equivalently $(\alpha,\alpha\mu^2/2)$-RDP for every $\alpha>1$. 

RDP composes additively across adaptive sequential compositions, and it can be converted to an $(\varepsilon,\delta)$-DP. In our experiments, we use \texttt{Opacus} to track the RDP profile of subsampled Gaussian mechanisms during DP-SGD training and then convert to $(\varepsilon,\delta)$-DP. We defer a more detailed discussion of why we use RDP rather than GDP to the next subsection on DP-SGD, where privacy amplification by subsampling is discussed explicitly.

\subsection{Private Model Training via DP-SGD}

There are extensive literature on differentially private emprical risk minimization, including exponential mechanism \citep{mcsherry2007mechanism, gopi2022private}, objective perturbation \citep{chaudhuri2011differentially, cho2024privacy}, output perturbation \citep{chaudhuri2011differentially, zhang2017efficient, lowy2021output}. Among many, we employ Differentially Private Stochastic Gradient Descent (DP-SGD) \citep{abadi2016deep} to train our models.

\begin{algorithm}[t]
\caption{DP--SGD }
\label{alg:dp-sgd}
\begin{algorithmic}[1]
\Require dataset $D_n=\{z_i\}_{i=1}^n$, loss $\ell(\theta;z)$, initial $\theta_0\in\mathbb{R}^d$, steps $T$, learning rates $\{\eta_t\}_{t=0}^{T-1}$, subsampling probability \ $p\in(0,1)$, clipping norm $C>0$, noise multiplier $\sigma>0$
\For{$t=0,1,\dots,T-1$}
  \State sample $B_t\subset\{1,\dots,n\}$ by i.i.d.\ Bernoulli$(q)$ inclusion \Comment{Poisson Subsampling}
  \State compute $g_i\gets\nabla_\theta \ell(\theta_t;z_i)$ for $i\in B_t$
  \State clip $\bar g_i\gets g_i\cdot \min\{1,\,C/\|g_i\|_2\}$ for $i\in B_t$ \Comment{Gradient Clipping}
  \State draw $G_t\sim\mathcal{N}(0,I_d)$
  \State set $\displaystyle \tilde g_t\gets \frac{1}{|B_t|}\Big(\sum_{i\in B_t}\bar g_i+\sigma C\,G_t\Big)$
  \State update $\theta_{t+1}\gets \theta_t-\eta_t\,\tilde g_t$
\EndFor
\State \Return $\theta_T$
\end{algorithmic}
\end{algorithm}

Formally, at each training step $t$, a mini-batch $B_t$ is sampled from the dataset $D$. For each sample $x_i \in B_t$, the per-sample gradient $g_t(x_i) = \nabla_\theta \mathcal{L}(\theta_t, x_i)$ is computed. To ensure a bounded sensitivity, each gradient is clipped to a maximum $\ell_2$-norm $C$:
\begin{equation*}
    \bar{g}_t(x_i) = g_t(x_i) / \max\left(1, \frac{\|g_t(x_i)\|_2}{C}\right).
\end{equation*}
Subsequently, Gaussian noise is added to the sum of clipped gradients before updating the model parameters.\\

\noindent\textbf{Gradient Clipping} Gradient clipping plays a dichotomous role, serving as a practical necessity while simultaneously introducing significant theoretical hurdles.

From a practical standpoint, clipping is indispensable for training modern complex models. In these settings, the global sensitivity of the gradient is often intractable or theoretically unbounded, rendering standard mechanism design impossible without catastrophic noise injection. Clipping enforces a deterministic upper bound on the influence of any single individual. This creates a bounded sensitivity, allowing for the injection of calibrated noise to guarantee privacy without relying on worst-case assumptions about the data distribution.

However, this utility comes at a theoretical cost. The clipping renders the stochastic gradient a \emph{biased} estimator of the population gradient. This bias invalidates standard tools such as Polyak-Ruppert averaging \citep{polyak1992acceleration} inapplicable. Consequently, theoretical works often resort to assuming uniformly bounded gradients—effectively assuming clipping is inactive. This creates a substantial gap between theory and the practical regime where clipping is active.\\

\noindent\textbf{Privacy Amplification by Subsampling and Privacy Accounting}
A critical component of DP-SGD's privacy guarantee is \textit{privacy amplification by subsampling}. Intuitively, if a datapoint is not included in the batch, such datapoint enjoys the full privacy during the update as it does not contribute to the model update at all.

If a base mechanism $M$ satisfies $f$-DP, the subsampled mechanism $M \circ \text{Sample}_p$ satisfies $f_p$-DP, where the trade-off function $f_p$ is derived from the convex hull of $p f + (1-q) \text{Id}$ and its inverse \citep{dong2022gaussian, bu2020deep}. 

While $f$-DP is closed under subsampling, the subfamily of GDP is not. Specifically, if a base mechanism satisfies $\mu$-GDP (i.e., is $G_\mu$-DP), the subsampled mechanism does \textit{not} generally satisfy $\mu'$-GDP for any $\mu'$. 

To ensure rigorous privacy guarantees, practical libraries such as Opacus \citep{yousefpour2021opacus} typically utilise Rényi Differential Privacy (RDP) for accounting. RDP provides tight composition bounds for subsampled Gaussian mechanisms and allows for the exact tracking of the privacy budget across iterations. In our experiments, we adopt this standard approach, implemented via \texttt{Opacus}.

\subsection{Canonical Noise Distribution}

While $f$-DP provides a rigorous method for comparing privacy guarantees, a practical question remains: for a given target privacy curve $f$, what is the optimal noise distribution to add? This motivates the concept of a \textit{Canonical Noise Distribution} (CND), introduced by \cite{awan2023canonical}. In the context of additive mechanisms, where we release a statistic $\theta(D)$ by adding independent noise $Z$, a CND is a distribution designed to match the privacy guarantee $f$ exactly. 

\begin{definition}[Canonical Noise Distribution \citep{awan2023canonical}]
    Let $f$ be a symmetric nontrivial trade-off function. A continuous cumulative distribution function $F$ is a \textbf{Canonical Noise Distribution (CND)} for $f$ if:{
\setlength{\leftmargini}{1.5em} \begin{enumerate}
        \item For every statistic $S$ with sensitivity $\Delta > 0$ and noise $N \sim F$, the mechanism $S(D) + \Delta N$ satisfies $f$-DP.
        \item The privacy guarantee is tight, i.e., $T(F(\cdot), F(\cdot - 1)) = f$.
        \item The trade-off function satisfies $T(F(\cdot), F(\cdot - 1))(\alpha) = F(F^{-1}(1-\alpha) - 1)$ for all $\alpha \in (0, 1)$.
        \item $F$ corresponds to a random variable symmetric about zero, i.e., $F(x) = 1 - F(-x)$ for all $x \in \mathbb{R}$.
    \end{enumerate}
    }
\end{definition}

The significance of a CND lies in its utility optimality for additive mechanisms. If a mechanism adds noise distributed according to a CND (scaled by the sensitivity of the query), it satisfies $f$-DP in a \textit{lossless} manner.

\section{Validation of Assumptions}\label{sec:assumption_validation}

This section expands on the discussions in Example~\ref{ex:reg} and Remark~\ref{remark: dp mech stab}. We first discuss the exchangeability of the ideal scores. In the i.i.d.\ setting, exchangeability of the data is automatic, but exchangeability of the resulting ideal scores additionally requires the training mechanism to preserve this symmetry. We then revisit the regression example in detail. 

\subsection{Exchangeability of the ideal scores}

The ideal scores are computed from the model $\theta_{n+1}$ trained on the full $D_{n+1}$. The role of permutation invariance is essential. To see this, consider the following:
\begin{proposition}[Necessity of Permutation Invariance]
\label{prop:permutation_invariance}
Let $Z_{1},\dots,Z_{n+1}$ be a sequence of i.i.d.\ random variables. Let $M$ be a mechanism mapping a dataset to a model parameter, $\theta_{n+1} = M(\{Z_{i}\}_{i=1}^{n+1})$, and let $s(z;\theta)$ be a non-conformity score function. If $M$ is not permutation invariant, the resulting sequence of in-sample scores $S_{i} = s(Z_{i};\theta_{n+1})$ for $i=1,\dots,n+1$ is not, in general, exchangeable.
\end{proposition}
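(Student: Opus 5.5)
The statement is a "not in general" claim, so the proof is a single explicit counterexample. I will build an i.i.d.\ sequence, a non-permutation-invariant mechanism $M$, and a score $s$ for which $(S_1,\dots,S_{n+1})$ fails exchangeability. It suffices to show one permutation changes the joint law. Since exchangeability forces all one-dimensional marginals to coincide, it is enough to find $i\neq j$ with $S_i \not\stackrel{d}{=} S_j$.

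\textbf{Construction.} Take $Z_1,\dots,Z_{n+1}$ i.i.d.\ $N(0,1)$, or any nondegenerate law. Let the mechanism depend only on the first coordinate of its input list, $M(\{Z_i\}_{i=1}^{n+1}) := Z_1$, so $\theta_{n+1}=Z_1$. This map is clearly not permutation invariant. Use the score $s(z;\theta)=|z-\theta|$, which is of the regression-residual type considered in Example~\ref{ex:reg}.

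\textbf{Verification.} With these choices $S_1 = |Z_1-Z_1| = 0$ almost surely. For any $j\ge 2$, $S_j=|Z_j-Z_1|$, and $Z_j-Z_1\sim N(0,2)$ has a continuous law, so $\mathbb P(S_j=0)=0$. Hence $S_1$ and $S_j$ have different marginals. Exchangeability would force
\[
(S_1,\dots,S_{n+1}) \stackrel{d}{=} (S_{\pi(1)},\dots,S_{\pi(n+1)}),
\]
and taking $\pi$ to be the transposition of $1$ and $j$, the first coordinates would then need to share a law. That is a contradiction.

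\textbf{Refinement.} If a less trivial example is wanted, replace $M$ with a weighted estimator $\theta=\sum_i w_i Z_i$ with unequal weights. Then $S_i=|Z_i-\theta|$ has variance $(1-w_i)^2+\sum_{k\neq i}w_k^2$, which depends on $i$. The marginals therefore again differ, giving a more realistic counterexample.

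The only step needing any care is making explicit that equal marginals are necessary for exchangeability. This follows immediately from the permutation definition applied to a transposition, so I do not expect a real obstacle. The degenerate $n=0$ case is excluded, because there we have only one score and exchangeability is trivial; I will state $n\ge1$.
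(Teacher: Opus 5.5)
Your proof is correct and is essentially the paper's own argument. The paper uses the same counterexample $M(Z_1,\dots,Z_{n+1})=Z_1$ with score $|z-\theta|$, observing that $S_1\equiv 0$ while $S_i$ for $i\neq 1$ is nondegenerate, so the marginals differ and exchangeability fails. Your weighted-estimator refinement is a valid optional extra (the variance $1-2w_i+\sum_k w_k^2$ does depend on $i$), but it is not needed.
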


\begin{proof} Recall that exchangeability implies that the random variables $S_1, \dots, S_{n+1}$ must be identically distributed. We prove the proposition by constructing a counterexample where the lack of permutation invariance leads to a violation of this condition.

Consider a deterministic learning algorithm $M$ that simply selects the first element of the dataset as the model parameter:
\begin{align*}
    \theta_{n+1} = M(Z_{1},\dots,Z_{n+1}) = Z_{1}.
\end{align*}
Since the output depends on the input order, $M$ is clearly not permutation invariant. Let the score function be the absolute deviation, $s(z;\theta) = \vert z - \theta \vert$. The resulting in-sample scores are:
\begin{align*}
    S_{1} &= \vert Z_{1} - \theta_{n+1} \vert = \vert Z_{1} - Z_{1} \vert = 0, \\
    S_{i} &= \vert Z_{i} - \theta_{n+1} \vert = \vert Z_{i} - Z_{1} \vert, \quad \text{for } i \neq 1.
\end{align*}
Assuming $Z_i$ follows a non-degenerate distribution, $S_1$ is deterministically zero, whereas $S_i$ for $i > 1$ is a non-degenerate random variable. Consequently, $S_1$ and $S_i$ do not share the same marginal distribution. This violation of the identically distributed property implies that the sequence $\{S_i\}_{i=1}^{n+1}$ is not exchangeable.
\end{proof}

Many private mechanisms of interest are permutation invariant. This includes DP-SGD with Poisson subsampling or random shuffling, output perturbation, objective perturbation. In all of these cases, the dataset is treated as a multiset rather than as an ordered list, so the symmetry of the i.i.d.\ sample is preserved.

\subsection{Detailed discussion of the regression example}

Consider regression model $Y=f_0(X)+\varepsilon,$ and the residual-type score $s((X,Y);\theta)=|Y-f_\theta(X)|.$ We discuss Assumptions~\ref{ass:lipschitz}, \ref{ass:noties}, and \ref{ass:anti} in turn.

For the residual score, Assumption~\ref{ass:lipschitz} is inherited directly from a Lipschitz bound on $f_\theta(x)$ in the parameter $\theta$. Indeed, by the reverse triangle inequality,
\begin{equation*}
\bigl||Y-f_\theta(X)|-|Y-f_{\theta'}(X)|\bigr|
\le
|f_\theta(X)-f_{\theta'}(X)|.
\end{equation*}
Hence it suffices to verify parameter Lipschitzness of the predictor itself.

This occurs in several standard model families. For linear regression, $f_\theta(x)=\theta^\top x$, if $\|x\|_2\le B$, then $|f_\theta(x)-f_{\theta'}(x)|
=
|(\theta-\theta')^\top x|
\le
\|x\|_2\|\theta-\theta'\|_2
\le
B\|\theta-\theta'\|_2.
$
Thus Assumption~\ref{ass:lipschitz} holds with $L=B$.

More generally, for a single-index or GLM-type predictor $f_\theta(x)=g(\theta^\top x),$
if $g$ is Lipschitz with constant $\mathrm{Lip}(g)$ and $\|x\|_2\le B$, then
\begin{equation*}
|f_\theta(x)-f_{\theta'}(x)|
=
|g(\theta^\top x)-g(\theta'^\top x)|
\le
\mathrm{Lip}(g)\,|(\theta-\theta')^\top x|
\le
\mathrm{Lip}(g)\,B\,\|\theta-\theta'\|_2.
\end{equation*}
This covers linear regression as the special case $g(t)=t$, and also includes many standard mean models with bounded-slope link functions.

Another important class is $f_\theta(x)=\theta^\top \phi(x),$ where $\phi(x)$ is a fixed feature map. If $\|\phi(x)\|_2\le B_\phi$, then
\begin{equation*}
|f_\theta(x)-f_{\theta'}(x)|
=
|(\theta-\theta')^\top \phi(x)|
\le
\|\phi(x)\|_2\|\theta-\theta'\|_2
\le
B_\phi\|\theta-\theta'\|_2.
\end{equation*}
This covers common transfer-learning pipelines in which a pretrained deep or language-model representation is frozen and only a linear head is trained on top \citep{yosinski2014transferable,devlin2019bert,chen2020simple}.

We next turn to Assumptions~\ref{ass:noties} and \ref{ass:anti}. For the ideal score,
\begin{equation*}
S_i^{(n+1)}
=
|Y_i-f_{\theta_{n+1}}(X_i)|
=
|\varepsilon_i+f_0(X_i)-f_{\theta_{n+1}}(X_i)|.
\end{equation*}
Thus the score is obtained by transforming the noise coordinate $\varepsilon_i$ through the fitted model. In the full-data setting, the fitted parameter $\theta_{n+1}$ depends on the entire sample, so the induced score law is not simply a fixed shift of the noise. To analyze this dependence, we vary only the $i$th response coordinate.

For $u\in\mathbb{R}$, let
\begin{equation*}
D_{n+1}^{(i,u)}
=
\{(X_1,Y_1),\dots,(X_{i-1},Y_{i-1}),(X_i,f_0(X_i)+u),(X_{i+1},Y_{i+1}),\dots,(X_{n+1},Y_{n+1})\},
\end{equation*}
and write
\begin{equation*}
\theta_{n+1}^{(i,u)}:=M_{\mathrm{train}}(D_{n+1}^{(i,u)}).
\end{equation*}
Define the score map
\begin{equation*}
h_i(u):=
\bigl|u+f_0(X_i)-f_{\theta_{n+1}^{(i,u)}}(X_i)\bigr|.
\end{equation*}
Then
\begin{equation*}
S_i^{(n+1)}=h_i(\varepsilon_i).
\end{equation*}
The local regularity of the ideal score distribution is therefore governed by the behavior of $u\mapsto h_i(u)$.

The key point is that anti-concentration is a no-pile-up condition near the cutoff. If the score map becomes too flat near the relevant region, then a wide interval of noise values may be compressed into a narrow interval of score values, leading to an excessive concentration of score mass near $q_*$. Conversely, if the score map retains non-negligible local slope, then the local boundedness of the noise density is transferred to the score density.

To make this precise, suppose that, conditional on all randomness except $\varepsilon_i$, the map $h_i$ is piecewise $C^1$ in the relevant region, each level $t$ near the cutoff has at most two preimages under $h_i$, and there exists a constant $c>0$ such that
\begin{equation*}
|h_i'(u)|\ge c
\end{equation*}
whenever $h_i(u)$ lies in a neighborhood of $q_*$. Suppose also that the noise variable $\varepsilon_i$ admits a density $f_{\varepsilon_i}$ satisfying
\begin{equation*}
f_{\varepsilon_i}(u)\le M
\end{equation*}
throughout the corresponding region in the $u$-space. Then the change-of-variables yields
\begin{equation*}
f_{S_i^{(n+1)}}(t)
=
\sum_{u:h_i(u)=t}\frac{f_{\varepsilon_i}(u)}{|h_i'(u)|},
\end{equation*}
and hence
\begin{equation*}
f_{S_i^{(n+1)}}(t)\le \frac{2M}{c}
\end{equation*}
for $t$ near $q_*$. Therefore, for sufficiently small $\delta>0$,
\begin{equation*}
\mathbb P\!\left(q_*<S_i^{(n+1)}\le q_*+\delta\right)
\le
\frac{2M}{c}\,\delta,
\end{equation*}
and similarly,
\begin{equation*}
\mathbb P\!\left(q_*-\delta\le S_i^{(n+1)}<q_*\right)
\le
\frac{2M}{c}\,\delta.
\end{equation*}
This is exactly the form required by Assumption~\ref{ass:anti}. The same reasoning also shows that the law of $S_i^{(n+1)}$ is non-atomic, and hence exact ties occur with probability zero, provided $h_i$ does not collapse a nontrivial interval to a single point.

A particularly transparent sufficient condition is obtained by differentiating the fitted value itself. Write
\begin{equation*}
g_i(u):=u+f_0(X_i)-f_{\theta_{n+1}^{(i,u)}}(X_i),
\qquad
h_i(u)=|g_i(u)|.
\end{equation*}
If
\begin{equation*}
\left|
\frac{d}{du}f_{\theta_{n+1}^{(i,u)}}(X_i)
\right|
\le \kappa
\qquad\text{for some }\kappa<1,
\end{equation*}
then
\begin{equation*}
|g_i'(u)|
=
\left|1-\frac{d}{du}f_{\theta_{n+1}^{(i,u)}}(X_i)\right|
\ge 1-\kappa.
\end{equation*}
Away from the fold point of the absolute value,
\begin{equation*}
|h_i'(u)|=|g_i'(u)|\ge 1-\kappa.
\end{equation*}
Thus the preceding argument applies with
\begin{equation*}
c=1-\kappa,
\qquad
\overline f=\frac{2M}{1-\kappa}.
\end{equation*}
This condition has a simple interpretation. The fitted value at $X_i$ is not allowed to track the $i$th response coordinate one-for-one. Equivalently, the model cannot overfit a single noise realization so aggressively that the residual score becomes locally flat.

This derivative condition is natural in several important classes of estimators.

For linear smoothers, suppose the fitted values satisfy
\begin{equation*}
\widehat Y = S Y
\end{equation*}
for some smoother matrix $S$. This includes ordinary least squares, ridge regression, kernel ridge regression, spline smoothers, and linear-head models trained by least squares or ridge regression on a frozen feature map. If only the $i$th response coordinate is varied, then
\begin{equation*}
\widehat Y_i(u)=c_i+S_{ii}u
\end{equation*}
for some constant $c_i$ depending on the remaining data. Therefore
\begin{equation*}
Y_i-\widehat Y_i(u)=\widetilde c_i+(1-S_{ii})u,
\end{equation*}
and hence
\begin{equation*}
h_i(u)=|\widetilde c_i+(1-S_{ii})u|.
\end{equation*}
Away from the fold point,
\begin{equation*}
|h_i'(u)|=|1-S_{ii}|.
\end{equation*}
Thus the no-flattening condition reduces to a leverage condition. If $S_{ii}$ is bounded away from one, then the score map retains nonzero slope and pile-up near the cutoff is ruled out. In particular, if the noise density is locally bounded by $M$, then
\begin{equation*}
f_{S_i^{(n+1)}}(t)\le \frac{2M}{|1-S_{ii}|}
\end{equation*}
in the relevant neighborhood.

A similar phenomenon appears for regularized smooth M-estimation. Consider an estimator of the form
\begin{equation*}
\hat\theta(u)\in\arg\min_\theta
\frac1n\sum_{j\neq i}\ell(Y_j,X_j;\theta)
+
\frac1n\ell(f_0(X_i)+u,X_i;\theta)
+
\lambda R(\theta),
\end{equation*}
where $\ell$ and $R$ are twice differentiable. Let the first-order condition be
\begin{equation*}
\Psi(\theta,u)=0.
\end{equation*}
Assume that the Hessian
\begin{equation*}
H(\theta,u):=\nabla_\theta \Psi(\theta,u)
\end{equation*}
is uniformly invertible with
\begin{equation*}
\|H(\theta,u)^{-1}\|_{\mathrm{op}}\le C_H,
\end{equation*}
that the predictor gradient is bounded by
\begin{equation*}
\|\nabla_\theta f_\theta(X_i)\|_2\le B_f,
\end{equation*}
and that
\begin{equation*}
\left\|
\frac{\partial}{\partial u}\Psi(\theta,u)
\right\|_2
\le
\frac{B_\Psi}{n}.
\end{equation*}
The factor $1/n$ reflects the fact that only one out of $n$ loss terms depends on $u$. By implicit differentiation,
\begin{equation*}
\frac{d\hat\theta(u)}{du}
=
-
H(\hat\theta(u),u)^{-1}
\frac{\partial}{\partial u}\Psi(\hat\theta(u),u),
\end{equation*}
so
\begin{equation*}
\left\|
\frac{d\hat\theta(u)}{du}
\right\|_2
\le
\frac{C_H B_\Psi}{n}.
\end{equation*}
Applying the chain rule,
\begin{equation*}
\frac{d}{du}f_{\hat\theta(u)}(X_i)
=
\nabla_\theta f_{\hat\theta(u)}(X_i)^\top \frac{d\hat\theta(u)}{du},
\end{equation*}
hence
\begin{equation*}
\left|
\frac{d}{du}f_{\hat\theta(u)}(X_i)
\right|
\le
\frac{B_f C_H B_\Psi}{n}.
\end{equation*}
Therefore the derivative condition
\begin{equation*}
\left|
\frac{d}{du}f_{\hat\theta(u)}(X_i)
\right|
\le \kappa<1
\end{equation*}
holds automatically for all sufficiently large $n$. This covers regularized linear regression, regularized GLMs, and fixed-representation models with a trainable linear head under smooth convex losses.

These arguments show that Assumptions~\ref{ass:noties} and \ref{ass:anti} are mild in standard continuous regression pipelines. At the same time, they are not automatic for arbitrary private mechanisms. A mechanism may satisfy stability trivially, for instance by always returning the same predictor, while still yielding a statistically degenerate or uninformative score law. More generally, a mechanism that discretizes or heavily quantizes its output may violate no ties or create pile-up near the cutoff even if privacy is maintained. The point is not that privacy alone forces score regularity, but rather that these conditions are natural and verifiable in the standard continuous settings considered here.

\subsection{Detailed discussion of mechanisms satisfying Assumption~\ref{ass:stab}}

We now discuss the private procedures mentioned in Remark~\ref{remark: dp mech stab}. A common pattern will emerge. Under add-or-delete adjacency, one can couple the randomness used on $D_n$ and $D_{n+1}$ so that the difference between the two outputs splits into an add-one deterministic effect and a smaller discrepancy created by the change in perturbation scale.

For output perturbation, let $\hat\theta_n$ be a non-private estimator computed from $D_n$, and define the private release
\begin{equation*}
\theta_n=\hat\theta_n+\frac{1}{n}\xi,
\qquad
\theta_{n+1}=\hat\theta_{n+1}+\frac{1}{n+1}\xi,
\end{equation*}
where the same base noise vector $\xi$ is used in both releases. Then
\begin{equation*}
\|\theta_{n+1}-\theta_n\|_2
\le
\|\hat\theta_{n+1}-\hat\theta_n\|_2
+
\left|\frac1{n+1}-\frac1n\right|\|\xi\|_2
=
\|\hat\theta_{n+1}-\hat\theta_n\|_2
+
\frac{1}{n(n+1)}\|\xi\|_2.
\end{equation*}
Hence Assumption~\ref{ass:stab} follows as soon as the underlying non-private estimator has an add-one stability bound.

For regularized ERM, such a bound is natural under standard conditions. Consider
\begin{equation*}
F_n(\theta)
=
\frac1n\sum_{i=1}^n \ell(\theta;Z_i)+\frac{\lambda}{2}\|\theta\|_2^2,
\qquad
\hat\theta_n\in\arg\min_\theta F_n(\theta),
\end{equation*}
and assume each $\ell(\cdot;z)$ is $G$-Lipschitz. Since $F_n$ is $\lambda$-strongly convex,
\begin{equation*}
F_n(\theta)\ge F_n(\hat\theta_n)+\frac{\lambda}{2}\|\theta-\hat\theta_n\|_2^2.
\end{equation*}
Write $\Delta=\hat\theta_{n+1}-\hat\theta_n$. Using
\begin{equation*}
F_{n+1}(\theta)
=
\frac{n}{n+1}F_n(\theta)+\frac{1}{n+1}\ell(\theta;Z_{n+1}),
\end{equation*}
and the optimality of $\hat\theta_{n+1}$, we have
\begin{equation*}
0
\le
F_{n+1}(\hat\theta_n)-F_{n+1}(\hat\theta_{n+1}).
\end{equation*}
Expanding the right-hand side gives
\begin{equation*}
0
\le
\frac{n}{n+1}\bigl(F_n(\hat\theta_n)-F_n(\hat\theta_{n+1})\bigr)
+
\frac{1}{n+1}\bigl(\ell(\hat\theta_n;Z_{n+1})-\ell(\hat\theta_{n+1};Z_{n+1})\bigr).
\end{equation*}
Strong convexity yields
\begin{equation*}
F_n(\hat\theta_{n+1})-F_n(\hat\theta_n)\ge \frac{\lambda}{2}\|\Delta\|_2^2,
\end{equation*}
while $G$-Lipschitzness gives
\begin{equation*}
\ell(\hat\theta_n;Z_{n+1})-\ell(\hat\theta_{n+1};Z_{n+1})\le G\|\Delta\|_2.
\end{equation*}
Combining these inequalities,
\begin{equation*}
0
\le
-\frac{n\lambda}{2(n+1)}\|\Delta\|_2^2
+
\frac{G}{n+1}\|\Delta\|_2.
\end{equation*}
Therefore
\begin{equation*}
\|\hat\theta_{n+1}-\hat\theta_n\|_2\le \frac{2G}{\lambda n}.
\end{equation*}
This recovers the familiar $O(1/n)$ add-one stability regime for regularized ERM and is consistent with the broader algorithmic stability perspective of \citet{bousquet2002stability}. Consequently, output perturbation satisfies Assumption~\ref{ass:stab} with
\begin{equation*}
u_n=\frac{2G}{\lambda n}+\frac{t_n}{n(n+1)},
\qquad
\delta_n=\mathbb{P}(\|\xi\|_2>t_n).
\end{equation*}

A closely related argument applies to objective perturbation. Consider the coupled private estimators
\begin{equation*}
\theta_n=M_n(b/n),
\qquad
\theta_{n+1}=M_{n+1}(b/(n+1)),
\end{equation*}
where for a vector $c\in\mathbb{R}^d$,
\begin{equation*}
M_m(c)\in\arg\min_\theta \Bigl\{F_m(\theta)+\langle c,\theta\rangle\Bigr\},
\end{equation*}
and the same perturbation vector $b$ is used for sample sizes $n$ and $n+1$.

Write
\begin{equation*}
\|\theta_{n+1}-\theta_n\|_2
\le
\|M_{n+1}(b/(n+1))-M_{n+1}(b/n)\|_2
+
\|M_{n+1}(b/n)-M_n(b/n)\|_2.
\end{equation*}
The first term quantifies the effect of changing the perturbation scale while keeping the sample size fixed. Since $F_{n+1}$ is $\lambda$-strongly convex, the argmin map is $1/\lambda$-Lipschitz in the linear perturbation. Indeed, if $\theta=M_{n+1}(c)$ and $\theta'=M_{n+1}(c')$, then
\begin{equation*}
\nabla F_{n+1}(\theta)+c=0,
\qquad
\nabla F_{n+1}(\theta')+c'=0.
\end{equation*}
Subtracting the two equations and using strong monotonicity of $\nabla F_{n+1}$,
\begin{equation*}
\lambda\|\theta-\theta'\|_2^2
\le
\langle c'-c,\theta-\theta'\rangle
\le
\|c-c'\|_2\|\theta-\theta'\|_2,
\end{equation*}
hence
\begin{equation*}
\|M_{n+1}(c)-M_{n+1}(c')\|_2\le \frac{1}{\lambda}\|c-c'\|_2.
\end{equation*}
Applying this with $c=b/n$ and $c'=b/(n+1)$ gives
\begin{equation*}
\|M_{n+1}(b/(n+1))-M_{n+1}(b/n)\|_2
\le
\frac{\|b\|_2}{\lambda n(n+1)}.
\end{equation*}
The second term is an add-one perturbation at fixed linear offset $b/n$, and under the same regularized ERM conditions as above it remains of order $O(1/n)$. Thus objective perturbation again fits Assumption~\ref{ass:stab} with
\begin{equation*}
u_n
=
a_n+\frac{t_n}{\lambda n(n+1)},
\qquad
\delta_n=\mathbb{P}(\|b\|_2>t_n),
\end{equation*}
where $a_n=O(1/n)$ denotes the deterministic add-one perturbation term.

A third important pattern arises when one first releases a privatized low-dimensional summary and then constructs the final estimator by deterministic post-processing. Consider a statistic
\begin{equation*}
T_n=T(D_n)\in\mathbb{R}^m
\end{equation*}
and a private release of the form
\begin{equation*}
\widetilde T_n=T_n+\frac{1}{n}\xi,
\qquad
\widetilde T_{n+1}=T_{n+1}+\frac{1}{n+1}\xi,
\end{equation*}
again coupled through the same noise vector $\xi$. Suppose the final estimator is obtained by a deterministic post-processing map
\begin{equation*}
\theta_n=G(\widetilde T_n),
\qquad
\theta_{n+1}=G(\widetilde T_{n+1}),
\end{equation*}
where $G$ is $L_G$-Lipschitz. Then
\begin{equation*}
\|\theta_{n+1}-\theta_n\|_2
\le
L_G\|\widetilde T_{n+1}-\widetilde T_n\|_2
\le
L_G\|T_{n+1}-T_n\|_2
+
\frac{L_G}{n(n+1)}\|\xi\|_2.
\end{equation*}

A particularly concrete case is
\begin{equation*}
T_n=\frac1n\sum_{i=1}^n \psi(Z_i),
\end{equation*}
with $\|\psi(Z)\|_2\le B$ almost surely. Then
\begin{equation*}
T_{n+1}-T_n
=
\frac{1}{n+1}\bigl(\psi(Z_{n+1})-T_n\bigr),
\end{equation*}
so
\begin{equation*}
\|T_{n+1}-T_n\|_2\le \frac{2B}{n+1}.
\end{equation*}
Therefore
\begin{equation*}
\|\theta_{n+1}-\theta_n\|_2
\le
\frac{2L_G B}{n+1}
+
\frac{L_G}{n(n+1)}\|\xi\|_2.
\end{equation*}
Thus Assumption~\ref{ass:stab} holds with
\begin{equation*}
u_n=\frac{2L_G B}{n+1}+\frac{L_G t_n}{n(n+1)},
\qquad
\delta_n=\mathbb{P}(\|\xi\|_2>t_n).
\end{equation*}
This pattern covers procedures that first privatize a low-dimensional summary, such as a sufficient statistic, and then construct the final estimator by post-processing \citep{yang2012differential}.

These three mechanisms share the same structure. Under a shared-randomness coupling, the difference between the outputs on $D_n$ and $D_{n+1}$ decomposes into an add-one deterministic effect together with a smaller discrepancy induced by the change from $1/n$ to $1/(n+1)$ in the perturbation scale. This is precisely the form encoded in Assumption~\ref{ass:stab}. Our default training mechanism, DP-SGD, fits the same general philosophy, although its analysis is more involved because perturbations are injected sequentially throughout the optimization trajectory. For this reason, DP-SGD is treated separately in Section~\ref{subsec:dpsgd_instantiation}.

\section{ Proofs}\label{sec:supp-proofs}

In this section, we provide the complete formal proofs for the theoretical results presented in the main text. 

\subsection{Proof of Lemma~\ref{lem:one_sided_qhat}}
\begin{proof}
Let $r=\lceil(1-\alpha)(n+1)\rceil$. Algorithm~\ref{alg:dp_binary_search} is run over a range $[a,b]$ such that $b\ge \max_{1\le i\le n} S_i$, hence $C_n(b)=n$. We also assume $r+m_n\le n$ so that the order statistic $S_{(r+m_n)}$ is well-defined.

For iteration $k\in{1,\dots,N}$, let $t_k$ be the midpoint queried at that iteration, and let $\mathcal F_{k-1}$ be the sigma-field generated by the search history up to iteration $k-1$. Recall the algorithm observes $\tilde C_k = C_n(t_k) + Z_k,$ where $Z_k\sim \mathcal N(0,\sigma^2)$ and $Z_k$ is independent of $\mathcal F_{k-1}$. The algorithm updates \texttt{right} to $t_k$ only when $\tilde C_k \ge r'$, where $r' = r+m_n+\tau$ and $\tau = \sigma \Phi^{-1}(1-\beta/N)-1.$

For each $k$, define the event $E_k = \bigl\{\tilde C_k\ge r'  \Longrightarrow C_n(t_k)\ge r+m_n\bigr\}.$
If $C_n(t_k)\le r+m_n-1$, then the implication in $E_k$ can fail only when
\begin{equation*}
Z_k \ge r'-(r+m_n-1) = \tau+1 = \sigma\Phi^{-1}(1-\beta/N).
\end{equation*}
Using the conditional independence of $Z_k$ from $\mathcal F_{k-1}$, we have
\begin{equation*}
\mathbb P(E_k^c\mid \mathcal F_{k-1})
\le
\mathbb P\left(Z_k \ge \sigma\Phi^{-1}(1-\beta/N)\right)
\end{equation*}
Let $G=\bigcap_{k=1}^N E_k$. A union bound gives
\begin{equation*}
\mathbb P(G)\ge 1-\beta.
\end{equation*}

We now show that on $G$ the final output $\hat q$ satisfies $C_n(\hat q)\ge r+m_n$. At initialisation, \texttt{right}$=b$ and $C_n(b)=n\ge r+m_n$. At iteration $k$, if \texttt{right} is not updated, the value of $C_n(\texttt{right})$ is unchanged. If \texttt{right} is updated, then $\tilde C_k\ge r'$, and on $E_k$ this implies $C_n(t_k)\ge r+m_n$, so after setting \texttt{right}$\gets t_k$ one still has $C_n(\texttt{right})\ge r+m_n$. Therefore, on $G$ the final \texttt{right} value, which equals $\hat q$, satisfies
\begin{equation*}
C_n(\hat q)\ge r+m_n.
\end{equation*}
This is equivalent to $\hat q \ge S_{(r+m_n)}$. Combining with $\mathbb P(G)\ge 1-\beta$ yields the proof. \end{proof}

\subsection{Proof of Lemma~\ref{lemma:search_privacy}}

\begin{proof}
The proof relies on the composition property of GDP. Algorithm~\ref{alg:dp_binary_search} constitutes a sequential composition of $N$ adaptive queries, denoted as $M_1, \dots, M_N$. At each step $k$, the mechanism queries the empirical count $C_n(t_k) = \sum_{i=1}^n \mathbbm{1}(S_i \le t_k)$.
Since the addition or removal of a single data point changes the count by at most 1, the $\ell_2$-sensitivity of the query function is $\Delta_2 = \sup_{D \sim D'} |C_n(D) - C_n(D')| = 1$.

Each mechanism $M_k$ releases a noisy count by adding independent Gaussian noise $Z_k \sim \mathcal{N}(0, \sigma^2)$. Since the Gaussian mechanism with sensitivity $\Delta_2=1$ and noise scale $\sigma$ exactly satisfies $\mu_k$-GDP, where $\mu_k = \Delta_2/\sigma = 1/\sigma$.

Moreover, by sequential composition, we have 
\begin{equation*}
\mu_{\text{total}} = \sqrt{\sum_{k=1}^N \mu_k^2} = \sqrt{\sum_{k=1}^N \left(\frac{1}{\sigma}\right)^2} = \frac{\sqrt{N}}{\sigma}.
\end{equation*}

To ensure the entire procedure satisfies the target budget $\mu_{\text{calib}}$, we solve for the required noise scale $\sigma$:
\[
\mu_{\text{calib}} = \frac{\sqrt{N}}{\sigma} \implies \sigma = \frac{\sqrt{N}}{\mu_{\text{calib}}}.
\]
This concludes the proof.
\end{proof}

\subsection{Proof of Theorem~\ref{thm:dp_gap_two_sides}}\label{app:proof_dp_gap_two_sides}
\begin{proof}
The proof has two complementary parts. Part~I analyses therough the property of $f$-DP. The proof of Part~II constructs a propoer hard instance.\\

\noindent\textbf{Common setup.}
Let $D_n=((X_i,Y_i))_{i=1}^n$ be the training sample and let $D_{n+1}=D_n\cup\{(X_{n+1},Y_{n+1})\}$ be the augmented sample including the test point.
Let $M$ be a randomized mechanism and let $\widehat{\pi}_D$ be the distribution of $M(D)$.
Given a prediction-set map $C(\cdot,\cdot)$, define the coverage event
\begin{equation*}
E_D := \{\,Y_{n+1}\in C(\widehat{\pi}_D,X_{n+1})\,\}.  
\end{equation*}
All probabilities below are taken over the randomness of $M$ and the data-generating distribution when applicable.\\

\noindent\textbf{Part I (Universal lower bound).}
Fix a realization $d_{n+1}=d_n\cup\{(x_{n+1},y_{n+1})\}$ and consider the test
\begin{equation*}
\varphi(\widehat{\pi})
=\mathbbm{1}\{\,y_{n+1}\notin C(\widehat{\pi},x_{n+1})\,\}.    
\end{equation*}
By the $f$-DP property of $M$ and the definition of the tradeoff function, we have
\begin{equation*}
\mathbb{P}\!\left(\varphi(M(d_n))=0\right)\ \ge\ f\!\left(\mathbb{P}\!\left(\varphi(M(d_{n+1}))=1\right)\right),   
\end{equation*}

that is,
\begin{equation*}
\mathbb{P}(E_{d_n}) \ \ge\ f\!\left(\mathbb{P}(E_{d_{n+1}}^c)\right).    
\end{equation*}

Taking expectation over the data-generating distribution (and the randomness of the test point), we obtain
\begin{equation*}
\mathbb{P}(E_{D_n})
=
\mathbb{E}\big[\mathbb{P}(E_{d_n})\big]
\ \ge\
\mathbb{E}\!\left[f\!\left(\mathbb{P}(E_{d_{n+1}}^c)\right)\right].  
\end{equation*}
Since tradeoff functions are convex and non-increasing, Jensen's inequality yields
\begin{equation*}
\mathbb{E}\!\left[f\!\left(\mathbb{P}(E_{d_{n+1}}^c)\right)\right]
\ \ge\
f\!\left(\mathbb{E}\big[\mathbb{P}(E_{d_{n+1}}^c)\big]\right)
=
f\!\left(\mathbb{P}(E_{D_{n+1}}^c)\right).    
\end{equation*}

Finally, the assumption $\mathbb{P}(E_{D_{n+1}}^c)\le \alpha$ and the monotonicity of $f$ imply
\begin{equation*}
\mathbb{P}(E_{D_n}) \ \ge\ f(\alpha).    
\end{equation*}

\noindent\textbf{Part II.} The proof proceeds by constructing a specific data distribution and a mechanism. We explicitly leverage the properties of the Canonical Noise Distribution (CND) to quantify the exact coverage gap, which is introduced in Section~\ref{sec:supp_dp_intro}.\\

\noindent\textbf{Step 1: Construction of the Hard Instance.} Let $n \in \mathbb{N}$ be fixed. We first construct a distribution where observing a new label is unlikely without seeing the corresponding training point. Choose an integer $k \in \mathbb{N}$ sufficiently large such that the collision probability is bounded by $\gamma$, specifically $n/k \le \gamma$. A valid choice is $k = \lceil n/\gamma \rceil$. Define $P$ as the uniform distribution over the diagonal elements $\{(j,j)\}_{j=1}^k$:
\begin{equation*}
    P\big((X,Y)=(j,j)\big) = \frac{1}{k} \quad \text{for } j=1,\dots,k, \quad \text{and} \quad P\big((x,y)\big)=0 \text{ for } x\neq y.
\end{equation*}
This ensures that $Y$ is deterministically determined by $X$, yet observing a ``fresh'' $X$ implies observing a label never seen in the training set with high probability.

Next, we specify the mechanism. Let $F_f$ denote the cumulative distribution function (CDF) of the Canonical Noise Distribution (CND) corresponding to the tradeoff function $f$. We define a noisy histogram mechanism $M$ that outputs a count for each domain element. For a dataset $D$, the mechanism adds independent noise $N_{x,y} \sim F_f$ to the empirical counts:
\begin{equation*}
    \tilde{\pi}_{D}(x,y) = \sum_{i=1}^{|D|} \mathbbm{1}\{(X_i, Y_i) = (x,y)\} + N_{x,y}.
\end{equation*}
By the properties of CNDs \citep{awan2023canonical}, this additive mechanism exactly satisfies $f$-DP. We define the prediction set $C_\alpha$ based on a thresholding rule:
\begin{equation*}
    C_{\alpha}(\tilde{\pi}_{D}, x) = \left\{y \in \mathcal{Y} : \tilde{\pi}_{D}(x,y) \ge F_{f}^{-1}(\alpha) + 1 \right\}.
\end{equation*}

\noindent\textbf{Step 2: In-Sample Coverage Analysis.}
Consider the ideal scenario where the model is trained on the augmented dataset $D_{n+1} = D_n \cup \{(X_{n+1}, Y_{n+1})\}$. By construction, the true count for the test point $(X_{n+1}, Y_{n+1})$ in $D_{n+1}$ is at least 1. Let $N$ denote the noise added to this specific bin. The in-sample coverage probability is:
\begin{align*}
    \mathbb{P}\left(Y_{n+1} \in C_{\alpha}(\tilde{\pi}_{D_{n+1}}, X_{n+1})\right) 
    &= \mathbb{P}\left(\tilde{\pi}_{D_{n+1}}(X_{n+1}, Y_{n+1}) \ge F_{f}^{-1}(\alpha) + 1\right) \\
    &= \mathbb{P}\left(\text{Count}_{D_{n+1}}(X_{n+1}, Y_{n+1}) + N \ge F_{f}^{-1}(\alpha) + 1\right).
\end{align*}
Since the true count is at least 1, the event is implied by $1 + N \ge F_{f}^{-1}(\alpha) + 1$, or simply $N \ge F_{f}^{-1}(\alpha)$. Thus,
\begin{align*}
    \mathbb{P}\left(Y_{n+1} \in C_{\alpha}(\tilde{\pi}_{D_{n+1}}, X_{n+1})\right) 
    &\ge \mathbb{P}\left(N \ge F_{f}^{-1}(\alpha)\right) \\
    &= 1 - F_f(F_{f}^{-1}(\alpha)) = 1 - \alpha.
\end{align*}
Here, the last equality holds by the continuity of the CND CDF. This confirms that the ideal exchangeable model achieves the nominal coverage level.

\noindent\textbf{Step 3: Out-of-Sample Coverage Bound.} Now, consider the realistic setting where the mechanism is trained only on $D_n$. Let $B$ be the event that the test point is ``fresh,'' meaning it does not coincide with any training point:
\[ B = \left\{ (X_{n+1}, Y_{n+1}) \neq (X_i, Y_i) \text{ for all } i=1,\dots,n \right\}. \]
Let $S$ be the number of unique values observed in $D_n$. The probability of encountering a fresh point is governed by the unseen diagonal cells:
\begin{equation*}
    \mathbb{P}(B) = \mathbb{E}\left[\frac{k-S}{k}\right] \ge \frac{k-n}{k} = 1 - \frac{n}{k}.
\end{equation*}
Consequently, the probability of the complement event is bounded by $\mathbb{P}(B^c) \le n/k \le \gamma$.

We decompose the coverage probability by conditioning on $B$. Note that under the event $B$, the true count of $(X_{n+1}, Y_{n+1})$ in $D_n$ is exactly 0.
\begin{align*}
    \mathbb{P}\left(Y_{n+1} \in C_{\alpha}(\tilde{\pi}_{D_{n}}, X_{n+1})\right) 
    &\le \mathbb{P}(B^c) + \mathbb{P}\left(Y_{n+1} \in C_{\alpha}(\tilde{\pi}_{D_{n}}, X_{n+1}) \mid B \right) \\
    &\le \gamma + \mathbb{P}\left(0 + N \ge F_{f}^{-1}(\alpha) + 1\right).
\end{align*}
The second term represents the probability that pure noise exceeds the threshold. We simplify this term using the symmetry properties of the CND. First, rearrange the inequality:
\begin{equation*}
    \mathbb{P}\left(N \ge F_{f}^{-1}(\alpha) + 1\right) = 1 - F_f\left(F_{f}^{-1}(\alpha) + 1\right).
\end{equation*}
Since the CND is symmetric about zero, we have the identity $1 - F_f(z) = F_f(-z)$ and the quantile symmetry $-F_f^{-1}(\alpha) = F_f^{-1}(1-\alpha)$. Applying these, the term becomes:
\begin{align*}
    1 - F_f\left(F_{f}^{-1}(\alpha) + 1\right) 
    &= F_f\left( -F_{f}^{-1}(\alpha) - 1 \right) \\
    &= F_f\left( F_{f}^{-1}(1-\alpha) - 1 \right).
\end{align*}
By the definition of the CND, the trade-off function is characterised exactly by this shift: $f(\alpha) = F_f(F_f^{-1}(1-\alpha) - 1)$. Substituting this back yields the final bound:
\begin{equation*}
    \mathbb{P}\left(Y_{n+1} \in C_{\alpha}(\tilde{\pi}_{D_{n}}, X_{n+1})\right) \le \gamma + f(\alpha).
\end{equation*}
Thus, we have constructed a scenario where the ideal coverage is $1-\alpha$, but the actual coverage is upper bounded by $f(\alpha) + \gamma$. Since $f(\alpha) < 1-\alpha$ for any non-trivial privacy guarantee, the gap is non-vanishing.
\end{proof}

\subsection{Proof of Corollary~\ref{cor:blackbox_dpscp}}
\begin{proof}
Let $D_{n+1}=\{Z_i\}_{i=1}^{n+1}$ with $Z_i=(X_i,Y_i)$, and suppose $Z_1,\dots,Z_{n+1}$ are exchangeable. Assume the training mechanism $M_{\emph{train}}$ is permutation-invariant as a randomized map from datasets to model parameters. In particular, DP-SGD we use is permutation-invariant, since it accesses the data only through symmetric random subsampling.
Consequently, the ideal scores $S_i^{(n+1)}=s(Z_i;\theta_{n+1})
$ are exchangeable.

Let $\hat q$ be the output of Algorithm~\ref{alg:dp_binary_search} when DP-SCP is run on $D_{n+1}$, and let $r=\lceil(1-\alpha)(n+1)\rceil$.
Let $G$ denote the event that Algorithm~\ref{alg:dp_binary_search} makes no one-sided error over its $N$ adaptive queries.
By Lemma~\ref{lem:one_sided_qhat} applied to the score multiset $\{S_i^{(n+1)}\}_{i=1}^{n+1}$, we have
\begin{equation*}
\mathbb P(G)\ge 1-\beta
\end{equation*}
and, on $G$,
\begin{equation*}
\hat q \ge S_{(r)}^{(n+1)}.
\end{equation*}
Since $\{S_i^{(n+1)}\}_{i=1}^{n+1}$ are exchangeable, the standard conformal rank argument yields
\begin{equation*}
\mathbb P\!\left(S_{n+1}^{(n+1)} \le S_{(r)}^{(n+1)}\right)\ge 1-\alpha.
\end{equation*}
Moreover, since $\mathbb P(G\mid \{S_i^{(n+1)}\}_{i=1}^{n+1})\ge 1-\beta$ by construction, we obtain

\begin{equation*}
\begin{aligned}
\mathbb P\!\left(S_{n+1}^{(n+1)} \le \hat q\right)
&\ge
\mathbb E\!\left[
\mathbbm{1}\!\left\{S_{n+1}^{(n+1)} \le S_{(r)}^{(n+1)}\right\}\,
\mathbb P\!\left(G\mid \{S_i^{(n+1)}\}_{i=1}^{n+1}\right)
\right] \\
&\ge
(1-\beta)\,
\mathbb P\!\left(S_{n+1}^{(n+1)} \le S_{(r)}^{(n+1)}\right) \\
&\ge
(1-\beta)(1-\alpha) \\
&=
1-\alpha_0,
\end{aligned}
\end{equation*}
where $\alpha_0=\alpha+\beta-\alpha\beta$.

For the second claim, view the overall DP-SCP procedure as a randomized mechanism $M$ and the induced prediction-set map as $C(M(\cdot),\cdot)$.
By assumption, $M$ is $f$-DP under add-or-delete adjacency between $D_n$ and $D_{n+1}$.
Applying Theorem~\ref{thm:dp_gap_two_sides}(i) at level $\alpha_0$ together with the oracle bound above gives
\begin{equation*}
\mathbb P\!\left(Y_{n+1}\in C_{\alpha_0}(M(D_n),X_{n+1})\right)\ge f(\alpha_0).
\end{equation*}
\end{proof}

\subsection{Proof of Theorem~\ref{thm: coverage}}\label{sub:proof_coverage}

\begin{proof}
The proof is structured into four main steps. We first link the private quantile estimator to a deterministic rank condition (Step 1), then establish a set inclusion relating the actual coverage to the ideal exchangeable coverage (Step 2), and finally bound the probabilities of the failure events using the stability properties (Steps 3 \& 4).

\noindent\textbf{Notation.} For any sample size $m \in \{n, n+1\}$, let $\theta_m$ denote the model parameter trained on the first $m$ data points. Define the non-conformity score for the $i$-th data point ($i=1,\dots,n+1$) evaluated under model $\theta_m$ as:
\begin{equation*}
S_{i}^{(m)} \coloneqq s((X_{i},Y_{i}); \theta_{m}).    
\end{equation*}
Let $C_{n}^{(m)}(t) = \sum_{i=1}^n \mathbbm{1}(S_{i}^{(m)} \leq t)$ be the empirical count function of the training scores under $\theta_m$, and let $S_{[k]}^{(m)}$ denote the $k$-th order statistic of $\{S_{1}^{(m)}, \dots, S_{n}^{(m)}\}$.

Fix a target miscoverage level $\alpha \in (0,1)$ and let $r = \lceil(1-\alpha)(n+1)\rceil$. Define the ideal quantile threshold as $q^* = S_{[r]}^{(n+1)}$, which corresponds to the $r$-th smallest score among the augmented set $\{S_1^{(n+1)}, \dots, S_{n+1}^{(n+1)}\}$.
For the failure probability $\beta \in (0,1)$, choose $\{\beta_{k}\}$ such that $\sum \beta_{k} = \beta$. The search threshold at step $k$ is $r_{k} = r + m_{n} + \sigma \Phi^{-1}(1-\beta_{k}) - 1$.

Recall that our differentially private quantile estimator $\hat{q}$ is the final \texttt{right} endpoint of the binary search. At step $k$, we observe $\tilde{C}_{n}(t_{k}) = C_{n}^{(n)}(t_{k}) + Z_{k}$, with $Z_k \sim \mathcal{N}(0, \sigma^2)$.\\

\noindent\textbf{Step 1: The ``Good Event'' of the DP Search} This step establishes that the private estimator $\hat{q}$ is sufficiently large to cover the target rank $r+m_n$ with high probability. The reasoning in this step closely parallels the proof of Lemma~\ref{lem:one_sided_qhat}. For self-containedness and notational consistency, we present the argument again. Let $(\mathcal{F}_k)$ be the filtration generated by the data and search history up to step $k-1$. Define the ``correct-step event'' $E_{k}$ as the event where the noisy count does not falsely trigger a reduction of the search upper bound:
\begin{align*}
 E_k \coloneqq \left\{ \tilde{C}_n(t_k) \ge r_k \implies C_n^{(n)}(t_k) \ge r+m_n \right\}.    
\end{align*}
An error $E_k^c$ occurs only if $C_n^{(n)}(t_k) \le r+m_n-1$ but $\tilde{C}_n(t_k) \ge r_k$. The choice of $r_k$ ensures:
\begin{align*}
    \mathbb{P}(E_k^c \mid \mathcal{F}_{k-1}) 
    &= \mathbb{P}(Z_k \ge r_k - C_n^{(n)}(t_k) \mid \mathcal{F}_{k-1}) \\
    &\le \mathbb{P}\left(Z_k \ge \sigma \Phi^{-1}(1-\beta_k)\right) = \beta_k.
\end{align*}
Let $G \coloneqq \bigcap_k E_k$ be the global good event. By the union bound, $\mathbb{P}(G \mid \text{Data}) \ge 1-\beta$.

\noindent\textbf{Key Invariant:} The algorithm initializes \texttt{right} to the maximum score (assumed to satisfy coverage) and updates \texttt{right} $\leftarrow$ \texttt{mid} \emph{only if} $\tilde{C}_n(\texttt{mid}) \ge r_k$. On the event $G$, this condition implies $C_n^{(n)}(\texttt{mid}) \ge r+m_n$. Thus, the invariant $C_n^{(n)}(\texttt{right}) \ge r+m_n$ is maintained throughout the search.
The final estimate $\hat{q} = \texttt{right}_{\text{final}}$ therefore satisfies $C_n^{(n)}(\hat{q}) \ge r+m_n$, which is equivalent to $\hat{q} \ge S_{[r+m_n]}^{(n)}$.

We lower-bound the coverage probability by conditioning on $G$:
\begin{align*}
    \mathbb{P}(S_{n+1}^{(n)} \le \hat{q}) 
    &\ge \mathbb{P}(\{S_{n+1}^{(n)} \le \hat{q}\} \cap G) \\
    &\ge \mathbb{P}(\{S_{n+1}^{(n)} \le S_{[r+m_n]}^{(n)}\} \cap G) \\
    &= \mathbb{E}\left[ \mathbbm{1}_{\{S_{n+1}^{(n)} \le S_{[r+m_n]}^{(n)}\}} \cdot \mathbb{P}(G \mid \text{Data}) \right] \\
    &\ge (1-\beta) \mathbb{P}(S_{n+1}^{(n)} \le S_{[r+m_n]}^{(n)}).
\end{align*}

\noindent\textbf{Step 2: Buffered Inclusion via Down-Cross Control.} We now lower bound $\mathbb{P}(S_{n+1}^{(n)} \le S_{[r+m_n]}^{(n)})$. We relate the target event $B \coloneqq \{S_{n+1}^{(n)} \le S_{[r+m_n]}^{(n)}\}$ (under $\theta_{n}$) to the ideal event $A \coloneqq \{S_{n+1}^{(n+1)} \le q_*\}$ (under $\theta_{n+1}$). The transition from $\theta_{n+1}$ to $\theta_{n}$ perturbs the scores, potentially altering the ranks. We characterise this via two failure events:
{
\setlength{\leftmargini}{1.5em} \begin{enumerate}
    \item \textbf{Test-Flip ($T$):} The test point's inclusion status flips relative to $q^*$.
    \[ T \coloneqq \{\mathbbm{1}\{S_{n+1}^{(n+1)} \le q_*\} \neq \mathbbm{1}\{S_{n+1}^{(n)} \le q_*\}\}. \]
    \item \textbf{Excessive Down-Cross ($R$):} More than $m_{n}$ training scores cross down past $q^*$.
    \[ R \coloneqq \{ \#\{i \le n : S_i^{(n+1)} > q_*, S_i^{(n)} \le q_*\} \ge m_n + 1 \}. \]
\end{enumerate}
}

We claim that $A \cap T^c \cap R^c \implies B$. The verification of the claim can be shown as follows:
    The rank of $q^*$ in the training set changes as follows:
\begin{equation*}
C_{n}^{(n)}(q^*) = C_{n}^{(n+1)}(q^*) - N_{\text{up}} + N_{\text{down}},    
\end{equation*}
where $N_{\text{down}}$ is the number of down-crossers and $N_{\text{up}}$ is the number of up-crossers.
Assume $A, T^c, R^c$ hold.
{
\setlength{\leftmargini}{1.5em} \begin{enumerate}
    \item Under $A$, $q^*$ is the $r$-th order statistic of the $n+1$ ideal scores. Since $S_{n+1}^{(n+1)} \le q^*$, exactly $r-1$ training scores are $\le q^*$ under $\theta_{n+1}$. Thus, $C_{n}^{(n+1)}(q^*) = r-1$.
    \item Under $R^c$, $N_{\text{down}} \le m_{n}$.
    \item Since $N_{\text{up}} \ge 0$, the count under $\theta_n$ satisfies:
\begin{equation*}
C_{n}^{(n)}(q^*) = (r-1) - N_{\text{up}} + N_{\text{down}} \le r - 1 + m_{n}.    
\end{equation*}
\end{enumerate}
}
The condition $C_{n}^{(n)}(q^*) \le r + m_{n} - 1$ implies that there are strictly fewer than $r+m_n$ scores $\le q^*$ under $\theta_n$. By the definition of order statistics, this forces the $(r+m_n)$-th order statistic to be strictly larger than $q^*$:
\begin{equation*}
S_{[r+m_{n}]}^{(n)} > q^*.    
\end{equation*}
Under $A \cap T^c$, the test point remains covered: $S_{n+1}^{(n)} \le q^*$. Combining these yields $S_{n+1}^{(n)} \le q^* < S_{[r+m_{n}]}^{(n)}$, which is event $B$. Therefore, the claim holds true. 

Back to the main part, the inclusion $A \cap T^c \cap R^c \implies B$ implies $\mathbb{P}(B) \ge \mathbb{P}(A) - \mathbb{P}(T) - \mathbb{P}(R)$. Since the ideal scores are exchangeable, $\mathbb{P}(A) \ge 1-\alpha$. Thus, we have
\begin{equation*}
\mathbb{P}(S_{n+1}^{(n)} \le S_{[r+m_n]}^{(n)}) \ge (1-\alpha) - \mathbb{P}(T) - \mathbb{P}(R).    
\end{equation*}

\noindent\textbf{Step 3: Bounding the Test-Flip Probability ($\mathbb{P}(T)$)} Let $\Delta\theta\coloneqq \theta_{n+1}-\theta_n$. If $T$ occurs, then $S_{n+1}^{(n+1)}$ and $S_{n+1}^{(n)}$ lie on opposite sides of $q_*$, hence
\begin{equation*}
|S_{n+1}^{(n+1)}-q_*|
\le
|S_{n+1}^{(n+1)}-S_{n+1}^{(n)}|.
\end{equation*}
Assumption~\ref{ass:lipschitz} gives
\begin{equation*}
|S_{n+1}^{(n+1)}-S_{n+1}^{(n)}|
\le
L\|\Delta\theta\|.
\end{equation*}
Fix $u_n>0$. On the event $\{\|\Delta\theta\|\le u_n\}$, the implication above yields
\begin{equation*}
T
\subseteq
\left\{|S_{n+1}^{(n+1)}-q_*|\le Lu_n\right\}.
\end{equation*}
Therefore,
\begin{equation}\label{eq:T_decomp}
\begin{aligned}
\mathbb{P}(T)
&\le
\mathbb{P}(\|\Delta\theta\|>u_n)
+
\mathbb{P}\!\left(q_* - Lu_n \le S_{n+1}^{(n+1)} < q_*\right) \\
&\phantom{\le}+
\mathbb{P}\!\left(S_{n+1}^{(n+1)}=q_*\right)
+
\mathbb{P}\!\left(q_* < S_{n+1}^{(n+1)} \le q_* + Lu_n\right).
\end{aligned}
\end{equation}

Assumption~\ref{ass:stab} bounds $\mathbb{P}(\|\Delta\theta\|>u_n)$ by $\delta_n$.
Assumption~\ref{ass:anti} bounds the second and fourth terms in \eqref{eq:T_decomp} by $\overline f L u_n$ each.
Under Assumption~\ref{ass:noties}, the ideal scores are distinct almost surely, so the rank of $S_{n+1}^{(n+1)}$ among $\{S_i^{(n+1)}\}_{i=1}^{n+1}$ is uniform by exchangeability.
This implies $\mathbb{P}(S_{n+1}^{(n+1)}=q_*)=1/(n+1)$.
Combining these bounds yields
\begin{equation}\label{eq:T_bound_final}
\mathbb{P}(T)\le \delta_n + 2\overline f L u_n + \frac{1}{n+1}.
\end{equation}

\noindent\textbf{Step 4: Bounding the Down-Cross Probability ($\mathbb{P}(R)$)} This step controls the probability that more than $m_n$ training scores cross down past the ideal threshold $q_*$ when moving from $\theta_{n+1}$ to $\theta_n$.
Recall
\[
R \coloneqq \left\{ \#\{i \le n : S_i^{(n+1)} > q_*,\ S_i^{(n)} \le q_*\} \ge m_n + 1 \right\}.
\]
On the stability event $\{\|\Delta\theta\|\le u_n\}$, Assumption~\ref{ass:lipschitz} implies
$|S_i^{(n+1)}-S_i^{(n)}|\le L u_n$ for every $i\le n$.
Therefore, whenever a down-cross occurs for index $i$ and $\|\Delta\theta\|\le u_n$, one must have
\[
q_* < S_i^{(n+1)} \le q_* + L u_n.
\]
Define
\begin{equation}\label{eq:def_Y}
Y \coloneqq \sum_{i=1}^n \mathbbm{1}\left\{ q_* < S_i^{(n+1)} \le q_* + L u_n \right\}.
\end{equation}
Then
\[
R \cap \{\|\Delta\theta\|\le u_n\}\subseteq \{Y\ge m_n+1\},
\]
and hence
\begin{equation}\label{eq:R_split}
\mathbb P(R)
\le
\mathbb P\!\left(Y\ge m_n+1\right)
+
\mathbb P(\|\Delta\theta\|>u_n).
\end{equation}

By Assumption~\ref{ass:anti}, for each $i\le n$,
\[
\mathbb P\!\left(q_* < S_i^{(n+1)} \le q_* + L u_n\right)
\le
\overline f\,L u_n,
\]
provided $L u_n\le \rho$.
Therefore,
\[
\mathbb E[Y]
=
\sum_{i=1}^n
\mathbb P\!\left(q_* < S_i^{(n+1)} \le q_* + L u_n\right)
\le
n\,\overline f\,L u_n.
\]
With $m_n=\Big\lceil \frac{n\,\overline f\,L u_n}{\delta_n}\Big\rceil,$ Markov's inequality gives
\[
\mathbb P(Y\ge m_n+1)\le \frac{\mathbb E[Y]}{m_n+1}\le \delta_n.
\]
Combining with \eqref{eq:R_split} and Assumption~\ref{ass:stab}, which gives $\mathbb P(\|\Delta\theta\|>u_n)\le \delta_n$, we conclude
\[
\mathbb P(R)\le 2\delta_n.
\]

\noindent\textbf{Final Combination.} Combining Steps 1--4 and using \eqref{eq:T_bound_final}, we obtain
\begin{equation}
\begin{aligned}
\mathbb{P}(S_{n+1}^{(n)} \le \hat{q})
&\ge (1-\beta_n)\Big[(1-\alpha) - \mathbb{P}(T) - \mathbb{P}(R)\Big] \\
&\ge (1-\beta_n)\Big(1-\alpha - (2\overline f L u_n + \delta_n + \tfrac{1}{n+1}) - 2\delta_n\Big),
\end{aligned}
\end{equation}
which yields \eqref{eq:coverage-lb}.\end{proof}

\subsection{Proof of Lemma~\ref{thm:proj_dpsgd_stability}}
\label{sub:proof_proj_stability}

\begin{proof} The proof relies on a \emph{synchronized coupling} argument, motivated by \cite{bassily2020stability}. We construct the training trajectories for the two adjacent datasets, $D_n$ and $D_{n+1} = D_n \cup \{Z_{n+1}\}$.

\noindent\textbf{Coupling Construction.} Note the random components of DP-SGD:
{
\setlength{\leftmargini}{1.5em} \begin{enumerate}
    \item A sequence of Gaussian noise vectors $\{\xi_t\}_{t=0}^{T-1}$ where $\xi_t \sim \mathcal{N}(0, \sigma^2 I_d)$.
    \item A sequence of inclusion indicators $\{b_{i,t}\}_{t=0}^{T-1}$ for each datapoint, where $b_{i,t} \stackrel{i.i.d}{\sim} \text{Bernoulli}(p)$.
\end{enumerate}
}
We couple the two processes by sharing the noise $\{\xi_t\}$ and the indicators $\{b_{i,t}\}_{i=1}^n$ for the common data points. The only source of randomness unique to the augmented dataset $D_{n+1}$ is the inclusion sequence for the additional point, $\{b_{n+1, t}\}_{t=0}^{T-1}$.
Under this coupling, the minibatch $B_t^{(n)}$ for the first process and $B_t^{(n+1)}$ for the second process satisfy:
\[
B_t^{(n+1)} = \begin{cases} 
B_t^{(n)} \cup \{Z_{n+1}\} & \text{if } b_{n+1, t} = 1, \\
B_t^{(n)} & \text{if } b_{n+1, t} = 0.
\end{cases}
\]

Define the ``coupling breach'' event $\mathcal{E}$ as the event that the distinguishing point $Z_{n+1}$ is selected in at least one minibatch during the $T$ iterations:
\[
\mathcal{E} \coloneqq \{ \exists t \in \{0, \dots, T-1\} : b_{n+1, t} = 1 \}.
\]
Consider the complement event $\mathcal{E}^c$ (i.e., $Z_{n+1}$ is never sampled). On $\mathcal{E}^c$, we have $B_t^{(n+1)} = B_t^{(n)}$ for all $t$. Since the initialisations are identical ($\theta_0^{(n)} = \theta_0^{(n+1)}$) and the noise vectors $\xi_t$ are shared, it follows by induction that the trajectories remain identical throughout the training, and consequently, on $\mathcal{E}^c$, the final distance is exactly zero:
\begin{equation*}
\|\theta_n^{(T)} - \theta_{n+1}^{(T)}\|_2 = 0.   
\end{equation*}
This implies the inclusion of events:
\[
\left\{ \|\theta_n^{(T)} - \theta_{n+1}^{(T)}\|_2 > 0 \right\} \subseteq \mathcal{E}.
\]

\noindent\textbf{Probability Bound.}
The probability of $\mathcal{E}$ is determined solely by the Poisson sampling mechanism. Since $b_{n+1, t} \sim \text{Bernoulli}(p)$ independent across $t$:
\[
\mathbb{P}(\mathcal{E}^c) = \prod_{t=0}^{T-1} \mathbb{P}(b_{n+1, t} = 0) = \left(1 - p\right)^T.
\]
Therefore, the probability of divergence is bounded by:
\[
\mathbb{P}\left(\|\theta_n^{(T)} - \theta_{n+1}^{(T)}\|_2 > 0\right) \le \mathbb{P}(\mathcal{E}) = 1 - \left(1 - p\right)^T.
\]
\noindent\textbf{Expectation Bound.} For the expected stability, we first observe that $\Theta$ has a diameter bounded by $R$, and so we deterministically have $\|\theta_n^{(T)} - \theta_{n+1}^{(T)}\|_2 \le R$.
For $E_n \coloneqq \mathbb{E}[\Vert\theta_n^{(T)} - \theta_{n+1}^{(T)}\Vert_2]$. By the law of total expectation,
\begin{align*}
    E_n &= \mathbb{E}\left[\Vert\theta_n^{(T)} - \theta_{n+1}^{(T)}\Vert_2 \mid \mathcal{E}^c\right]\mathbb{P}(\mathcal{E}^c) + \mathbb{E}\left[\Vert\theta_n^{(T)} - \theta_{n+1}^{(T)}\Vert_2 \mid \mathcal{E}\right]\mathbb{P}(\mathcal{E}) \\
    &= 0 \cdot \mathbb{P}(\mathcal{E}^c) + \mathbb{E}\left[\Vert\theta_n^{(T)} - \theta_{n+1}^{(T)}\Vert_2 \mid \mathcal{E}\right]\mathbb{P}(\mathcal{E}) \\
    &\le R \cdot \left(1 - \left(1 - p\right)^T\right).
\end{align*}
This concludes the proof.
\end{proof}

\subsection{Proof of Theorem~\ref{thm:dpsgd_stability_smooth}}
\label{sub:proof_smooth_stability}
\begin{proof}
We analyse the expected divergence between two coupled projected DP-SGD trajectories trained on adjacent datasets $D_n$ and $D_{n+1}=D_n\cup\{Z_{n+1}\}$.
Let $\Delta_t=\theta_n^{(t)}-\theta_{n+1}^{(t)}$ and $\delta_t=\mathbb E[\|\Delta_t\|_2]$.
The initialisation is shared, hence $\delta_0=0$.\\

\noindent\textbf{Step 1. Coupled update rule.}
Similar to the proof of Lemma~\ref{thm:proj_dpsgd_stability}, we employ a synchronized coupling with shared randomness. Sharing the same random seed, at iteration $t$, both runs share the same Poisson subsampling mask on the first $n$ points and share the same Gaussian vector $\xi_t\sim\mathcal N(0,I_d)$. The only additional randomness is whether $Z_{n+1}$ is included, denoted by $b_{n+1,t}\sim{\rm Bernoulli}(q)$.
Let $B_t$ and $B_t'$ be the accepted minibatches for $D_n$ and $D_{n+1}$, with sizes $m_t=|B_t|$ and $m_t'=|B_t'|$.
In the implementation, if a sampled minibatch is empty it is discarded and the sampler is rerun, so $m_t\ge 1$. Under the coupling, $B_t' = B_t \cup \{n+1:b_{n+1,t}=1\},$ and therefore, $m_t' = m_t + b_{n+1,t}.$

We write the update with a constant step size $\eta$.
We absorb the clipping scale into the noise parameter so the injected noise takes the form $(\sigma/m_t)\xi_t$ with $\xi_t\sim\mathcal N(0,I_d)$.
Note that the update rule is
\begin{equation*}
\theta^{(t+1)}=\Pi_\Theta\!\left(\theta^{(t)}-\eta\left(\hat g_t(\theta^{(t)})+\frac{\sigma}{m_t}\,\xi_t\right)\right),
\end{equation*}
where
\begin{equation*}
\hat g_t(\theta)
=
\frac{1}{m_t}\sum_{i\in B_t}\bar g(\theta;z_i),
\end{equation*}
\begin{equation*}
\bar g(\theta;z)={\rm clip}_C(\nabla\ell(\theta;z)).
\end{equation*}
The primed trajectory follows the same rule with $(B_t',m_t',\hat g_t')$.

\noindent\textbf{Step 2. One-step recurrence.}
By non-expansiveness of the Euclidean projection,
\begin{align*}
\|\Delta_{t+1}\|_2
&=
\Big\|
\Pi_\Theta\!\Big(\theta_n^{(t)}-\eta\big(\hat g_t(\theta_n^{(t)})+\tfrac{\sigma}{m_t}\xi_t\big)\Big)
-
\Pi_\Theta\!\Big(\theta_{n+1}^{(t)}-\eta\big(\hat g_t'(\theta_{n+1}^{(t)})+\tfrac{\sigma}{m_t'}\xi_t\big)\Big)
\Big\|_2 \\
&\le
\Big\|
\theta_n^{(t)}-\eta\big(\hat g_t(\theta_n^{(t)})+\tfrac{\sigma}{m_t}\xi_t\big)
-
\theta_{n+1}^{(t)}+\eta\big(\hat g_t'(\theta_{n+1}^{(t)})+\tfrac{\sigma}{m_t'}\xi_t\big)
\Big\|_2 \\
&\le
\|\Delta_t\|_2
+
\eta\|\hat g_t(\theta_n^{(t)})-\hat g_t'(\theta_{n+1}^{(t)})\|_2
+
\eta\sigma\|\xi_t\|_2\Bigl|\frac{1}{m_t}-\frac{1}{m_t'}\Bigr|.
\end{align*}

\noindent\textbf{Step 3. Bounding the gradient term.}
We split
\begin{equation*}
\|\hat g_t(\theta_n^{(t)})-\hat g_t'(\theta_{n+1}^{(t)})\|_2
\le
\|\hat g_t(\theta_n^{(t)})-\hat g_t(\theta_{n+1}^{(t)})\|_2
+
\|\hat g_t(\theta_{n+1}^{(t)})-\hat g_t'(\theta_{n+1}^{(t)})\|_2.
\end{equation*}

For the first term, ${\rm clip}_C$ is a Euclidean projection onto an $\ell_2$ ball, hence it is non-expansive.
Together with $L$-smoothness, this implies $\bar g(\cdot;z)$ is $L$-Lipschitz for every $z$, hence
\begin{align*}
\|\hat g_t(\theta_n^{(t)})-\hat g_t(\theta_{n+1}^{(t)})\|_2
&\le
\frac{1}{m_t}\sum_{i\in B_t}\|\bar g(\theta_n^{(t)};z_i)-\bar g(\theta_{n+1}^{(t)};z_i)\|_2 \\
&\le
\frac{1}{m_t}\sum_{i\in B_t} L\|\theta_n^{(t)}-\theta_{n+1}^{(t)}\|_2 \\
&=
L\|\Delta_t\|_2.
\end{align*}

For the second term, if $b_{n+1,t}=0$ then $B_t'=B_t$ and the term is zero.
If $b_{n+1,t}=1$ then $m_t'=m_t+1$ and
\begin{equation*}
\hat g_t'(\theta)
=
\frac{1}{m_t+1}\left(\sum_{i\in B_t}\bar g(\theta;z_i)+\bar g(\theta;Z_{n+1})\right).
\end{equation*}
In this case,
\begin{align*}
\hat g_t(\theta)-\hat g_t'(\theta)
&=
\frac{1}{m_t}\sum_{i\in B_t}\bar g(\theta;z_i)
-
\frac{1}{m_t+1}\left(\sum_{i\in B_t}\bar g(\theta;z_i)+\bar g(\theta;Z_{n+1})\right) \\
&=
\frac{1}{m_t(m_t+1)}\left(\sum_{i\in B_t}\bar g(\theta;z_i)-m_t\,\bar g(\theta;Z_{n+1})\right),
\end{align*}
hence, using $\|\bar g(\theta;z)\|_2\le C$,
\begin{align*}
\|\hat g_t(\theta)-\hat g_t'(\theta)\|_2
&\le
\frac{1}{m_t(m_t+1)}
\left(
\Big\|\sum_{i\in B_t}\bar g(\theta;z_i)\Big\|_2
+
m_t\|\bar g(\theta;Z_{n+1})\|_2
\right) \\
&\le
\frac{1}{m_t(m_t+1)}\left(m_t C+m_t C\right) \\
&=
\frac{2C}{m_t+1}.
\end{align*}
Therefore,
\begin{equation*}
\|\hat g_t(\theta_{n+1}^{(t)})-\hat g_t'(\theta_{n+1}^{(t)})\|_2
\le
\frac{2C}{m_t+1}\,\mathbbm{1}_{\{b_{n+1,t}=1\}}.
\end{equation*}

Combining,
\begin{equation*}
\|\hat g_t(\theta_n^{(t)})-\hat g_t'(\theta_{n+1}^{(t)})\|_2
\le
L\|\Delta_t\|_2
+
\frac{2C}{m_t+1}\,\mathbbm{1}_{\{b_{n+1,t}=1\}}.
\end{equation*}

\noindent\textbf{Step 4. Bounding the noise mismatch term.}
If $b_{n+1,t}=0$ then $m_t'=m_t$ and the mismatch is zero.
If $b_{n+1,t}=1$ then $m_t'=m_t+1$ and
\begin{equation*}
\Bigl|\frac{1}{m_t}-\frac{1}{m_t'}\Bigr|
=
\Bigl|\frac{1}{m_t}-\frac{1}{m_t+1}\Bigr|
=
\frac{1}{m_t(m_t+1)}
\le
\frac{1}{m_t+1}.
\end{equation*}
Thus,
\begin{equation*}
\Bigl|\frac{1}{m_t}-\frac{1}{m_t'}\Bigr|
\le
\frac{1}{m_t+1}\,\mathbbm{1}_{\{b_{n+1,t}=1\}}.
\end{equation*}
Since $\xi_t$ is independent of the sampling and $\mathbb E\|\xi_t\|_2\le \sqrt d$,
\begin{equation*}
\mathbb E\!\left[\|\xi_t\|_2\Bigl|\frac{1}{m_t}-\frac{1}{m_t'}\Bigr|\right]
\le
\sqrt d\,
\mathbb E\!\left[\frac{1}{m_t+1}\,\mathbbm{1}_{\{b_{n+1,t}=1\}}\right].
\end{equation*}

\noindent\textbf{Step 5. Taking expectations and unrolling.}
Combining Steps 2--4 and taking expectations gives
\begin{align*}
\delta_{t+1}
&\le
\delta_t
+
\eta\,\mathbb E\!\left[L\|\Delta_t\|_2+\frac{2C}{m_t+1}\,\mathbbm{1}_{\{b_{n+1,t}=1\}}\right]
+
\eta\sigma\,\mathbb E\!\left[\|\xi_t\|_2\Bigl|\frac{1}{m_t}-\frac{1}{m_t'}\Bigr|\right] \\
&\le
(1+\eta L)\delta_t
+
\eta\,(2C+\sigma\sqrt d)\,
\mathbb E\!\left[\frac{1}{m_t+1}\,\mathbbm{1}_{\{b_{n+1,t}=1\}}\right].
\end{align*}
Since $b_{n+1,t}\sim{\rm Bernoulli}(q)$ is independent of $B_t$,
\begin{equation*}
\mathbb E\!\left[\frac{1}{m_t+1}\,\mathbbm{1}_{\{b_{n+1,t}=1\}}\right]
=
q\,\mathbb E\!\left[\frac{1}{m_t+1}\right].
\end{equation*}
The nonempty-minibatch convention can only increase the batch size relative to a single Poisson sampling, hence it can only decrease $\mathbb E[1/(m_t+1)]$.
Therefore it suffices to upper bound this term using $K\sim{\rm Binomial}(n,q)$,
\begin{equation*}
\mathbb E\!\left[\frac{1}{m_t+1}\right]
\le
\mathbb E\!\left[\frac{1}{K+1}\right]
=
\frac{1-(1-q)^{n+1}}{(n+1)q}.
\end{equation*}
It follows that
\begin{equation*}
\mathbb E\!\left[\frac{1}{m_t+1}\,\mathbbm{1}_{\{b_{n+1,t}=1\}}\right]
\le
\frac{1-(1-q)^{n+1}}{n+1}.
\end{equation*}
Plugging this into the recurrence yields
\begin{equation*}
\delta_{t+1}
\le
(1+\eta L)\delta_t
+
\eta\frac{1-(1-q)^{n+1}}{n+1}\,(2C+\sigma\sqrt d).
\end{equation*}

Unrolling with $\delta_0=0$ gives
\begin{equation*}
\delta_T
\le
\eta\frac{1-(1-q)^{n+1}}{n+1}\,(2C+\sigma\sqrt d)
\sum_{k=0}^{T-1}(1+\eta L)^k.
\end{equation*}
Using the geometric sum identity,
\begin{equation*}
\sum_{k=0}^{T-1}(1+\eta L)^k
=
\frac{(1+\eta L)^T-1}{\eta L},
\end{equation*}
we obtain
\begin{equation*}
\delta_T
\le
\frac{1-(1-q)^{n+1}}{(n+1)L}\,(2C+\sigma\sqrt d)\,\bigl((1+\eta L)^T-1\bigr).
\end{equation*}
Finally, $(1+\eta L)^T\le e^{\eta L T}$ yields
\begin{equation*}
\mathbb E\!\left[\|\theta_n^{(T)}-\theta_{n+1}^{(T)}\|_2\right]
\le
\frac{1-(1-q)^{n+1}}{(n+1)L}\,\bigl(2C+\sigma\sqrt d\bigr)\,\bigl(e^{\eta L T}-1\bigr),
\end{equation*}
which is the claimed bound.
\end{proof}

\subsection{Proof of Corollary~\ref{cor:dpsgd_coverage}}
\begin{proof}
Recall that Theorem~\ref{thm:dpsgd_stability_smooth} yields $\mathbb E\!\left[\|\theta_n^{(T)}-\theta_{n+1}^{(T)}\|_2\right]\le E_n,$
where
\begin{equation*}
E_n
=
\frac{1-(1-q)^{n+1}}{(n+1)L}\,\bigl(2C+\sigma\sqrt d\bigr)\,\bigl(e^{\eta L T}-1\bigr).
\end{equation*}
For any $u_n>0$, Markov's inequality gives
\begin{equation*}
\mathbb P\!\left(\|\theta_n^{(T)}-\theta_{n+1}^{(T)}\|_2>u_n\right)
\le
\frac{E_n}{u_n},
\end{equation*}
and hence Assumption~\ref{ass:stab} holds with $\delta_n = \frac{E_n}{u_n}.$

We choose $u_n$ to balance the Markov bound $\delta_n=E_n/u_n$ and the buffer size in Theorem~\ref{thm: coverage}. 
Specifically, we take $u_n = E_n^{2/3},$ so that $\delta_n = E_n^{1/3}.$

On the other hand, in Theorem~\ref{thm: coverage}, the buffer is chosen as
\[
m_n=\Big\lceil \frac{n\,\overline f\,L\,u_n}{\delta_n} \Big\rceil,
\]
therefore
\[
m_n \le 1 + \frac{n\,\overline f\,L\,u_n}{\delta_n}.
\]
With $u_n=E_n^{2/3}$ and $\delta_n=E_n^{1/3}$, this yields
\[
m_n = O\!\left(n\,E_n^{1/3}\right).
\]

Since $r=\lceil(1-\alpha)(n+1)\rceil$, one has $r\ge (1-\alpha)(n+1)$ and hence $n-r+1\le \alpha(n+1)+1$, which yields
\begin{equation*}
m_n = O\!\left(n\,E_n^{1/3}\right).
\end{equation*}

Finally, we take $\eta=O(1/n)$ and $T=O(n)$, so that the product $\eta L T$ is bounded, hence $(e^{\eta L T}-1)=O(1)$. Also $0<1-(1-q)^{n+1}\le 1$. We assume the training hyperparameters are chosen so that $2C+\sigma\sqrt d = O(1)$ in $n$, which implies
\begin{equation*}
E_n = O(1/n).
\end{equation*}
Consequently, we have $u_n = E_n^{2/3}=O(n^{-2/3})$ and $\delta_n = E_n^{1/3}=O(n^{-1/3}),$
and moreover, $m_n = O\!\left(n\,E_n^{1/3}\right)=O(n^{2/3})=o(n).$ 

Applying Theorem~\ref{thm: coverage} with this choice of $(u_n,\delta_n,m_n)$ and with $\beta_n=O(1/n)$ gives
\begin{equation*}
\mathbb P\!\left(S_{n+1}^{(n)}\le \hat q\right)
\ge
(1-\beta_n)\,\Big(1-\alpha - 2\overline f L u_n - 3\delta_n - \tfrac{1}{n+1}\Big),
\end{equation*}
and the right-hand side converges to $1-\alpha$ as $n\to\infty$ because $u_n\to 0$, $\delta_n\to 0$, and $\beta_n\to 0$.
\end{proof}


\section{Additional Discussion on Quantile Estimation}\label{sec:quantile_discussion}

In this section, we revisit DP quantile estimators based on a \emph{noisy midpoint search}, as Algorithm~1 in \cite{romanus2025differentially}. At a high level, these methods maintain a bracket $[\,\texttt{left},\texttt{right}\,]$, repeatedly query a noisy count at the midpoint, update both endpoints depending on the noisy inequality, and finally return the midpoint as the DP quantile estimate. We show that this design admits structural failure modes under DP noise, both in the presence of large tie jumps and in completely tie-free settings. This motivates our use of a \emph{buffered right-endpoint} rule with one-sided updates in the main algorithm.

\begin{algorithm}[ht]
\caption{Noisy Midpoint DP Quantile (schematic version of Algorithm 1 in \cite{romanus2025differentially})}
\label{alg:noisy-midpoint}
\begin{algorithmic}[1]
\Require Calibration scores $\mathcal{S}$, significance level $\alpha$, range $[a,b]$, precision $\delta$, privacy parameter $\rho$
\Ensure DP quantile $q^{\mathrm{DP}}$
\State $r \leftarrow \big\lceil (1-\alpha)(n_{\mathrm{cal}}+1)\big\rceil$
\State $N \leftarrow \big\lceil \log_2\!\big((b-a)/\delta\big) \big\rceil$
\State $\texttt{left} \leftarrow a$, \quad $\texttt{right} \leftarrow b$, \quad $i \leftarrow 0$
\While{$i \le N$}
    \State $\texttt{mid} \leftarrow (\texttt{left} + \texttt{right})/2$
    \State $\tilde c \leftarrow \text{NoisyRC}([a,\texttt{mid}], \mathcal{S})$ \Comment{noisy count over $[a,\texttt{mid}]$}
    \If{$\tilde c < r$}
        \State $\texttt{left} \leftarrow \texttt{mid} + \delta$
    \Else
        \State $\texttt{right} \leftarrow \texttt{mid}$
    \EndIf
    \State $i \leftarrow i+1$
\EndWhile
\State \Return $q^{\mathrm{DP}} \leftarrow (\texttt{left} + \texttt{right})/2$
\end{algorithmic}
\end{algorithm}

This midpoint rule is inherently vulnerable to noisy misclassification: a single \emph{false positive} (i.e., $\tilde c \ge r$ when the true count is strictly below $r$) at a point $t$ that lies below the target quantile can force the right boundary $\texttt{right}$ below the desired level.  Thereafter, the search can only move within a bracket that never crosses the true quantile, and the final midpoint necessarily underestimates it. We illustrate this phenomenon first under large tie jumps and then under strictly increasing (no-tie) scores.

\subsubsection*{Example 1: Large tie jump (catastrophic under-coverage)}

Let $n=14$ and consider the calibration scores
\[
\mathcal S
= \underbrace{[\,0,0,0,0,0\,]}_{\text{5 zeros}}
\cup 
\underbrace{[\,10,10,10,10,10,10,10,10\,]}_{\text{8 tens}}
\cup
\underbrace{[\,11\,]}_{\text{one outlier}},
\]
so the sorted scores are $[\,0,0,0,0,0,10,\dots,10,11\,]$. For $\alpha=0.2$ we have
\[
r \;=\; \Big\lceil (1-\alpha)(n+1)\Big\rceil = \big\lceil 0.8\cdot 15\big\rceil = 12,
\]
and the $r$-th order statistic is $s_{[r]} = s_{[12]} = 10$. The empirical count function 
$C(t) := \#\{i:S_i\le t\}$ satisfies
\[
C(t) =
\begin{cases}
0, & t<0,\\[2pt]
5, & 0\le t<10,\\[2pt]
13, & 10\le t<11,\\[2pt]
14, & t\ge 11.
\end{cases}
\]
Thus $r=12$ lies strictly inside the tie jump at $t^\star=10$, where $C$ jumps from $5$ to $13$.

Run Algorithm~\ref{alg:noisy-midpoint} on $[a,b]=[0,11]$, and model the noisy count as 
$\tilde C(t) = C(t) + Z$ with $Z\sim\mathcal N(0,\sigma^2)$, independent across queries.
With positive probability (uniformly bounded away from zero in $\sigma$) the following event occurs:
at some iteration the algorithm queries a point $t_1<10$ (e.g., $t_1=9.5$), where $C(t_1)=5<r$, but the noise
realization $Z_1$ is sufficiently large and positive so that
\[
\tilde C(t_1) = C(t_1) + Z_1 \;\ge\; r.
\]
The algorithm then takes the ``$\tilde c \ge r$'' branch and shrinks the bracket from 
$[\,\texttt{left},\texttt{right}\,]$ down to $[\,\texttt{left}, t_1\,]$ with 
$\texttt{right} \leftarrow t_1 < 10 = s_{[r]}$.

From that point onward, all subsequent midpoints satisfy $\texttt{mid} \in [\texttt{left},\texttt{right}] \subset [a,10)$, 
hence $C(\texttt{mid}) = 5 < r$. While additional noisy comparisons may occasionally further reduce $\texttt{right}$,
they can never move it back above $10$. The bracket therefore collapses entirely inside $[a,10)$, and the returned
midpoint
\[
\hat q^{\mathrm{DP}} = \frac{\texttt{left}+\texttt{right}}{2}
\]
necessarily satisfies $\hat q^{\mathrm{DP}} < 10 = s_{[r]}$, so $C(\hat q^{\mathrm{DP}})=5<r$ and the induced prediction
set $\{y: s(x_{\rm test},y)\le \hat q^{\mathrm{DP}}\}$ is too small, causing under-coverage. This example shows that a single
false positive strictly below a large tie jump suffices for the noisy midpoint rule to output $\hat q^{\mathrm{DP}}<s_{[r]}$.

\subsubsection*{Example 2: No-tie scores (failure without ties)}

The failure of the noisy midpoint rule is not specific to ties. It persists even when all calibration scores are distinct.

Let $n=10$ and take strictly increasing scores
\[
\mathcal S = [\,1,2,3,4,5,6,7,8,9,10\,].
\]
For $\alpha=0.2$,
\[
r = \Big\lceil (1-\alpha)(n+1)\Big\rceil = \big\lceil 0.8\cdot 11\big\rceil = 9,
\qquad
s_{[r]} = s_{[9]} = 9.
\]
The empirical count $C(t) = \#\{i:S_i\le t\}$ increases by $1$ at each integer and has no ties.
We again run Algorithm~\ref{alg:noisy-midpoint} on $[a,b]=[1,10]$ with 
$\tilde C(t) = C(t) + Z$, $Z\sim\mathcal N(0,\sigma^2)$.

Consider the following event, which has strictly positive probability for any fixed $\sigma>0$:
at some iteration the algorithm queries $t_1 = 8.5$. Then $C(8.5) = 8 < r$, but the noise
realization $Z_1$ happens to satisfy $Z_1 \ge 1$, so that
\[
\tilde C(8.5) = 8 + Z_1 \;\ge\; 9 = r.
\]
The midpoint rule then takes the ``$\tilde c \ge r$'' branch and shrinks the right boundary to
\[
\texttt{right} \leftarrow 8.5 < 9 = s_{[r]}.
\]

From this point onward, all subsequent midpoints lie in $[\,\texttt{left},\texttt{right}\,] \subset [1,8.5)$,
and therefore satisfy $C(\texttt{mid}) \le 8 < r$. Additional positive noises can only move $\texttt{right}$
further below $8.5$, while negative or small noises move $\texttt{left}$ upward but never beyond $\texttt{right}$.
Consequently, the interval collapses entirely inside $[1,8.5)$ and the final returned midpoint satisfies
\[
\hat q^{\mathrm{DP}}
= \frac{\texttt{left}+\texttt{right}}{2}
\;<\; 8.5
\;<\; 9 = s_{[r]},
\]
so $C(\hat q^{\mathrm{DP}})\le 8<r$ and the induced prediction set again under-covers. 
This example demonstrates that the midpoint-return rule can under-estimate the target quantile purely due to a single noisy false positive below $s_{[r]}$, even in the absence of ties.\\

\noindent\textbf{Implications for Algorithm Design} Our proposed Algorithm~\ref{alg:dp_binary_search} structurally addresses these vulnerabilities. First, by employing a one-sided noise correction $\tau$, we ensure that the condition $\tilde{C}_n(t) \ge r'$ implies the true count condition $C_n(t) \ge r$ with high probability, effectively blocking the ``false positive'' failure path. Second, by returning the \texttt{right} endpoint instead of the midpoint, we maintain the invariant that the returned threshold is always an upper bound on the valid region throughout the search process (conditioned on the good event). This conservative design deliberately trades off a small amount of efficiency (slightly larger sets) to strictly guarantee coverage, a necessity in safety-critical applications.

\section{Extended Numerical Studies}\label{supp:extended numerical studies}

In this section, we provide extended numerical studies. It begins with the detailed privacy accounting used throughout the analysis. Next, full results for the two real data analyses, then additional synthetic data experiments. 

\subsection{Details on Privacy Accounting}\label{supp:privacy_accounting}

This section provides a detailed description of the privacy accounting used in our experiments.
Throughout, the target privacy level is $(\varepsilon,\delta)$ with $\delta=10^{-5}$, and $\varepsilon\in\{0.5,1.0,2.0\}$.
The experimental pipeline consists of two private stages: (i) DP-SGD training (implemented in Opacus) and (ii) a private quantile routine based on $K$ noisy count queries (Algorithm~2), where $K=\texttt{QUANTILE\_STEPS}$.\\

\noindent\textbf{Stage 1: DP-SGD training and its RDP profile.}
Model training is performed using DP-SGD with gradient clipping and additive Gaussian noise, as implemented by Opacus.
In each run, Opacus maintains an \emph{RDP accountant} that records the sampling rate and noise multiplier used during training.
Concretely, Opacus stores an internal \emph{history} of tuples of the form
\[
(\sigma_{\mathrm{sgd}},\, q,\, T),
\]
where $\sigma_{\mathrm{sgd}}$ is the noise multiplier, $q$ is the sampling rate, and $T$ is the number of steps accumulated for that configuration.
Given a fixed set of RDP orders $\mathcal{A}$ (we use Opacus' default order grid), Opacus' analysis routines provide the corresponding order-wise RDP values for DP-SGD.
Denoting by $\mathrm{RDP}_{\mathrm{train}}(\alpha)$ the total training RDP at order $\alpha\in\mathcal{A}$, the reconstruction used in our code is
\[
\mathrm{RDP}_{\mathrm{train}}(\alpha)
=
\sum_{(\sigma_{\mathrm{sgd}},q,T)\in\mathrm{history}}
\mathrm{RDP}_{\mathrm{sgd}}(\alpha;\sigma_{\mathrm{sgd}},q,T),
\qquad \alpha\in\mathcal{A},
\]
where $\mathrm{RDP}_{\mathrm{sgd}}(\cdot)$ is computed by Opacus' internal function \texttt{compute\_rdp} for the corresponding subsampled Gaussian mechanism analysis.\\

\noindent\textbf{Stage 2: Private quantile via noisy counts.}
Let $S_1,\dots,S_n$ denote the conformity scores on the calibration set for the current run.
The private quantile routine performs $K$ adaptive binary-search steps, and at each step evaluates a (thresholded) count query $C(t) \;=\; \sum_{i=1}^n \mathbbm{1}\{S_i \le t\},$ releasing (internally) a noisy count $\tilde C(t)=C(t)+Z$ with $Z\sim\mathcal{N}(0,\sigma_q^2)$.
The $\ell_2$-sensitivity of $C(t)$ is $\Delta=1$, so a single noisy count query induces RDP
\[
\mathrm{RDP}_{\mathrm{Gauss}}(\alpha;\sigma_q)
=
\frac{\alpha \Delta^2}{2\sigma_q^2},
\qquad \alpha>1,
\]
and basic composition over $K$ queries yields
\[
\mathrm{RDP}_{\mathrm{qt}}(\alpha;\sigma_q)
=
K\cdot \frac{\alpha \Delta^2}{2\sigma_q^2},
\qquad \alpha\in\mathcal{A}.
\]
The auxiliary parameters $m_n$ and $\tau$ used by DP-SCP-F only affect the (nonprivate) decision threshold $r'$ inside the quantile routine; they do not change the sequence of private primitives (noisy count queries) and thus do not affect the privacy accounting beyond the dependence on $\sigma_q$.\\

\noindent\textbf{RDP-to-$(\varepsilon,\delta)$ conversion (Opacus implementation).}
Given an order-wise RDP profile $\{\mathrm{RDP}(\alpha):\alpha\in\mathcal{A}\}$, we convert it to an $(\varepsilon,\delta)$ guarantee by optimizing over orders using Opacus' routine \texttt{get\_privacy\_spent}.
We denote the resulting value by
\[
\varepsilon(\delta)
=
\texttt{get\_privacy\_spent}\bigl(\{\mathrm{RDP}(\alpha)\}_{\alpha\in\mathcal{A}},\,\delta\bigr),
\]
which matches Opacus' own \texttt{get\_epsilon} outputs when applied to the same accountant state.\\

\noindent\textbf{Choosing the calibration noise level $\sigma_q^\star$.}
We select the per-query noise $\sigma_q$ to enforce the \emph{global} privacy target $(\varepsilon_\star,\delta_\star)$.
For DP-SCP (full reuse), the two stages compose sequentially in RDP: $\mathrm{RDP}_{\mathrm{tot}}(\alpha;\sigma_q)=\mathrm{RDP}_{\mathrm{train}}(\alpha)+\mathrm{RDP}_{\mathrm{qt}}(\alpha;\sigma_q),$ for $\alpha\in\mathcal{A}$.

We then compute $\varepsilon_{\mathrm{tot}}(\delta_\star;\sigma_q)
=
\texttt{get\_privacy\_spent}\bigl(\{\mathrm{RDP}_{\mathrm{tot}}(\alpha;\sigma_q)\}_{\alpha\in\mathcal{A}},\,\delta_\star\bigr),
$
and define $\sigma_q^\star$ as the an approximate smallest (up to bisection tolerance) value such that $\varepsilon_{\mathrm{tot}}(\delta_\star;\sigma_q^\star)\le \varepsilon_\star$.
Since $\mathrm{RDP}_{\mathrm{qt}}(\alpha;\sigma_q)$ decreases pointwise in $\sigma_q$ for every $\alpha$, the map $\sigma_q\mapsto \varepsilon_{\mathrm{tot}}(\delta_\star;\sigma_q)$ is nonincreasing.

Accordingly, we compute $\sigma_q^\star$ numerically by (i) bracketing (geometrically increasing $\sigma_q$ until the inequality holds) and (ii) bisection on the bracketed interval for a fixed number of iterations.
If training alone already exceeds the target, i.e., $\varepsilon_{\mathrm{train}}(\delta):=\texttt{get\_privacy\_spent}\bigl(\{\mathrm{RDP}_{\mathrm{train}}(\alpha)\}_{\alpha\in\mathcal{A}},\,\delta\bigr)>\varepsilon,$
then no feasible $\sigma_q^\star$ exists and the run is declared infeasible.

For DP-Split (disjoint split), the calibration stage is accounted for in isolation by setting $\mathrm{RDP}_{\mathrm{train}}\equiv 0$ in the display above and choosing $\sigma_q^\star$ so that the quantile routine alone satisfies $(\varepsilon,\delta)$, while DP-SGD training separately targets $(\varepsilon,\delta)$ on the training split.
Under parallel composition over disjoint individual sets, the overall mechanism therefore satisfies $(\varepsilon,\delta)$.\\

\noindent\textbf{Budget allocation via $\rho$.}
For DP-SCP, we also study a privacy allocation parameter $p\in(0,1)$.
The DP-SGD training stage is targeted at $(\varepsilon_{\mathrm{train}},\delta)$ with $\varepsilon_{\mathrm{train}}=p\,\varepsilon$, and $\sigma_q^\star$ is then chosen (by the same procedure above) so that the composed mechanism meets the global target $(\varepsilon,\delta)$.
This produces an explicit empirical tradeoff between privacy spent in training and the calibration noise required at test time to certify the same overall privacy level.\\

\noindent\textbf{Implementation notes.}
Within each trial, input features and targets are standardised using statistics computed from the training split, and the same transformation is applied to calibration and test.
These transformations are used internally and are not released; the reported privacy accounting pertains to the randomized DP-SGD training stage and the noisy-count primitives used in the private quantile routine, with all $(\varepsilon,\delta)$ conversions performed by Opacus' own RDP conversion routines.

\subsection{Detailed Reports on Real-World Benchmarks}\label{sec:supp_realworld}

This section provides detailed experimental results on two real-world benchmarks: (i) California Housing (regression) and (ii) BloodMNIST from MedMNIST (image classification). For each benchmark, we report full tables across privacy budgets and method variants (including the $\rho$-allocation sweep), and then provide a detailed discussion of coverage behaviour and utility trade-offs.
We use a common evaluation lens—validity (coverage) and efficiency (interval width or set size)—to highlight how DP-SCP compares to DP-Split under the same target privacy $(\varepsilon,\delta)$.
The remainder of this section is organized as follows: Section~\ref{app:housing} presents the California Housing results, and Section~\ref{app:bloodmnist} presents the BloodMNIST results.

\subsubsection{MedMNIST Image Classification: BloodMNIST}\label{app:bloodmnist}

\noindent\textbf{Setup and protocol.}
We evaluate the proposed methods on BloodMNIST (8-class image classification) from MedMNIST.
We use the official train/val/test splits and form a pool by concatenating train and val, while keeping the official test set fixed.
We extract $512$-dimensional features using a frozen ImageNet-pretrained ResNet-18 backbone and train only a linear classification head with DP-SGD.
Conformal scores are defined as $s(x,y)=1-\hat f(x)_y$, where $\hat f(x)_y$ is the predicted probability assigned to the true class.
We consider privacy budgets $\epsilon\in\{0.5,1.0,2.0\}$ with fixed $\delta=10^{-5}$ and repeat the experiment over $30$ trials.
We compare DP-Split, DP-SCP-F, and DP-SCP-A; DP-SCP-F uses the buffered right-endpoint search with $m_n=10$, while DP-SCP-A uses the unbuffered variant with $\tau=0$.
For DP-SCP methods, we sweep the training privacy allocation $p \in\{0.3,0.5,0.7,0.9\}$.
We report Coverage, Efficiency (average set size), and Informativeness (singleton rate), each as mean (std) over trials.

\begin{table}[!ht]
\centering
\caption{BloodMNIST performance comparison for privacy budget $\varepsilon = 0.5$ ($\delta=10^{-5}$).}
\label{tab:blood_results_eps_0p5}
\begin{tabular}{llccccc}
\toprule
Method & $p$ & Coverage & Efficiency (Set Size) & Informativeness & Train $\epsilon$ & $\sigma_q$ \\
\midrule
\multirow{4}{*}{DP-SCP-F}
 & 0.3 & 0.911 (0.005) & 2.072 (0.121) & 0.406 (0.030) & 0.148 & 35.83 \\
 & 0.5 & 0.912 (0.005) & 1.746 (0.065) & 0.509 (0.021) & 0.245 & 39.98 \\
 & 0.7 & 0.915 (0.005) & 1.697 (0.058) & 0.528 (0.020) & 0.341 & 47.90 \\
 & 0.9 & 0.926 (0.007) & 1.768 (0.067) & 0.506 (0.022) & 0.448 & 79.07 \\
\addlinespace
\multirow{4}{*}{DP-SCP-A}
 & 0.3 & 0.898 (0.005) & 1.936 (0.106) & 0.442 (0.029) & 0.148 & 35.83 \\
 & 0.5 & 0.898 (0.006) & 1.632 (0.057) & 0.549 (0.022) & 0.245 & 39.98 \\
 & 0.7 & 0.898 (0.006) & 1.561 (0.046) & 0.578 (0.020) & 0.341 & 47.90 \\
 & 0.9 & 0.898 (0.007) & 1.530 (0.051) & 0.593 (0.022) & 0.448 & 79.07 \\
\addlinespace
DP-Split & -- & 0.900 (0.007) & 2.095 (0.083) & 0.363 (0.021) & 0.498 & 34.29 \\
\addlinespace
Split CP & -- & 0.900 (0.006) & 0.993 (0.009) & 0.956 (0.004) & -- & -- \\
\addlinespace
Naive Full & -- & 0.890 (0.005) & 0.946 (0.006) & 0.946 (0.006) & -- & -- \\
\bottomrule
\end{tabular}
\end{table}

\begin{table}[!ht]
\centering
\caption{BloodMNIST performance comparison for privacy budget $\varepsilon = 1.0$ ($\delta=10^{-5}$).}
\label{tab:blood_results_eps_1p0}
\begin{tabular}{llccccc}
\toprule
Method & $p$ & Coverage & Efficiency (Set Size) & Informativeness & Train $\epsilon$ & $\sigma_q$ \\
\midrule
\multirow{4}{*}{DP-SCP-F}
 & 0.3 & 0.905 (0.006) & 1.645 (0.045) & 0.545 (0.017) & 0.290 & 19.07 \\
 & 0.5 & 0.906 (0.006) & 1.576 (0.035) & 0.574 (0.015) & 0.490 & 21.08 \\
 & 0.7 & 0.907 (0.006) & 1.569 (0.030) & 0.578 (0.013) & 0.697 & 24.92 \\
 & 0.9 & 0.911 (0.006) & 1.590 (0.035) & 0.569 (0.015) & 0.894 & 37.60 \\
\addlinespace
\multirow{4}{*}{DP-SCP-A}
 & 0.3 & 0.898 (0.006) & 1.588 (0.044) & 0.567 (0.018) & 0.290 & 19.07 \\
 & 0.5 & 0.898 (0.006) & 1.521 (0.033) & 0.597 (0.015) & 0.490 & 21.08 \\
 & 0.7 & 0.898 (0.007) & 1.503 (0.031) & 0.605 (0.016) & 0.697 & 24.92 \\
 & 0.9 & 0.898 (0.007) & 1.496 (0.027) & 0.609 (0.014) & 0.894 & 37.60 \\
\addlinespace
DP-Split & -- & 0.901 (0.006) & 2.028 (0.066) & 0.379 (0.018) & 0.994 & 18.09 \\
\bottomrule
\end{tabular}
\end{table}

\begin{table}[!ht]
\centering
\caption{BloodMNIST performance comparison for privacy budget $\varepsilon = 2.0$ ($\delta=10^{-5}$).}
\label{tab:blood_results_eps_2p0}
\begin{tabular}{llccccc}
\toprule
Method & $p$ & Coverage & Efficiency (Set Size) & Informativeness & Train $\epsilon$ & $\sigma_q$ \\
\midrule
\multirow{4}{*}{DP-SCP-F}
 & 0.3 & 0.902 (0.006) & 1.536 (0.030) & 0.590 (0.014) & 0.597 & 10.15 \\
 & 0.5 & 0.903 (0.006) & 1.521 (0.026) & 0.598 (0.013) & 0.991 & 10.92 \\
 & 0.7 & 0.903 (0.006) & 1.518 (0.025) & 0.599 (0.012) & 1.396 & 12.45 \\
 & 0.9 & 0.905 (0.006) & 1.530 (0.026) & 0.594 (0.013) & 1.795 & 19.27 \\
\addlinespace
\multirow{4}{*}{DP-SCP-A}
 & 0.3 & 0.898 (0.006) & 1.507 (0.027) & 0.603 (0.014) & 0.597 & 10.15 \\
 & 0.5 & 0.898 (0.006) & 1.492 (0.025) & 0.611 (0.014) & 0.991 & 10.92 \\
 & 0.7 & 0.898 (0.006) & 1.487 (0.023) & 0.613 (0.013) & 1.396 & 12.45 \\
 & 0.9 & 0.899 (0.006) & 1.484 (0.024) & 0.614 (0.013) & 1.795 & 19.27 \\
\addlinespace
DP-Split & -- & 0.900 (0.006) & 2.003 (0.054) & 0.385 (0.015) & 1.991 & 9.61 \\
\bottomrule
\end{tabular}
\end{table}

\noindent\textbf{Analysis of Results.} Tables~\ref{tab:blood_results_eps_0p5}--\ref{tab:blood_results_eps_2p0} reveal three consistent patterns: calibration/validity behaviour under full reuse, a strong full-data advantage of DP-SCP over DP-Split in utility metrics, and systematic effects of the allocation sweep $p$.\\

\noindent\textbf{Validity behaviour and diagnostic baselines.}
At $\varepsilon=0.5$, the non-private Naive Full baseline undercovers (Coverage $0.890\,(0.005)$), reflecting the expected validity gap under full reuse without DP-calibrated quantile selection.
In contrast, Split CP remains close to nominal (Coverage $0.900\,(0.006)$), providing an ``oracle validity'' reference under disjoint splitting.
Among DP methods, DP-Split stays tightly around the nominal level across $\varepsilon$ (e.g., $0.900\,(0.007)$ at $\varepsilon=0.5$, $0.901\,(0.006)$ at $\varepsilon=1.0$, and $0.900\,(0.006)$ at $\varepsilon=2.0$).
DP-SCP-F is consistently more conservative due to buffering: for example, at $\varepsilon=0.5$ and $\rho=0.5$, DP-SCP-F achieves Coverage $0.912\,(0.005)$.
DP-SCP-A (with $\tau=0$) is intentionally less conservative and stays very close to nominal across all settings (Coverage $\approx 0.898$--$0.899$).\\

\noindent\textbf{Full-data advantage: DP-SCP produces substantially sharper sets than DP-Split.}
Across all $\varepsilon$, DP-SCP yields dramatically smaller sets (lower Efficiency) and higher singleton rates (higher Informativeness) than DP-Split.
For instance, at $\varepsilon=1.0$ and $\rho=0.5$, DP-SCP-A attains Efficiency $1.521\,(0.033)$ and Informativeness $0.597\,(0.015)$, while DP-Split yields Efficiency $2.028\,(0.066)$ and Informativeness $0.379\,(0.018)$.
At $\varepsilon=2.0$, the separation is even clearer: DP-SCP-A at $\rho=0.5$ achieves Efficiency $1.492\,(0.025)$ with singleton rate $0.611\,(0.014)$, compared to DP-Split Efficiency $2.003\,(0.054)$ and singleton rate $0.385\,(0.015)$.
These gaps indicate that, on this image task, training on the full pool (DP-SCP) yields a substantially sharper classifier than training on a split subset (DP-Split), and this sharpness transfers directly to prediction-set size and singleton frequency.\\

\noindent\textbf{Privacy-level trends across $\varepsilon$.}
Holding $p$ fixed, utility improves with $\varepsilon$ for all private methods.
For example at $p=0.5$, DP-SCP-A improves from Efficiency $1.632\,(0.057)$ and singleton rate $0.549\,(0.022)$ at $\varepsilon=0.5$ to $1.521\,(0.033)$ and $0.597\,(0.015)$ at $\varepsilon=1.0$, and further to $1.492\,(0.025)$ and $0.611\,(0.014)$ at $\varepsilon=2.0$.
DP-Split exhibits the same qualitative trend but remains consistently less informative (singleton rate $0.363\to0.379\to0.385$ as $\varepsilon$ increases from $0.5$ to $2.0$).
The calibrated quantile noise scale $\sigma_q$ also decreases with $\varepsilon$ (e.g., $\sigma_q\approx 39.98$ at $\varepsilon=0.5,p=0.5$ versus $\sigma_q\approx 10.92$ at $\varepsilon=2.0,p=0.5$), matching the expected behaviour of privacy accounting.\\

\noindent\textbf{The allocation sweep $p$: training sharpness versus calibration conservatism.}
Sweeping $p$ reveals a clear allocation effect: increasing $p$ improves the DP-SGD training signal (smaller sets, higher singleton rate) but leaves less budget for calibration, which increases $\sigma_q$.
This is most visible for DP-SCP-F at $\varepsilon=0.5$: moving from $p=0.3$ to $p=0.7$ improves Efficiency from $2.072$ to $1.697$ and increases Informativeness from $0.406$ to $0.528$, but at $p=0.9$ calibration noise becomes much larger ($\sigma_q\approx 79.07$), and DP-SCP-F becomes notably more conservative (Coverage $0.926$) with a slight rollback in Efficiency ($1.768$).
DP-SCP-A benefits more monotonically in these runs (e.g., at $\varepsilon=0.5$, Efficiency decreases from $1.936$ to $1.530$ and singleton rate increases from $0.442$ to $0.593$ as $\rho$ increases from $0.3$ to $0.9$), reflecting that it does not impose the additional conservative correction on the noisy calibration counts.\\

\noindent\textbf{Finite vs.\ asymptotic DP-SCP on images.} DP-SCP-F is systematically more conservative (higher coverage and larger sets) than DP-SCP-A, while DP-SCP-A yields sharper sets with higher singleton rates. For example, at $\varepsilon=2.0$ and $\rho=0.5$, DP-SCP-F has Coverage $0.903\,(0.006)$ and Efficiency $1.521\,(0.026)$, whereas DP-SCP-A has Coverage $0.898\,(0.006)$ and Efficiency $1.492\,(0.025)$. At stricter privacy ($\varepsilon=0.5,\rho=0.5$), the same qualitative gap appears: DP-SCP-F remains more conservative (Coverage $0.912$) with larger sets (Efficiency $1.746$), while DP-SCP-A is sharper (Efficiency $1.632$) and more informative (singleton rate $0.549$). Overall, DP-SCP substantially improves set sharpness relative to DP-Split on BloodMNIST, and the finite-sample safeguard primarily manifests as additional conservatism when calibration noise is large (small $\varepsilon$ or large $\rho$).

\subsubsection{California Housing Data}\label{app:housing}

\noindent\textbf{Setup and protocol.}
We consider the California Housing regression dataset (\texttt{sklearn.fetch\_california\_housing}).
In each trial, we randomly split the data into a pool (80\%) and a test set (20\%).
We train a three-layer MLP with hidden widths $(32,16)$ using DP-SGD (batch size $128$, $50$ epochs, learning rate $10^{-3}$, max grad norm $1$).
To align preprocessing with the learning protocol, we standardise both covariates and targets using statistics fit on the training split within each trial, and apply the same transform to calibration and test.
We use absolute residual scores $s(x)=|y-\hat y(x)|$ and construct symmetric prediction intervals of the form $[\hat y(x)-\hat q,\hat y(x)+\hat q]$, where $\hat q$ is estimated from calibration residuals; reported widths are $2\hat q$ mapped back to the original target scale.
We fix $\alpha=0.1$ and report Coverage and average interval width (original scale), each as mean (sd) over $30$ trials.
We compare DP-Split, DP-SCP-F, and DP-SCP-A under target privacy budgets $\epsilon\in\{0.5,1.0,2.0\}$ with fixed $\delta=10^{-5}$.
DP-Split uses a disjoint train/calibration split, while DP-SCP methods reuse the full pool and sweep the training allocation $p\in\{0.3,0.5,0.7,0.9\}$.
DP-SCP-F uses the buffered right-endpoint search ($r'=r+m_n+\tau$ with $m_n=10$), whereas DP-SCP-A uses the unbuffered variant ($r'=r$).
Non-private baselines (Naive Full and Split CP) do not depend on $\epsilon$ and are reported once under $\epsilon=0.5$ for reference.


\begin{table}[!ht]
\centering
\caption{California Housing regression results for privacy budget $\varepsilon=0.5$ ($\delta=10^{-5}$).}
\label{tab:housing_eps_0p5}
\begin{tabular}{llcc}
\toprule
Sample Size & Method ($p$) & Coverage & Avg. width (orig. scale) \\
\midrule
\multirow{8}{*}{--}
 & DP-SCP-F (0.3) & 0.912 (0.004) & 2.313 (0.051) \\
 & DP-SCP-F (0.5) & 0.913 (0.004) & \textbf{2.306 (0.049)} \\
 & DP-SCP-F (0.7) & 0.917 (0.005) & 2.351 (0.063) \\
 & DP-SCP-F (0.9) & 0.927 (0.006) & 2.521 (0.083) \\
\addlinespace
 & DP-SCP-A (0.3) & 0.899 (0.004) & 2.140 (0.041) \\
 & DP-SCP-A (0.5) & 0.898 (0.005) & 2.119 (0.043) \\
 & DP-SCP-A (0.7) & 0.899 (0.005) & \textbf{2.116 (0.041)} \\
 & DP-SCP-A (0.9) & 0.899 (0.006) & 2.120 (0.049) \\
\addlinespace
 & DP-Split (-- ) & 0.898 (0.007) & 2.193 (0.106) \\
\addlinespace
 & Naive Full (-- ) & 0.896 (0.004) & 1.806 (0.029) \\
 & Split CP (-- ) & 0.898 (0.005) & 1.917 (0.082) \\
\bottomrule
\end{tabular}
\end{table}

\begin{table}[!ht]
\centering
\caption{California Housing regression results for privacy budget $\varepsilon=1.0$ ($\delta=10^{-5}$).}
\label{tab:housing_eps_1p0}
\begin{tabular}{llcc}
\toprule
Sample Size & Method ($p$) & Coverage & Avg. width (orig. scale) \\
\midrule
\multirow{8}{*}{--}
 & DP-SCP-F (0.3) & 0.906 (0.004) & \textbf{2.203 (0.040)} \\
 & DP-SCP-F (0.5) & 0.907 (0.004) & 2.209 (0.041) \\
 & DP-SCP-F (0.7) & 0.908 (0.004) & 2.231 (0.041) \\
 & DP-SCP-F (0.9) & 0.914 (0.005) & 2.320 (0.056) \\
\addlinespace
 & DP-SCP-A (0.3) & 0.899 (0.004) & 2.116 (0.035) \\
 & DP-SCP-A (0.5) & 0.899 (0.005) & 2.113 (0.035) \\
 & DP-SCP-A (0.7) & 0.898 (0.005) & \textbf{2.111 (0.034)} \\
 & DP-SCP-A (0.9) & 0.899 (0.005) & 2.115 (0.040) \\
\addlinespace
 & DP-Split (-- ) & 0.898 (0.006) & 2.183 (0.096) \\
\bottomrule
\end{tabular}
\end{table}

\begin{table}[!ht]
\centering
\caption{California Housing regression results for privacy budget $\varepsilon=2.0$ ($\delta=10^{-5}$).}
\label{tab:housing_eps_2p0}
\begin{tabular}{llcc}
\toprule
Sample Size & Method ($p$) & Coverage & Avg. width (orig. scale) \\
\midrule
\multirow{8}{*}{--}
 & DP-SCP-F (0.3) & 0.902 (0.004) & \textbf{2.158 (0.036)} \\
 & DP-SCP-F (0.5) & 0.902 (0.004) & 2.160 (0.034) \\
 & DP-SCP-F (0.7) & 0.903 (0.004) & 2.171 (0.036) \\
 & DP-SCP-F (0.9) & 0.907 (0.004) & 2.217 (0.036) \\
\addlinespace
 & DP-SCP-A (0.3) & 0.898 (0.005) & 2.110 (0.033) \\
 & DP-SCP-A (0.5) & 0.898 (0.004) & 2.109 (0.032) \\
 & DP-SCP-A (0.7) & 0.898 (0.004) & 2.110 (0.031) \\
 & DP-SCP-A (0.9) & 0.898 (0.005) & \textbf{2.108 (0.035)} \\
\addlinespace
 & DP-Split (-- ) & 0.898 (0.005) & 2.187 (0.091) \\
\bottomrule
\end{tabular}
\end{table}

\noindent\textbf{Analysis of Results.} Tables~\ref{tab:housing_eps_0p5}--\ref{tab:housing_eps_2p0} highlight three consistent patterns: (i) a clear conservatism--utility trade-off between DP-SCP-F and DP-SCP-A, (ii) a sharp allocation effect in the finite variant through $\rho$, and (iii) modest but coherent privacy-level trends across $\varepsilon$.\\

\noindent\textbf{Coverage behaviour: DP-SCP-F is conservative; DP-SCP-A and DP-Split stay near nominal.} Across all privacy levels, DP-SCP-A maintains coverage essentially at the nominal level ($\approx 0.898$--$0.899$) and is remarkably stable across $p$. DP-Split similarly stays near nominal (e.g., $0.898$--$0.898$ across $\varepsilon$), reflecting that the split conformal structure mitigates the instability from full reuse. In contrast, DP-SCP-F is systematically conservative, and this conservatism becomes more pronounced as $p$ increases, especially under stringent privacy. For example, at $\varepsilon=0.5$, DP-SCP-F coverage rises from $0.912\,(0.004)$ at $p=0.3$ to $0.927\,(0.006)$ at $p=0.9$.
This is the expected signature of the buffered correction in $r'=r+m_n+\tau$, which inflates the rank threshold used by the private quantile routine.\\

\noindent\textbf{Interval width and the full-data advantage: DP-SCP-A is uniformly sharper than DP-Split.}
A key utility takeaway is that DP-SCP-A yields consistently narrower intervals than DP-Split at every privacy level.
At $\varepsilon=0.5$, DP-SCP-A achieves widths around $2.116$--$2.140$ across $p$, while DP-Split yields $2.193\,(0.106)$.
At $\varepsilon=1.0$ and $2.0$, the same ordering persists: DP-SCP-A remains near $2.108$--$2.116$, whereas DP-Split remains around $2.18$--$2.19$.
This pattern is consistent with the core mechanism of DP-SCP: reusing the full pool for training avoids the $n/2$ training bottleneck inherent to DP-Split, leading to a sharper predictor and hence smaller residual quantiles.
The non-private baselines provide the expected reference point: Naive Full and Split CP achieve narrower widths ($1.806$ and $1.917$) because they do not pay privacy noise, but they are not the relevant comparison under the target DP budgets.\\

\noindent\textbf{Allocation sweep $p$: a pronounced effect for DP-SCP-F, minimal effect for DP-SCP-A.}
The $p$-sweep isolates the privacy-allocation effect under a fixed global target $(\varepsilon,\delta)$.
Empirically, DP-SCP-A is essentially invariant across $p$ in both coverage and width (e.g., at $\varepsilon=2.0$ its width stays within $2.108$--$2.110$), which is consistent with using $r'=r$ so that the calibration noise influences only the stochasticity of the noisy counts, not the decision threshold itself.
DP-SCP-F, however, exhibits a clear monotone degradation in utility as $\rho$ increases: at $\varepsilon=0.5$, the width increases from $2.306\,(0.049)$ at $\rho=0.5$ to $2.521\,(0.083)$ at $\rho=0.9$ alongside a marked increase in coverage.
This behaviour aligns with the structure of DP-SCP-F: increasing $\rho$ spends more privacy in DP-SGD training and leaves less for calibration, which forces the private quantile routine to operate with larger effective calibration noise and consequently a larger correction term $\tau$; the resulting increase in $r'$ pushes the estimated threshold to the right, widening the released intervals.\\

\noindent\textbf{Privacy-level trends across $\varepsilon$.}
For DP-SCP-F, increasing $\varepsilon$ reduces conservatism and improves utility: the average width decreases (e.g., at $\rho=0.3$, from $2.313$ at $\varepsilon=0.5$ to $2.203$ at $\varepsilon=1.0$ and $2.158$ at $\varepsilon=2.0$), and coverage moves closer to nominal.
For DP-SCP-A, widths are already stable and only mildly improve with $\varepsilon$, consistent with the absence of the additional correction in $r'$.
DP-Split shows comparatively weak sensitivity to $\varepsilon$ in this experiment; its width remains around $2.18$--$2.19$, indicating that the cost of splitting data is a dominant factor relative to the incremental reduction of privacy noise within this $\varepsilon$ range.

\subsection{Additional Synthetic Data Experiments}
\label{app:synthetic_experiments}
To complement the real-data studies and to validate the qualitative implications of our theory under controlled conditions, we conduct two synthetic experiments. Collectively, these experiments isolate and verify the distinct mechanisms driving our framework's performance:


\subsubsection{Experiment I: Stability vs.\ Estimation accuracy}
\label{app:exp1_stability_optimality}

This experiment visualizes the central dichotomy behind our analysis: privacy noise can degrade estimation accuracy, while the \emph{difference} between two DP-SGD trajectories trained on adjacent datasets can remain small.

We simulate binary logistic regression with $N=1000$ samples and $d=10$ features. Covariates are drawn i.i.d.\ as $X_i \sim \mathcal{N}(0, I_d), i=1,\dots,N,$ and labels are generated according to
\[
Y_i \mid X_i \sim \mathrm{Bernoulli}\!\left(\sigma(X_i^\top \theta^\star)\right),
\quad \text{where}\quad \sigma(u) = \frac{1}{1+e^{-u}}.
\]
We set the ground-truth parameter $\theta^\star \in \mathbb{R}^d$ to have alternating signs and decaying magnitudes:
\[
\theta^\star_j = \frac{(-1)^j}{j+1}, \qquad j=0,1,\dots,d-1.
\]

We train two DP-SGD runs on $D_n$ and $D_{n+1}$ under a \emph{synchronized coupling}: both runs use the same initialisation, the same (Poisson) subsampling/masking sequence, and the same Gaussian noise sequence. Under this coupling, the only source of trajectory divergence is the presence or absence of the extra point $z$ within minibatches.

We fix $\delta=10^{-5}$ and consider $\epsilon \in \{0.5,1.0,2.0\}$. Across iterations $t=1,\dots,T$, we track
\begin{equation*}
\begin{aligned}
\text{Estimation error: } &\|\theta_t - \theta^\star\|_2,
\\
\text{Stability gap: } &\|\theta_t(D_n) - \theta_t(D_{n+1})\|_2.
\end{aligned}
\end{equation*}

Figure~\ref{fig:exp1} demonstrates a sharp separation between optimality and stability:
as $\epsilon$ increases (weaker privacy noise), the estimation error curve improves,
whereas the stability gap remains essentially near zero across all privacy regimes.
This empirically supports the perspective adopted in our theory that even when privacy noise
prevents convergence to the exact optimum, a shared-randomness coupling yields strong
algorithmic stability under add/delete adjacency, which is the quantity directly controlling
the validity gap in our full-data conformal construction.

\begin{figure}[H]
  \centering
  \includegraphics[scale=0.5]{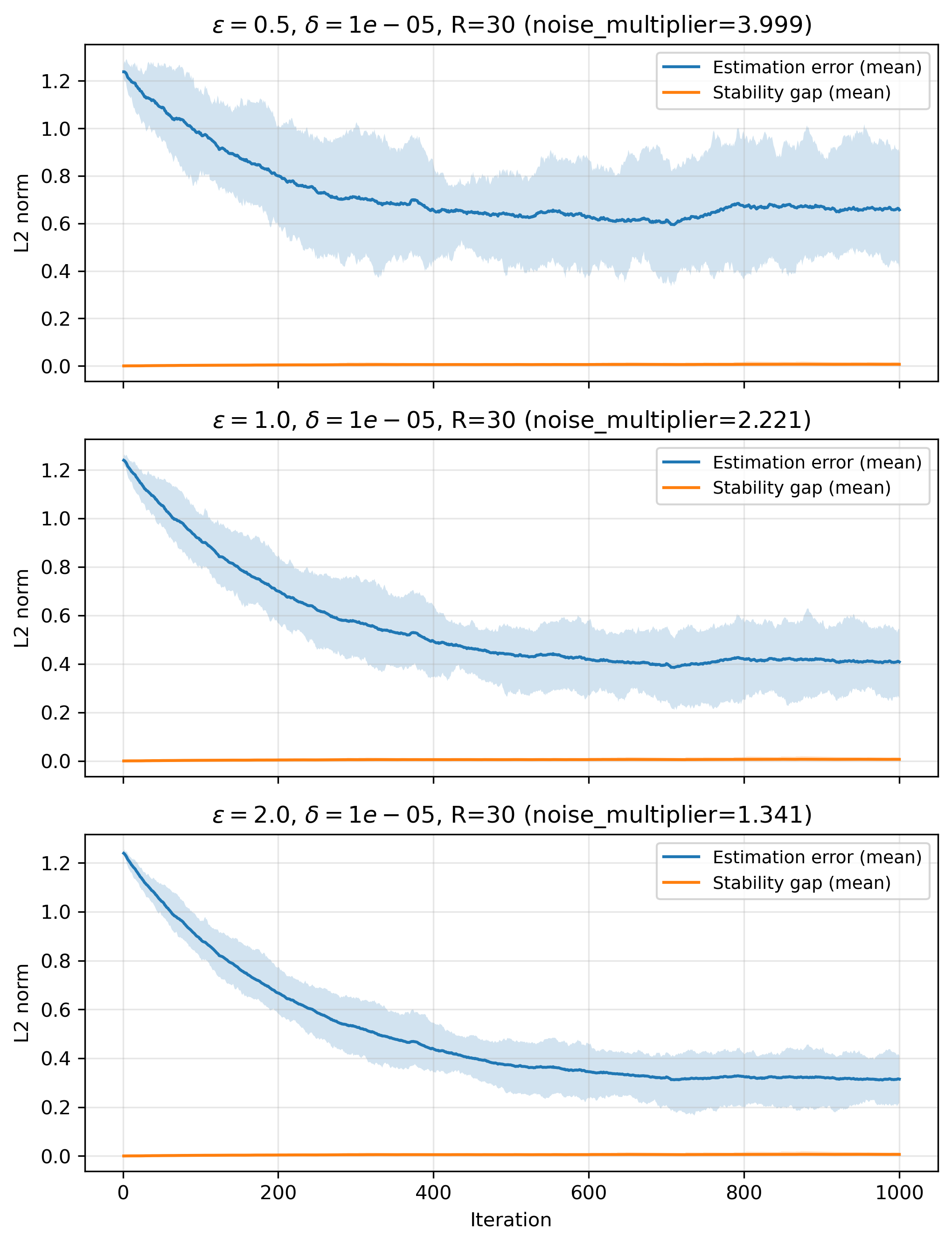}
  \caption{Trajectory stability vs.\ estimation error under synchronized coupling. Each panel corresponds to $\epsilon\in\{0.5,1,2\}$ (with $\delta=10^{-5}$), reporting the mean over $R=30$ runs with a shaded uncertainty band.}
  \label{fig:exp1}
\end{figure}

\subsubsection{Experiment II: Sample-Size Scaling}
\label{app:exp2_n_scaling}

This experiment investigates how prediction-set quality evolves with the sample size $n$ under fixed privacy budgets.

We generate synthetic multi-class classification data with $K=5$ classes and $d=10$ informative features using \texttt{sklearn.make\_classification}. To create a clean yet non-trivial task, we set $\texttt{class\_sep}=0.6$ and $\texttt{flip\_y}=0.01$. We vary the total sample size $n\in\{10000,15000,20000,25000,30000\}$ and consider privacy budgets $\epsilon\in\{0.5,1.0,2.0\}$ with fixed $\delta=10^{-5}$. For each $(n,\epsilon)$, we draw $n$ examples from a fixed pool, with an independent test set of size $2000$, train a two-layer MLP with widths $(16,16)$ and ReLU activations via DP-SGD, and evaluate DP-Split, DP-SCP-F, and DP-SCP-A. We use batch size $32$, $50$ epochs, learning rate $10^{-3}$, and max grad norm $1$. Within each trial, features are standardized using statistics computed from the selected training split, and the same transformation is applied to calibration and test. For DP-SCP methods, we use a training privacy allocation $p=1/2$. We report Coverage, Efficiency (average set size), and Informativeness (singleton rate) as the mean and standard deviation over $30$ independent trials. For coverage, each trial computes the empirical coverage over the full test set.

\begin{table}[htp]
\centering
\caption{Performance comparison for privacy budget $\varepsilon = 0.5$. We report the mean and standard deviation over $30$ independent trials.}
\label{tab:results_eps_0.5}
\begin{tabular}{llccc}
\toprule
Sample Size ($n$) & Method & Coverage & Efficiency (Set Size) & Informativeness \\
\midrule
\multirow{3}{*}{10000}
 & DP-SCP-F & 0.927 (0.009) & 3.243 (0.126) & 0.114 (0.019) \\
 & DP-SCP-A & 0.902 (0.007) & \textbf{2.976} (0.123) & \textbf{0.149} (0.021) \\
 & DP-Split & 0.904 (0.011) & 3.339 (0.139) & 0.060 (0.024) \\
\addlinespace
\multirow{3}{*}{15000}
 & DP-SCP-F & 0.920 (0.006) & 2.824 (0.116) & 0.186 (0.023) \\
 & DP-SCP-A & 0.905 (0.006) & \textbf{2.662} (0.105) & \textbf{0.214} (0.023) \\
 & DP-Split & 0.908 (0.007) & 3.064 (0.124) & 0.120 (0.020) \\
\addlinespace
\multirow{3}{*}{20000}
 & DP-SCP-F & 0.918 (0.007) & 2.586 (0.076) & 0.233 (0.021) \\
 & DP-SCP-A & 0.906 (0.007) & \textbf{2.470} (0.063) & \textbf{0.258} (0.020) \\
 & DP-Split & 0.907 (0.006) & 2.857 (0.111) & 0.161 (0.020) \\
\addlinespace
\multirow{3}{*}{25000}
 & DP-SCP-F & 0.914 (0.006) & 2.393 (0.071) & 0.279 (0.018) \\
 & DP-SCP-A & 0.904 (0.006) & \textbf{2.298} (0.073) & \textbf{0.302} (0.021) \\
 & DP-Split & 0.908 (0.006) & 2.729 (0.115) & 0.190 (0.024) \\
\addlinespace
\multirow{3}{*}{30000}
 & DP-SCP-F & 0.913 (0.005) & 2.279 (0.073) & 0.311 (0.022) \\
 & DP-SCP-A & 0.905 (0.005) & \textbf{2.202} (0.071) & \textbf{0.332} (0.024) \\
 & DP-Split & 0.907 (0.006) & 2.537 (0.107) & 0.235 (0.023) \\
\bottomrule
\end{tabular}
\end{table}

\begin{table}[H]
\centering
\caption{Performance comparison for privacy budget $\varepsilon = 1.0$. We report the mean and standard deviation over $30$ independent trials.}
\label{tab:results_eps_1.0}
\begin{tabular}{llccc}
\toprule
Sample Size ($n$) & Method & Coverage & Efficiency (Set Size) & Informativeness \\
\midrule
\multirow{3}{*}{10000}
 & DP-SCP-F & 0.917 (0.006) & 2.958 (0.122) & 0.148 (0.019) \\
 & DP-SCP-A & 0.905 (0.005) & \textbf{2.828} (0.125) & \textbf{0.168} (0.020) \\
 & DP-Split & 0.904 (0.007) & 3.288 (0.107) & 0.063 (0.021) \\
\addlinespace
\multirow{3}{*}{15000}
 & DP-SCP-F & 0.913 (0.006) & 2.599 (0.104) & 0.222 (0.023) \\
 & DP-SCP-A & 0.905 (0.005) & \textbf{2.519} (0.103) & \textbf{0.238} (0.024) \\
 & DP-Split & 0.908 (0.005) & 3.016 (0.104) & 0.123 (0.023) \\
\addlinespace
\multirow{3}{*}{20000}
 & DP-SCP-F & 0.912 (0.006) & 2.395 (0.071) & 0.273 (0.022) \\
 & DP-SCP-A & 0.905 (0.006) & \textbf{2.336} (0.065) & \textbf{0.286} (0.022) \\
 & DP-Split & 0.907 (0.005) & 2.813 (0.106) & 0.168 (0.021) \\
\addlinespace
\multirow{3}{*}{25000}
 & DP-SCP-F & 0.910 (0.006) & 2.240 (0.064) & 0.318 (0.020) \\
 & DP-SCP-A & 0.904 (0.006) & \textbf{2.192} (0.063) & \textbf{0.331} (0.020) \\
 & DP-Split & 0.908 (0.005) & 2.681 (0.118) & 0.197 (0.024) \\
\addlinespace
\multirow{3}{*}{30000}
 & DP-SCP-F & 0.908 (0.005) & 2.129 (0.053) & 0.353 (0.019) \\
 & DP-SCP-A & 0.904 (0.005) & \textbf{2.091} (0.053) & \textbf{0.364} (0.019) \\
 & DP-Split & 0.907 (0.005) & 2.496 (0.086) & 0.239 (0.022) \\
\bottomrule
\end{tabular}
\end{table}

\begin{table}[H]
\centering
\caption{Performance comparison for privacy budget $\varepsilon = 2.0$. We report the mean and standard deviation over $30$ independent trials.}
\label{tab:results_eps_2.0}
\begin{tabular}{llccc}
\toprule
Sample Size ($n$) & Method & Coverage & Efficiency (Set Size) & Informativeness \\
\midrule
\multirow{3}{*}{10000}
 & DP-SCP-F & 0.912 (0.006) & 2.854 (0.111) & 0.161 (0.023) \\
 & DP-SCP-A & 0.905 (0.006) & \textbf{2.778} (0.104) & \textbf{0.174} (0.022) \\
 & DP-Split & 0.904 (0.006) & 3.266 (0.096) & 0.066 (0.020) \\
\addlinespace
\multirow{3}{*}{15000}
 & DP-SCP-F & 0.909 (0.006) & 2.517 (0.086) & 0.235 (0.022) \\
 & DP-SCP-A & 0.904 (0.007) & \textbf{2.469} (0.083) & \textbf{0.246} (0.022) \\
 & DP-Split & 0.909 (0.005) & 3.014 (0.096) & 0.122 (0.022) \\
\addlinespace
\multirow{3}{*}{20000}
 & DP-SCP-F & 0.908 (0.005) & 2.342 (0.072) & 0.284 (0.023) \\
 & DP-SCP-A & 0.905 (0.006) & \textbf{2.310} (0.071) & \textbf{0.292} (0.023) \\
 & DP-Split & 0.908 (0.005) & 2.799 (0.101) & 0.170 (0.021) \\
\addlinespace
\multirow{3}{*}{25000}
 & DP-SCP-F & 0.907 (0.006) & 2.192 (0.058) & 0.332 (0.019) \\
 & DP-SCP-A & 0.904 (0.006) & \textbf{2.164} (0.057) & \textbf{0.340} (0.019) \\
 & DP-Split & 0.908 (0.005) & 2.666 (0.104) & 0.199 (0.023) \\
\addlinespace
\multirow{3}{*}{30000}
 & DP-SCP-F & 0.907 (0.005) & 2.093 (0.057) & 0.365 (0.022) \\
 & DP-SCP-A & 0.904 (0.005) & \textbf{2.071} (0.056) & \textbf{0.371} (0.021) \\
 & DP-Split & 0.906 (0.006) & 2.485 (0.075) & 0.242 (0.020) \\
\bottomrule
\end{tabular}
\end{table}

The results in Tables~\ref{tab:results_eps_0.5}, \ref{tab:results_eps_1.0}, and \ref{tab:results_eps_2.0} corroborate the intended efficiency trade-off. We summarize the main observations through four aspects, namely statistical validity, sample-size scaling, privacy--utility trends across $\varepsilon$, and the finite--asymptotic gap.

Across all privacy regimes and sample sizes, all methods remain close to the nominal level $1-\alpha=0.9$. DP-SCP-F is consistently the most conservative, which aligns with the additional stability correction in the buffered right-endpoint search. For instance, at $\varepsilon=0.5$ and $n=10{,}000$, DP-SCP-F attains coverage $0.927\,(0.009)$, whereas DP-SCP-A and DP-Split yield $0.902\,(0.007)$ and $0.904\,(0.011)$, respectively. As $n$ increases, this conservatism weakens. At $\varepsilon=2.0$ and $n=30{,}000$, the three coverage values tighten to $0.907\,(0.005)$, $0.904\,(0.005)$, and $0.906\,(0.006)$.

The full-data advantage becomes increasingly clear as $n$ grows. All methods improve with larger sample size, with efficiency increasing through smaller prediction sets and informativeness increasing through higher singleton rates. The gain is steeper for DP-SCP than for DP-Split, which is consistent with avoiding the $n/2$ training bottleneck. At $\varepsilon=2.0$ and $n=30{,}000$, DP-SCP-A attains average set size $2.071\,(0.056)$ and singleton rate $0.371\,(0.021)$, compared with $2.485\,(0.075)$ and $0.242\,(0.020)$ for DP-Split. The same pattern is already visible under stronger privacy. At $\varepsilon=0.5$ and $n=30{,}000$, DP-SCP-A reduces the average set size from $2.537\,(0.107)$ to $2.202\,(0.071)$ and raises the singleton rate from $0.235\,(0.023)$ to $0.332\,(0.024)$.

Holding $n$ fixed, larger $\varepsilon$ generally improves utility, as privacy noise weakens in both training and calibration. For example, at $n=30{,}000$, DP-SCP-A improves from average set size $2.202\,(0.071)$ and singleton rate $0.332\,(0.024)$ at $\varepsilon=0.5$ to $2.071\,(0.056)$ and $0.371\,(0.021)$ at $\varepsilon=2.0$. DP-Split shows the same qualitative trend, but remains consistently less informative than DP-SCP at every $(n,\varepsilon)$ combination. Even in the high-noise regime $\varepsilon=0.5$, the benefit of full-data training remains substantial. At $n=10{,}000$, DP-SCP-A achieves average set size $2.976\,(0.123)$ compared with $3.339\,(0.139)$ for DP-Split, while the singleton rate increases from $0.060\,(0.024)$ to $0.149\,(0.021)$.

DP-SCP-F remains systematically more conservative than DP-SCP-A, as reflected in both higher coverage and larger prediction sets. This gap is largest at small $n$ and under stronger privacy. At $\varepsilon=0.5$ and $n=10{,}000$, the average set sizes are $3.243\,(0.126)$ for DP-SCP-F and $2.976\,(0.123)$ for DP-SCP-A, while the corresponding coverages are $0.927\,(0.009)$ and $0.902\,(0.007)$. As either $n$ or $\varepsilon$ increases, the gap shrinks. At $\varepsilon=2.0$ and $n=30{,}000$, the average set sizes become $2.093\,(0.057)$ and $2.071\,(0.056)$, and the singleton rates are $0.365\,(0.022)$ and $0.371\,(0.021)$, respectively. This supports the intended interpretation of DP-SCP-F as a conservative finite-sample safeguard and DP-SCP-A as a sharper asymptotic alternative whose behavior moves closer to DP-SCP-F in larger-sample regimes.

\end{document}